\documentclass{siamonline1116}

\usepackage{amssymb}
\usepackage{amsmath}
\usepackage{graphicx}
\usepackage{epstopdf}
\usepackage{algorithmic}
\ifpdf
  \DeclareGraphicsExtensions{.png,.pdf,.jpg,.eps}
\else
  \DeclareGraphicsExtensions{.eps}
\fi
\graphicspath{{figs/}}


\newcommand{\RR}{\mathbb{R}}

\providecommand{\abs}[1]{\lvert#1\rvert}
\let\Re\relax
\let\Im\relax
\DeclareMathOperator{\Re}{Re}
\DeclareMathOperator{\Im}{Im}

\theoremstyle{definition}
\newtheorem{remark}{Remark}


\numberwithin{theorem}{section}

\newcommand{\TheTitle}{Practical Distance Functions for Path-Planning in Planar Domains}
\newcommand{\TheAuthors}{R. Chen, C. Gotsman, and K. Hormann}

\headers{\TheTitle}{\TheAuthors}

\title{{\TheTitle}}

\author{
  Renjie Chen\thanks{Max-Planck Institute, Saarbr\"ucken, Germany
    (\email{renjie.chen@mpi-inf.mpg.de}).}
  \and
  Craig Gotsman\thanks{New Jersey Institute of Technology, Newark, NJ
    (\email{gotsman@njit.edu}).}
  \and
  Kai Hormann\thanks{Universit\`a della Svizzera italiana, Lugano, Switzerland
    (\email{kai.hormann@usi.ch}).}
}


\begin{document}

\maketitle

\begin{abstract}

Path planning is an important problem in robotics. One way to plan a path between two points $x,y$ within a (not necessarily simply-connected) planar domain $\Omega$, is to define a non-negative distance function $d(x,y)$ on $\Omega\times\Omega$ such that following the (descending) gradient of this distance function traces such a path. This presents two equally important challenges: A mathematical challenge -- to define $d$ such that $d(x,y)$ has a single minimum for any fixed $y$ (and this is when $x=y$), since a local minimum is in effect a ``dead end''; A computational challenge -- to define $d$ such that it may be computed efficiently. In this paper, given a description of $\Omega$, we show how to assign coordinates to each point of $\Omega$ and define a family of distance functions between points using these coordinates, such that both the mathematical and the computational challenges are met. This is done using the concepts of \emph{harmonic measure} and \emph{$f$-divergences}.

In practice, path planning is done on a discrete network defined on a finite set of \emph{sites} sampled from $\Omega$, so any method that works well on the continuous domain must be adapted so that it still works well on the discrete domain. Given a set of sites sampled from $\Omega$, we show how to define a network connecting these sites such that a \emph{greedy routing} algorithm (which is the discrete equivalent of continuous gradient descent) based on the distance function mentioned above is guaranteed to generate a path in the network between any two such sites. In many cases, this network is close to a (desirable) planar graph, especially if the set of sites is dense.
\end{abstract}

\begin{keywords}
  path planning, greedy routing, divergence distance, harmonic measure
\end{keywords}

\begin{AMS}
  31A15, 68T40
\end{AMS}


\section{Introduction}

Path planning in a planar domain containing obstacles is an important problem in robotic navigation. The objective is for an autonomous agent to move from one point (the source) in the domain to another (the target) along a realistic path which avoids the obstacles, where the path is determined automatically and efficiently based only on knowledge of the domain and local information related to the current position of the agent. This important problem has attracted much attention in the robotics community and is the topic of ongoing research. A well-known family of path planning algorithms, inspired by the physics of electrical force fields, is based on \emph{potential} functions. These were first proposed by Khatib~\cite{Khatib:1986:RTO} and developed by Kim and Khosla~\cite{Kim:1992:RTO}, Rimon and Koditschek~\cite{Rimon:1992:ERN}, and Connolly and Grupen~\cite{Conolly:1993:TAO} soon after. The main idea is, given the target point, to construct a scalar function on the domain, such that a path to the target point from any other source point may be obtained by following the negative gradient of the function. While elegant, Koren and Borenstein~\cite{Koren:1991:PFM} have identified a number of significant pitfalls that these methods may encounter, the most important being the so-called ``trap'' situations -- the presence of local minima in the potential function. To avoid this, the scalar function must have a global minimum (typically zero-valued) at the target, and be void of local minima elsewhere in the domain. The presence of ``spurious'' local minima could be fatal, since the gradient vanishes and the robot will be ``stuck'' there. Other critical points, such as saddles, are undesirable but not fatal, since a negative gradient can still be detected by ``probing'' around the point.

Designing and computing potential functions for planar domains containing obstacles has been a topic of intense activity for decades. Perhaps the most elegant type of potential function is the harmonic function~\cite{Axler:2001:HFT}, which has very useful mathematical properties, most notably the guaranteed absence of spurious local minima. Alas, the main problems preventing widespread use of these types of potential functions are the high complexity of computing the function, essentially the solution of a very large system of linear equations on a discretization of the domain every time the target point is changed, and the fact that very high precision numerical methods are required, as the functions are almost constant, especially in regions distant from the target. A recent paper of Chen et al.~\cite{Chen:2016:PPW} addresses the first of these issues. They describe a new family of distance functions, which, while quite distinct from the harmonic potential function, generate exactly the same gradient-descent paths. However, they do this at a tiny fraction of the computational cost.

Chen et al.~\cite{Chen:2016:PPW} use the concept of harmonic measure~\cite{Garnett:2005:HM} and its conformal invariance to define a family of ``shape-aware'' distance functions $d_f\colon\Omega\times\Omega\to[0,\infty)$ on a bounded simply-connected planar domain $\Omega$. A function in this family is based on any real strictly convex function $f$ and has the key property that for any $z\in\Omega$,
\begin{equation}\label{eq:grad_df=0}
  \nabla_z d_f(z,y) = 0
  \qquad\text{iff}\qquad
  z = y,
\end{equation}
which means that a continuous path from $z$ to $y$ may be planned by simply following (the negative of) this gradient vector. The distance function is defined as the boundary integral
\begin{equation}\label{eq:df}
  d_f(z,y) = \oint_{\partial\Omega} P(z,t) f\biggl(\frac{P(y,t)}{P(z,t)}\biggr) dt
\end{equation}
where $P(z,t)$ is the \emph{Poisson kernel} of $\Omega$ at $z$, namely, the normal derivative at the boundary of $\Omega$'s \emph{Green's function} for the \emph{Laplace equation}~\cite{Axler:2001:HFT}. The distance $d_f$ is called the \emph{$f$-divergence distance}, as it is based on the $f$-divergence~\cite{Csiszar:1967:ITM,Kullback:1951:OIA} of the two functions $P(z,t)$ and $P(y,t),$. $f$-divergence is commonly used in statistics to measure the distance between two probability distributions. Although there are many choices for $f$ having different desirable properties, we mention the two special cases $f(x)=-\log x$ and $f(x)=2(1-\sqrt{x})$, which are called the Kullback--Leibler (KL) and the Hellinger (H) divergences, respectively. Although $f$-divergence is in general not a symmetric function, it can be shown that $f$-divergence distances are symmetric, namely $d_f(z,y) = d_f(y,z)$. However, $f$-divergence distance is in general not a metric, because it does not satisfy the triangle inequality. Chen et al.~\cite{Chen:2016:PPW} prove that distance functions of the type~\eqref{eq:df} indeed have property~\eqref{eq:grad_df=0} if $\Omega$ is a simply-connected domain, and, most interestingly, the path generated is \emph{invariant} to $f$. This is because only the \emph{magnitude} of $\nabla d_f$ depends on $f$, \emph{but not its direction.} Furthermore, the path generated is identical to that generated using the classical potential function method (a close relative of the Green's function of the domain). Figure~\ref{fig:grad-descent} shows the paths generated by the gradient-descent path planner in two different simply-connected domains. In the simple disk domain, it may be shown that the paths generated are always circular arcs (so-called \emph{hyperbolic geodesics}). The right image in Figure~\ref{fig:grad-descent} corresponds to using the na\"{i}ve ``Euclidean'' distance function $d_2(z,y)=\oint_{\partial\Omega} \bigl(P(z,t)-P(y,t)\bigr)^2 dt$, which results in the distance function having local minima, which, in turn, results in the path-planner being attracted to these points and getting ``stuck'' at them.

\begin{figure}
  \includegraphics[height=2in]{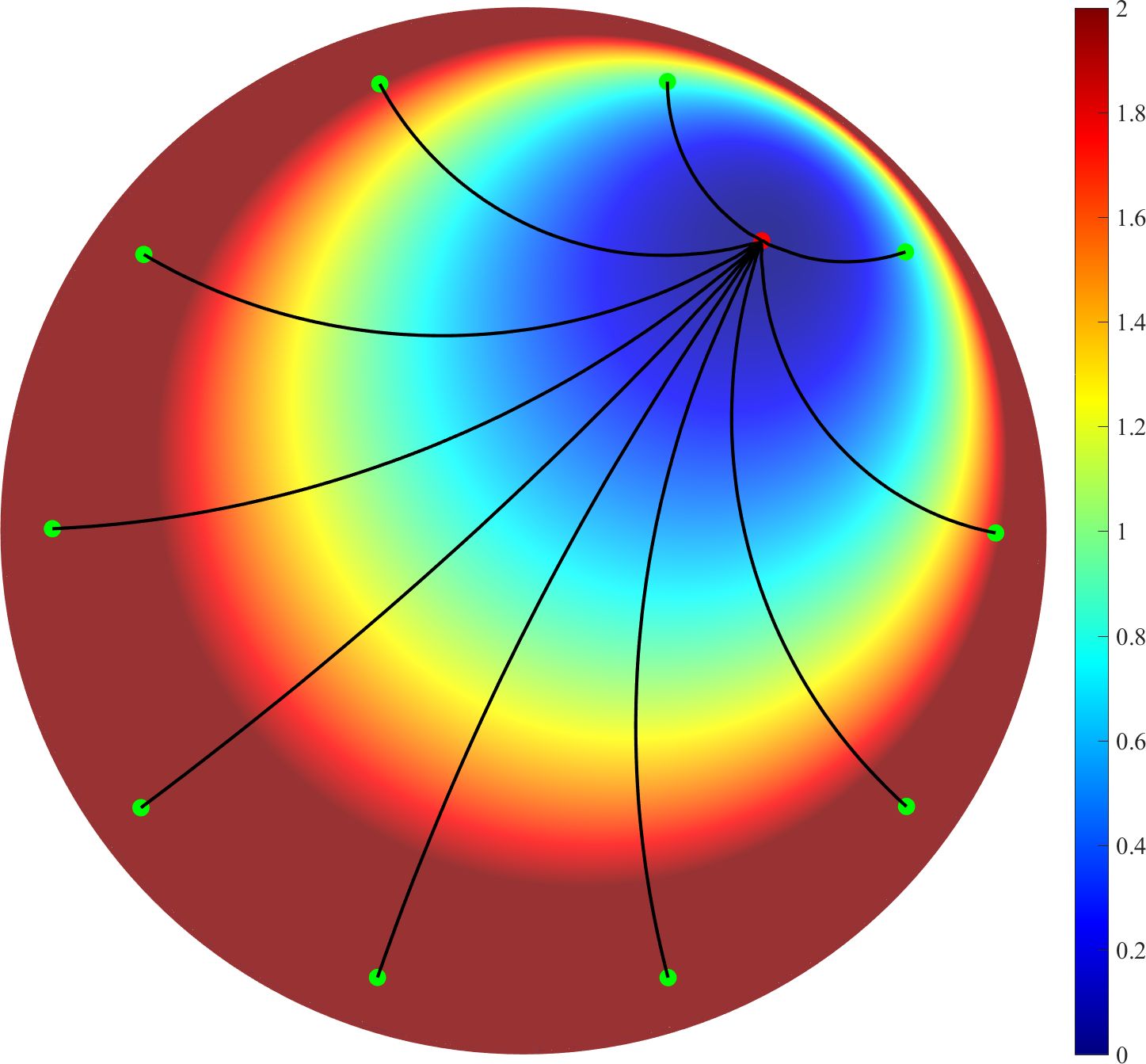}\hfill
  \includegraphics[height=2in]{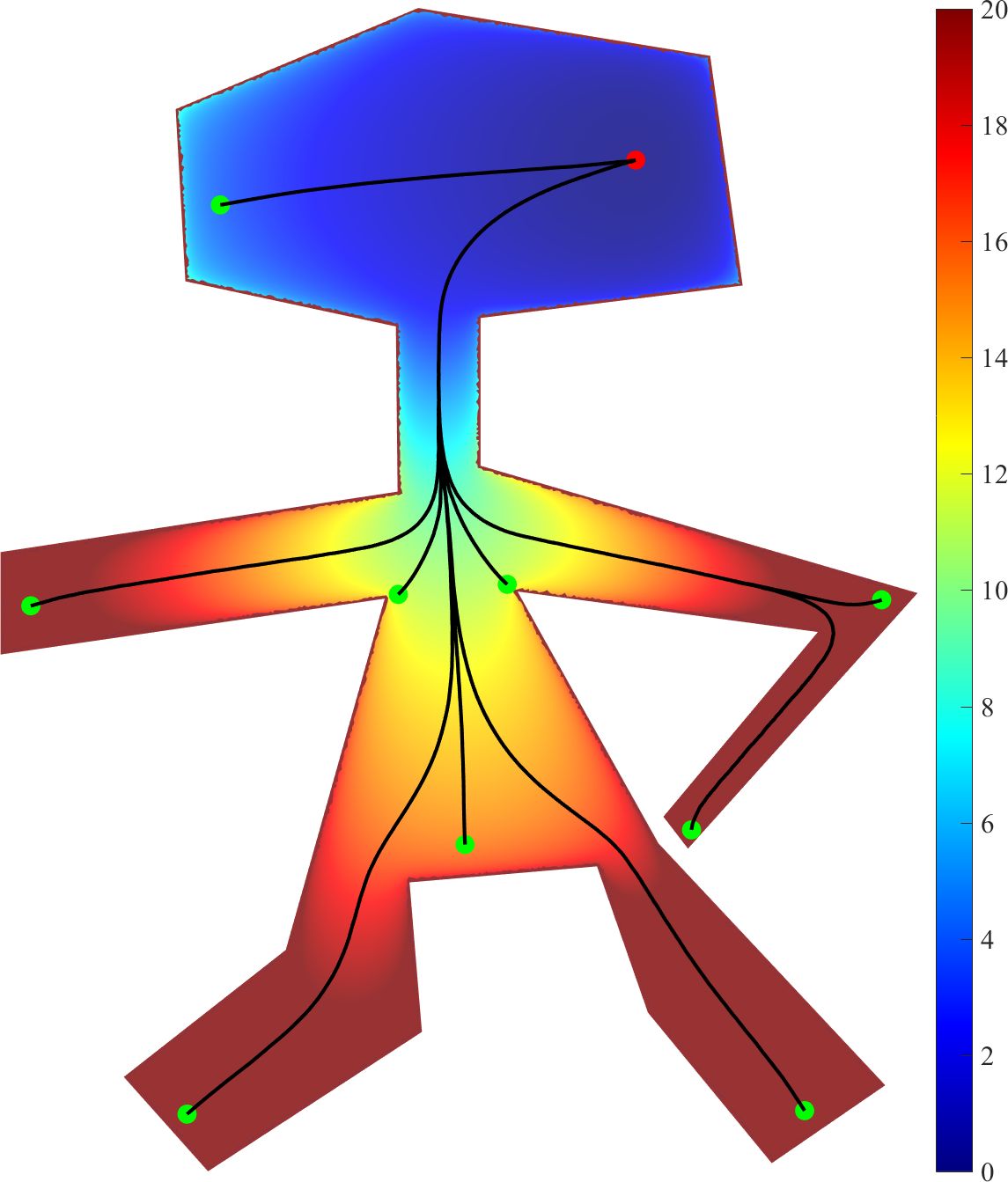}\hfill
  \includegraphics[height=2in]{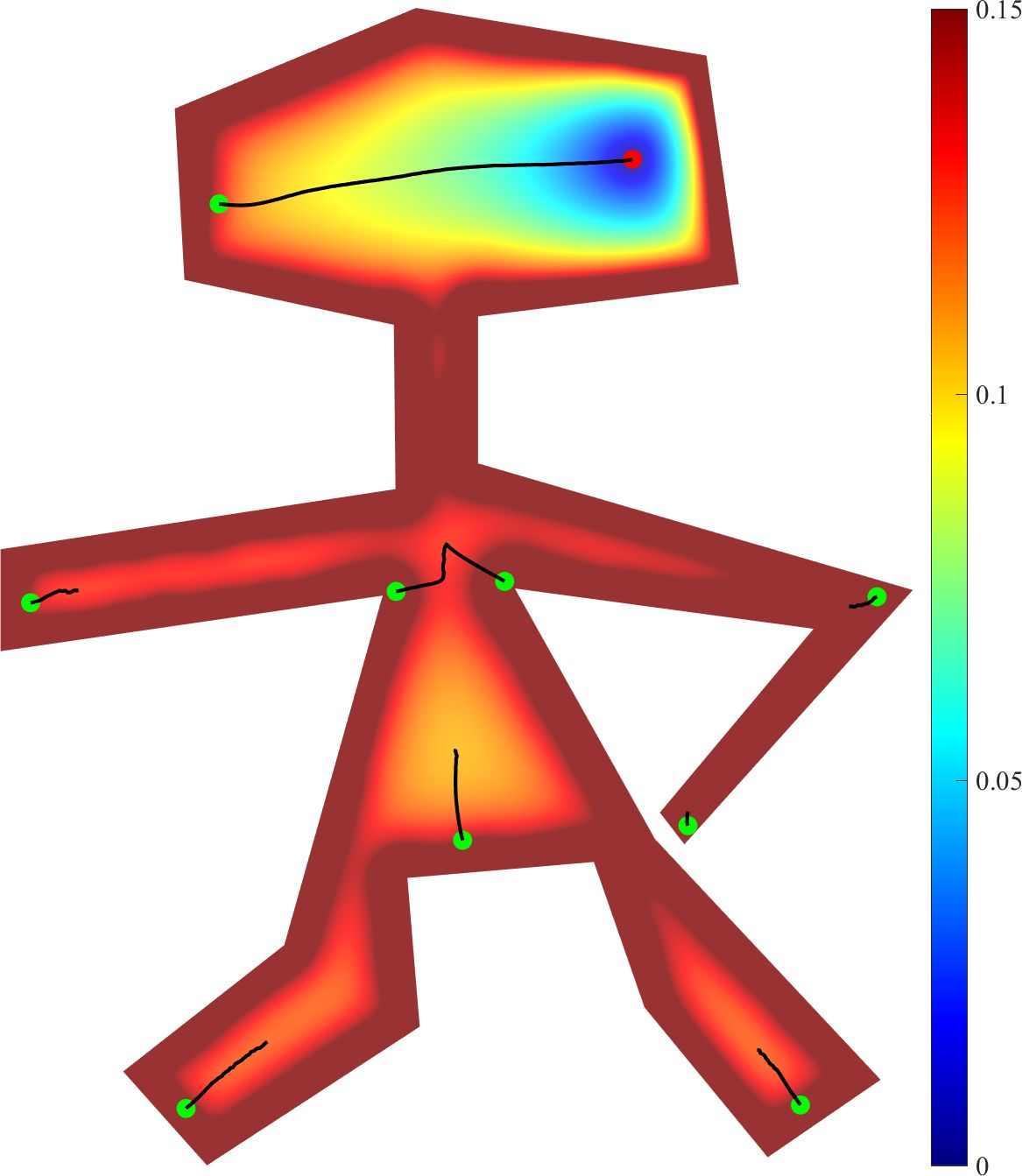}\par
  \caption{Gradient-descent paths generated by an $f$-divergence distance function on two domains. The red point is the target and the green points are different sources. The domains are color-coded by the $f$-divergence distance from the target. \emph{Left and center:} Using the strictly convex function $f(x)=-\log x$. \emph{Right:} Using a ``Euclidean'' distance function $d_2$. Note how, in the latter case, the gradient descent gets ``stuck'' at local minima of the distance function, so a gradient-descent path cannot be followed. Both domains were discretized by a triangulation with $k=2\times10^5$ points.}
  \label{fig:grad-descent}
\end{figure}

In practice, the path-planner discretizes the domain $\Omega$ -- typically into a triangulation of $k$ points -- where $k$ can be on the order of hundreds of thousands, and all computations are done on the points of this triangulation. Given a source point $x\in\Omega$ and a target point $y\in\Omega$, path planning using a potential function requires solving a large $k\times k$ system of (sparse) linear equations (dependent on $y$) -- essentially a Finite Element Method (FEM) applied to the continuous Laplace equation -- and then following the negative gradient of this scalar function along the edges of the triangulation. The advantage of using the distance function~\eqref{eq:df} instead is that it requires only preprocessing the domain once in advance -- to compute the $P(z,t)$ functions -- and then, given $x$ and $y$, following the negative gradient (by $z$) of $d_f(z,y)$. According to~\eqref{eq:df}, each computation of $d_f$ involves computing a boundary integral using $P(z,t)$. In practice, the integral is replaced with a sum, namely $P(z,t)$ is discretized to a ``coordinate vector'' -- one value for each boundary point of the triangulation. These vectors typically have length $O(\sqrt{k})$. Changing $y$ does not incur any extra computation.

While a cost of $O(\sqrt{k})$ per distance computation~\eqref{eq:df} does not seem too expensive, in practice it may still be too much for real-time performance. Furthermore, the coordinate vectors generated by the preprocessing procedure must be stored for each of the $k$ discrete points of the FEM mesh, implying an $O(k^{1.5})$ storage requirement, which could be prohibitive. This paper addresses these two issues.


\section{Contribution}

This paper makes two main contributions: The first contribution shows how to make gradient-descent path-planning more practical. Instead of assigning a ``continuous'' (namely, a very long $O(\sqrt{k})$ discrete) coordinate vector to each point of $\Omega$, we show in Section \ref{section:reduced} how to ``reduce'' this to a very small number of \emph{reduced coordinates} $n\ll O(\sqrt{k})$, without losing the key property~\eqref{eq:grad_df=0}. This small number of coordinates reduces the computation complexity of computing $d_f$ from $O(\sqrt{k})$ to $O(n)$. The small price paid in reducing the size of the coordinate vector is that the gradient-descent paths generated by the planner may not be as natural as before, and we lose the properties of symmetry and gradient direction independence on $f$.

The second contribution shows how to use reduced $f$-divergence distances in a purely discrete setting. While reducing the size of the coordinate vector from $O(\sqrt{k})$ to $n$ makes for an efficient computation of the distance function $d_f$, generation of a ``continuous'' path requires the use of the dense ``underlying mesh'' (essentially that used for the FEM computation), implying storage requirement of $O(kn)$. In practice it would be much more efficient to plan a path on a sparse network of $m$ points sampled in $\Omega$. This requires building a suitable network of edges between the points, one that supports \emph{greedy routing}: if $V$ is the set of points and $N(z)$ is the set of points of $V$  connected to $z$ by a network edge, then
\begin{equation}\label{eq:neighbours}
  \forall z \neq y \in V,\quad \exists u \in N(z) : d_f(u,y) < d_f(z,y).
\end{equation}
In other words, there is always a neighbor of $z$ which is closer to the target $y$ than $z$ is. This is the discrete analog to~\eqref{eq:grad_df=0}. In Section \ref{section:discrete} we describe an algorithm to build this graph. To illustrate, a typical domain, such as those used in the figures of this paper, requires an underlying mesh containing $k=2\times10^5$ points. A typical boundary size would be $700$ points. Thus, using the algorithm of Chen et al.~\cite{Chen:2016:PPW} would require storing $1.4\times10^8$ real values, and every computation of $d_f$ at each path point would require a loop of $700$ iterations. In contrast, using 30 reduced coordinates on a domain sampled to 300 sites would require storing only 9,000 real values and each computation of $d_f$ would require a loop of just 30 iterations.


\section{The $f$-divergence Distance}

A fundamental concept used in our solution is the $f$-divergence function, first introduced by Kullback and Leibler \cite{Kullback:1951:OIA} and later generalized by Csisz{\'a}r~\cite{Csiszar:1967:ITM}, for measuring the difference between two probability distributions:

\begin{definition}[$f$-divergence]
Let $f$ be a strictly convex function such that $f(1)=0$ and $p,q\colon E\to[0,1]$ be two real functions on some domain $E$ such that $\int_E p(t)dt=\int_E q(t)dt = 1$. The $f$-divergence between $p$ and $q$ is
\[
  d_f(p,q) = \int_E p(t) f \biggl( \frac{q(t)}{p(t)} \biggr) dt.
\]
\end{definition}

It is well-known that
\[
  d_f(p,q) \ge 0
\]
and
\[
  d_f(p,q)=0 
  \qquad\text{iff}\qquad 
  p=q,
\]
but $d_f$ is not necessarily a metric.
Many instances of $f$ have been proposed over the years, each suitable for some specific application, mostly in probability theory, statistics and information theory. The interested reader is referred to~\cite{Liese:2006:ODA} for a survey of the possibilities.

The concept of the dual function
\[
  f^*(x) = x f \Bigl( \frac{1}{x} \Bigr)
\]
is also noteworthy. For example, if $f(x)=-\log x$, then $f^*(x)=x\log x$, and if $f(x)=\abs{1-x}$, then $f^*(x)=f(x)$. It is well known that
\begin{enumerate}
  \item  $f$ is (strictly) convex iff $f^*$ is (strictly) convex;
  \item  $d_f(p,q)=d_{f^*}(q,p)$;
  \item  $f(1)\le d_f(p,q)\le f(0)+f^*(0)$.
\end{enumerate}

With slight abuse of notation, the \emph{$f$-divergence distance} between two points in a planar domain $\Omega$ is defined using the Poisson kernel of $\Omega$:

\begin{definition}[$f$-divergence distance]
Let $f$ be a strictly convex function such that $f(1)=0$ and $y,z\in \Omega$. The $f$-divergence distance between $y$ and $z$ is
\begin{equation}\label{eq:divdist}
  d_f(z,y) = \oint_{\partial\Omega} P(z,t) f\biggl(\frac{P(y,t)}{P(z,t)}\biggr) dt
\end{equation}
where $P(z,t)$ is the \emph{Poisson kernel} of $\Omega$ at $z$, 
\end{definition}

Although the $f$-divergence of two probability functions is not neccesarily symmetric, the special nature of the $f$-divergence distance implies that it is symmetric. In fact, $f$ and $f^*$ generate \emph{identical} divergence distances:
\[
  d_f(y,z)=d_{f^*}(y,z)=d_f(z,y)
\]
Although symmetric, the $f$-divergence distance will, in general, not be a metric, since it may fail to satisfy the triangle inequality.

For path-planning purposes, the gradient of the $f$-divergence distance plays a key role. Chen et al.~\cite{Chen:2016:PPW} show that for simply-connected domains, the gradient field never vanishes, and its direction is independent of $f$:
\[
\nabla d_f(y,z)=0   ~~~\text{iff} ~~~y=z
\]
\[
    \forall \text{strictly convex} ~ f ~\text{and}~ g, \quad \arg(\nabla d_f(y,z))= \arg(\nabla d_g(y,z))
\]

These important properties of the gradient allow the $f$-divergence distance to be used for gradient-descent path-planning.

\section{Reduced Coordinates} \label{section:reduced}

As mentioned above, Chen et al.~\cite{Chen:2016:PPW} define the $f$-divergence distance between points $z, y$ in a planar domain $\Omega$ as the $f$-divergence of their Poisson kernels, which can be viewed as probability functions on the domain boundary $\partial\Omega$. They subsequently prove that (for simply-connected domains) the gradient of this distance function never vanishes except when $z=y$, implying a single minimum at that point. So the Poisson kernels can be viewed as a continuous ``coordinate vector'' for a planar point. We now describe the construction of the discrete \emph{reduced coordinates}, which is very simple: we aggregate the continuous Poisson kernel vector to a small number of (positive) values. These are the new coordinates. More formally, we partition the domain boundary $\partial\Omega$ into $n$ continuous segments,  $(E_1, \dots, E_n)$ defined by $(t_1,\dots,t_n)$, and $E_j=[t_j,t_{j+1}]$, where we identify $t_{n+1}$ cyclically with $t_1$. The $n$ \emph{reduced coordinates} $(\phi_1(z),\dots,\phi_n(z))$ of a point $z\in\Omega$ are defined as
\[
  \phi_j(z) = \int_{E_j} P(z,t) dt,
\]
The quantity $\phi_j(z)$ is called the \emph{harmonic measure} of $z$ relative to the $j$-th segment~\cite{Garnett:2005:HM}. The \emph{reduced} $f$-divergence distance is then (compare to~\eqref{eq:df})
\begin{equation}\label{eq:df-discrete}
  d_f(z,y) = \sum^n_{j=1} \phi_j(z) f \biggl( \frac{\phi_j(y)}{\phi_j(z)} \biggr).
\end{equation}


\section{The Divergence Gradient Theorem}

In this section we prove the central result of this paper:

\begin{theorem}[Divergence Gradient Theorem] \label{theorem:DGT}
For a simply-connected domain $\Omega$, strictly convex $f$ and $n>2$, the reduced $f$-divergence distance~\eqref{eq:df-discrete} has property~\eqref{eq:grad_df=0}.
\end{theorem}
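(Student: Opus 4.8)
The plan is to transplant everything to the unit disc. By the conformal invariance of harmonic measure we may fix, once and for all, a conformal map $\Omega\to\mathbb{D}$; then $\phi_j(z)$ becomes the harmonic measure of the boundary arc $A_j\subset\partial\mathbb{D}$ (the image of $E_j$), whose endpoints $\tau_1,\dots,\tau_n=e^{i\alpha_1},\dots,e^{i\alpha_n}$ are now fixed points of $\partial\mathbb{D}$, and $z,y$ are points of $\mathbb{D}$. Two facts will be used repeatedly: first, $(\phi_1(z),\dots,\phi_n(z))$ is a positive probability vector, since $P(z,\cdot)>0$ and $\oint_{\partial\Omega}P(z,t)\,dt=1$; second, $\phi_j=\Re\Phi_j$ for the holomorphic function $\Phi_j(z)=\frac1{2\pi}\int_{\alpha_j}^{\alpha_{j+1}}\frac{e^{it}+z}{e^{it}-z}\,dt$, which is well defined on $\mathbb{D}$, satisfies $\sum_j\Phi_j\equiv1$, and has the explicit derivative $\Phi_j'(z)=\frac{i}{\pi}\bigl(\frac1{z-\tau_j}-\frac1{z-\tau_{j+1}}\bigr)$.

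The implication $z=y\Rightarrow\nabla_z d_f(z,y)=0$ is easy: then every ratio $\phi_j(y)/\phi_j(z)$ equals $1$, so $d_f(z,y)=\sum_j\phi_j(z)f(1)=0$, and since the right-hand side of \eqref{eq:df-discrete} is the $f$-divergence of two probability vectors it is always nonnegative; hence $z=y$ is a global minimum of $z\mapsto d_f(z,y)$ and (taking $f\in C^1$, as for all divergences of practical interest) the gradient vanishes there. The content of the theorem is the converse: $\nabla_z d_f(z,y)\ne0$ for every $z\ne y$. Set $r_j=r_j(z)=\phi_j(y)/\phi_j(z)$ and $g(r)=f(r)-rf'(r)$; integration by parts gives $g(r_2)-g(r_1)=-\int_{r_1}^{r_2}s\,df'(s)<0$, so $g$ is strictly decreasing on $(0,\infty)$. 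Differentiating \eqref{eq:df-discrete}, and using that $\nabla u$ read as a complex number equals $\overline{F'}$ whenever $u=\Re F$ with $F$ holomorphic, yields
\[
  \nabla_z d_f(z,y)=\sum_{j=1}^n g\bigl(r_j(z)\bigr)\,\nabla_z\phi_j(z)=\overline{\,\sum_{j=1}^n g\bigl(r_j(z)\bigr)\,\Phi_j'(z)\,}.
\]
Freezing the real numbers $c_j:=g(r_j(z))$ at the point $z$ under consideration, this identity says: $\nabla_z d_f(z,y)=0$ iff $z$ is a critical point of the harmonic function $u=\sum_j c_j\phi_j$ on $\mathbb{D}$, whose boundary trace is the step function equal to $c_k$ on the open arc $A_k$.

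It therefore remains to show that, for $z\ne y$, this $u$ has no critical point in $\mathbb{D}$, and this rests on two ingredients. \emph{(i) The reduced coordinates separate points when $n>2$:} if $\phi_j(z)=\phi_j(y)$ for all $j$ then $z=y$. Indeed, mapping $\Omega\to\mathbb{D}$ with $z\mapsto0$ turns $\phi_j(z)$ into the normalized length of the image arc of $E_j$, and the disc automorphism sending the image of $y$ to $0$ then preserves all $n$ of these arc lengths; hence it acts on the $n\ge3$ distinct partition points of $\partial\mathbb{D}$ exactly as some rotation does, and a disc automorphism fixing three boundary points is the identity, so the automorphism is that rotation, which fixes $0$, forcing $z=y$. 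In particular, for $z\ne y$ the $r_j$ are not all equal — a common value $r$ would give $r=1$ on summing $\phi_j(y)=r\phi_j(z)$ over $j$ — so the $c_j$ are not all equal. \emph{(ii) The sequence $(r_1(z),\dots,r_n(z))$ is cyclically unimodal:} $\{j:r_j\le s\}$ is a cyclic interval of $\{1,\dots,n\}$ for every $s$. Mapping $\Omega\to\mathbb{D}$ with $z\mapsto0$ once more and writing $a\ne0$ for the image of $y$, one has $r_j=\frac1{|A_j|}\int_{A_j}K_a$, the mean over the arc $A_j$ of the Poisson kernel $K_a(e^{it})=(1-|a|^2)/|e^{it}-a|^2$. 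Now $K_a$ is unimodal on the circle (a single maximum at $e^{it}=a/|a|$, a single minimum at its antipode), and this property is inherited by the sequence of means over a cyclic partition into arcs: on the arcs where $K_a$ is monotone the means are comparable end to end, and the few arcs containing or meeting an extremum of $K_a$ can be checked not to introduce a second local maximum or minimum of the sequence.

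Given (i) and (ii), $(c_1,\dots,c_n)=(g(r_1),\dots,g(r_n))$ is a non-constant, cyclically unimodal sequence (a strictly monotone $g$ preserves unimodality, up to interchanging maxima and minima). Consequently the boundary step function of $u$ attains each level strictly between $\min_j c_j$ and $\max_j c_j$ exactly twice; it is classical that a nonconstant harmonic function on $\mathbb{D}$ with such piecewise-constant, cyclically unimodal boundary data has no critical point in $\mathbb{D}$ (a critical point, being a saddle by the maximum principle, would force at least four ends of its level set to reach $\partial\mathbb{D}$, contradicting that every intermediate level is crossed only twice; equivalently, $\sum_j c_j\Phi_j'(z)=\frac{i}{\pi}\sum_j\frac{c_j-c_{j-1}}{z-\tau_j}$ has numerator a nonzero polynomial of degree at most $n-2$ with no root in $\mathbb{D}$). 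Hence $\nabla_z d_f(z,y)\ne0$, as required. I expect ingredient (ii) to be the crux — specifically the lemma that passing from a unimodal function on the circle to its sequence of means over a cyclic partition into arcs creates no new local extrema; the remaining steps are a careful but essentially routine assembly of standard facts about conformal maps, harmonic measure, and critical points of harmonic functions.
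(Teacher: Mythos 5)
Your argument is essentially correct, but it is a genuinely different route from the paper's. The paper also reduces to the disk, but it normalizes the \emph{target} to the origin, rewrites $\nabla d_f$ via the antipode map $\psi(\theta)$ (Lemmas~\ref{lemma:1/vbar}--\ref{lemma:grad_phi_j}), proves that $\psi'(\theta)$ -- which is just $2\pi$ times the Poisson kernel at $z$ -- is strictly unimodal (Lemma~\ref{lemma:monotonic}), and then uses the \emph{mean value theorem} to replace each difference quotient $\frac{\psi_{j+1}-\psi_j}{\theta_{j+1}-\theta_j}$ by a pointwise value of that unimodal function; the conclusion comes from an explicit pairing/projection estimate showing one component of the gradient is strictly positive when $n>2$ (Theorem~\ref{theorem:DGTD}). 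You instead freeze the coefficients $c_j=f(r_j)-r_jf'(r_j)$ and recast the question as the absence of critical points of the harmonic extension of unimodal step data, supported by two ingredients: separation of points by the reduced coordinates for $n\ge3$ (via M\"obius rigidity -- a clean way to localize the hypothesis $n>2$, which the paper instead hides in the ``equality iff $k=l+1$'' case analysis), and cyclic unimodality of the arc-means of the Poisson kernel. Your version also has the merit of differentiating \eqref{eq:df-discrete} in its first argument directly (the paper's disk statement is for the variable inside $f$, i.e.\ the dual form), and of needing only $f\in C^1$ where the paper invokes $f''$.

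Two steps need tightening. First, the lemma you flag as the crux -- that averaging a unimodal circle function over a cyclic partition into arcs yields a cyclically unimodal sequence -- is true, and the missing check is short: on each monotone side of the kernel consecutive arc-means are strictly ordered, and the arc containing the maximum has mean at least the value of the kernel at whichever of its endpoints is lower, hence at least the mean of the neighboring arc on that side; so it cannot be a strict local minimum of the sequence (symmetrically for the arc containing the minimum), and a case check shows no second extremum can arise. Note that the paper's mean-value-theorem trick sidesteps this lemma entirely by sampling the unimodal function instead of averaging it -- you could do the same and shorten your proof. Second, your one-line justification of the ``classical'' fact (four ends of the level set versus each level crossed twice) is incomplete when the critical value coincides with one of the plateau values $c_j$, since then level curves may land along an entire boundary arc and the crossing count fails; the robust form of the argument is to show that for every level $s$ strictly between $\min_j c_j$ and $\max_j c_j$ the sets $\{u>s\}$ and $\{u<s\}$ are connected (their boundary traces are single arcs, by unimodality), and then run the standard Rad\'o--Kneser--Choquet-type topological argument at a putative saddle. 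The parenthetical claim about the polynomial $\sum_j(c_j-c_{j-1})/(z-\tau_j)$ having no zero in the disk is just a restatement of what is to be proved, not an independent justification.
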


The Divergence Gradient Theorem implies that we may use any reduced $f$-divergence to generate gradient-descent paths. Below, we prove the theorem by conformal reduction to the case of a domain which is the unit disk, and the target point the origin. But beforehand, a number of preliminaries are required.

\subsection{Some circle geometry}\label{sec:CircleGeometry}

\begin{figure}
  \hspace*{\fill}%
  \includegraphics[height=2.5in]{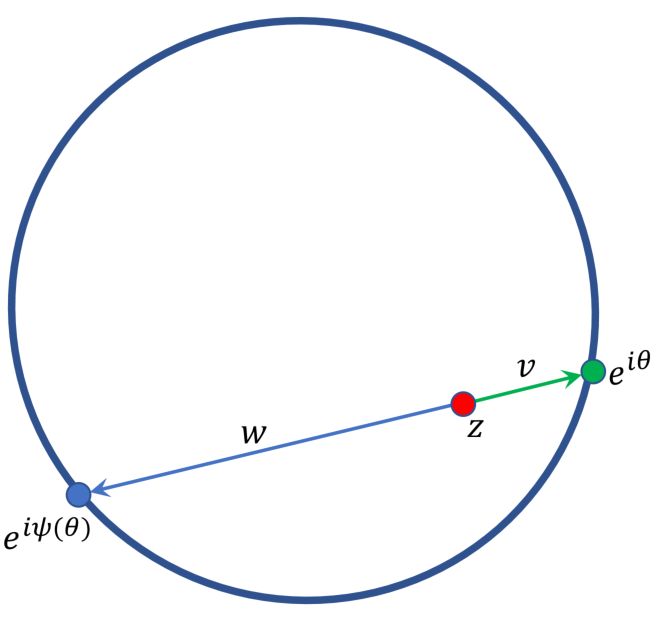}\hfill\hfill%
  \includegraphics[height=2.5in]{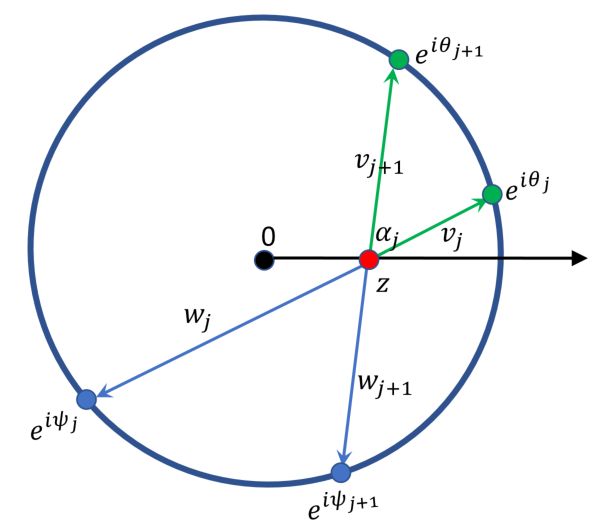}\hspace*{\fill}\par
  \caption{\emph{Left:} Notation used in Section~\ref{sec:CircleGeometry}. \emph{Right:} Notation used in Section~\ref{sec:ReducedCoordinates}.}
  \label{fig:notation}
\end{figure}

In all that follows, we use complex number algebra in the plane. As shown in Figure~\ref{fig:notation} (left), for a given $z$ in the unit disk, we denote by $\psi(\theta)\in(-\pi,\pi]$ the \emph{antipode} of $\theta\in(-\pi,\pi]$ relative to $z$, that is, $e^{i\psi(\theta)}$ is the intersection of the chord through $e^{i\theta}$ and $z$ with the unit circle. Also denote $v=e^{i\theta}-z$ and $w=e^{i\psi(\theta)}-z$.

\begin{lemma}\label{lemma:1/vbar}
For any $\theta\in(-\pi,\pi]$,
\[
  \frac{1}{\overline{v}} = -\frac{w}{1-\abs{z}^2}.
\]
\end{lemma}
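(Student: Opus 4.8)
The plan is to exploit the fact that the points $e^{i\theta}$, $z$, and $e^{i\psi(\theta)}$ are collinear, together with the defining relation $|e^{i\theta}|^2 = |e^{i\psi(\theta)}|^2 = 1$. First I would record the collinearity algebraically: since $e^{i\theta}-z = v$ and $e^{i\psi(\theta)}-z = w$ point along the same line through $z$ (in opposite directions along the chord), the quotient $w/v$ is a negative real number, so $w/v = \overline{w}/\overline{v}$ and hence $w\,\overline{v} = \overline{w}\,v$ is real. This symmetry is what will let me convert the claim $1/\overline{v} = -w/(1-|z|^2)$ into the equivalent statement $w\,\overline{v} = -(1-|z|^2) = |z|^2 - 1$, which is now a purely real identity to be verified.

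Next I would expand $w\,\overline{v}$ directly. Writing $a = e^{i\theta}$ and $b = e^{i\psi(\theta)}$ with $a\overline{a} = b\overline{b} = 1$, we have
\[
  w\,\overline{v} = (b-z)(\overline{a}-\overline{z}) = b\overline{a} - b\overline{z} - z\overline{a} + z\overline{z} = b\overline{a} - b\overline{z} - z\overline{a} + |z|^2.
\]
The goal is therefore to show $b\overline{a} - b\overline{z} - z\overline{a} = -1$, i.e. $b\overline{a} - b\overline{z} - z\overline{a} + 1 = 0$. Using $1 = a\overline{a}$ I can factor: $a\overline{a} + b\overline{a} - b\overline{z} - z\overline{a} = \overline{a}(a+b) - z(\overline{a} + b\overline{z}/z)$ — this particular grouping is not clean, so instead I would factor as $\overline{a}(a - z) + b(\overline{a} - \overline{z}) = \overline{a}\,v + b\,\overline{v}$... and here is where the collinearity must re-enter in a usable form. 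The cleaner route: from $w = \lambda v$ with $\lambda < 0$ real, substitute $b = z + \lambda v$ into $b\overline{b} = 1$ to get $|z|^2 + \lambda(z\overline{v} + \overline{z}v) + \lambda^2|v|^2 = 1$, and also use $|a|^2 = 1$, i.e. $|z+v|^2 = |z|^2 + z\overline{v} + \overline{z}v + |v|^2 = 1$, to eliminate $z\overline{v}+\overline{z}v = 1 - |z|^2 - |v|^2$. Then the equation for $\lambda$ becomes $|z|^2 + \lambda(1 - |z|^2 - |v|^2) + \lambda^2|v|^2 = 1$, a quadratic whose roots are $\lambda = 1$ (the point $a$ itself) and $\lambda = (|z|^2-1)/|v|^2$ (the antipode). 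Hence $\lambda = (|z|^2 - 1)/|v|^2 = -(1-|z|^2)/(v\overline{v})$, which gives $w = \lambda v = -(1-|z|^2)v/(v\overline{v}) = -(1-|z|^2)/\overline{v}$, i.e. exactly $1/\overline{v} = -w/(1-|z|^2)$.

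The main obstacle is bookkeeping rather than depth: I must be careful that the quadratic in $\lambda$ genuinely has $\lambda = 1$ as a root (which it does, corresponding to $b = a$) so that Vieta's formulas let me read off the other root as $(|z|^2-1)/|v|^2$ from the product of the roots being $(|z|^2 - 1)/|v|^2$. One should also note the degenerate case $v = 0$, i.e. $z = e^{i\theta}$, which is excluded since $z$ is interior to the disk, so $|v| > 0$ and all divisions are legitimate; and the case where the chord through $z$ and $e^{i\theta}$ is tangent cannot occur for an interior $z$. Once the quadratic identity is set up correctly the conclusion is immediate.
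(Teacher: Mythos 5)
Your argument is correct, and it reaches the paper's key identity by a different route. The paper's proof is two lines: it quotes the intersecting chords theorem to get $\abs{v}\,\abs{w}=1-\abs{z}^2$, and then uses collinearity (so that $\overline{v}w$ is a negative real) to fix the phase, giving $-\overline{v}w=1-\abs{z}^2$ and hence the claim. You reduce to the same identity $\overline{v}w=-(1-\abs{z}^2)$, but instead of citing the classical theorem you derive it from scratch: writing $w=\lambda v$ with $\lambda<0$ real and substituting $e^{i\psi(\theta)}=z+\lambda v$ into the unit-circle condition, you obtain the quadratic $\abs{v}^2\lambda^2+(1-\abs{z}^2-\abs{v}^2)\lambda+(\abs{z}^2-1)=0$, identify $\lambda=1$ as the root corresponding to $e^{i\theta}$ itself, and read off the antipodal root $\lambda=(\abs{z}^2-1)/\abs{v}^2$ by Vieta, which is exactly the power of the point $z$ with respect to the circle. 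Your computation checks out (including the identification of the antipode with the negative root, justified because an interior $z$ lies strictly between the two chord endpoints, and the exclusion of the degenerate case $v=0$). What your approach buys is self-containment -- it proves the intersecting-chords/power-of-a-point fact along the way and handles the sign automatically through the root $\lambda<0$ -- at the cost of being longer than the paper's direct geometric citation; your preliminary observation that $w\overline{v}$ is real is not actually needed once the quadratic route is taken.
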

\begin{proof}
By the intersecting chords theorem, $\abs{v}\abs{w}=1-\abs{z}^2$. Since $v$ and $w$ are collinear, this implies $-\overline{v}w=\abs{v}\abs{w}=1-\abs{z}^2$, from which the statement follows.
\end{proof}

\begin{lemma}\label{lemma:psi}
For any $\theta\in(-\pi,\pi]$,
\[
  \psi(\theta) = -i \log \Bigl( \frac{-v}{\overline{v}} e^{-i\theta} \Bigr).
\]
\end{lemma}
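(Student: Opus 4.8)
The plan is to prove the equivalent exponentiated identity
\[
  e^{i\psi(\theta)} = \frac{-v}{\overline{v}}\,e^{-i\theta},
\]
and then recover the stated formula by applying the principal logarithm, after checking that its branch lands on the chosen representative $\psi(\theta)\in(-\pi,\pi]$. This is the natural route because the right-hand side of the lemma is, up to the outer $-i\log$, exactly a point on the unit circle, and we already have a clean handle on such chord intersections from Lemma~\ref{lemma:1/vbar}.

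First I would use the definition $w = e^{i\psi(\theta)} - z$ to write $e^{i\psi(\theta)} = z + w$, and then substitute $w = -(1-\abs{z}^2)/\overline{v}$ from Lemma~\ref{lemma:1/vbar}, obtaining $e^{i\psi(\theta)} = z - (1-\abs{z}^2)/\overline{v}$. Writing $\overline{v} = e^{-i\theta} - \overline{z}$ and $1 - \abs{z}^2 = 1 - z\overline{z}$ and clearing the denominator, the numerator collapses to $z(e^{-i\theta} - \overline{z}) - (1 - z\overline{z}) = z\,e^{-i\theta} - 1$, so $e^{i\psi(\theta)} = (z\,e^{-i\theta} - 1)/(e^{-i\theta} - \overline{z})$. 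A parallel one-line simplification of the right-hand side of the target identity, $(-v/\overline{v})e^{-i\theta} = -(e^{i\theta} - z)e^{-i\theta}/(e^{-i\theta} - \overline{z}) = (z\,e^{-i\theta} - 1)/(e^{-i\theta} - \overline{z})$, shows the two sides coincide.

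It then remains to take logarithms. Since $\abs{v} = \abs{\overline{v}}$, the quantity $(-v/\overline{v})e^{-i\theta}$ has modulus one, so its principal logarithm is purely imaginary with imaginary part in $(-\pi,\pi]$; because $\psi(\theta)$ is also taken in $(-\pi,\pi]$ and $e^{i\psi(\theta)}$ equals that quantity, we conclude $\log\bigl((-v/\overline{v})\,e^{-i\theta}\bigr) = i\psi(\theta)$, which rearranges to the claim. The argument is entirely routine; the only point meriting a moment's care is the branch of the logarithm, and that is settled immediately by the observation that its argument has unit modulus together with the range convention on $\psi(\theta)$, so there is no genuine obstacle here.
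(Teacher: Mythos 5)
Your proposal is correct and follows essentially the same route as the paper: both use Lemma~\ref{lemma:1/vbar} to identify $e^{i\psi(\theta)}=w+z$ with $\frac{-v}{\overline{v}}e^{-i\theta}$ and then take the logarithm; you simply carry out the intermediate algebra (and the branch check, which the paper leaves implicit) more explicitly.
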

\begin{proof}
Using Lemma~\ref{lemma:1/vbar}, we know that $w+z=\frac{-v e^{-i\theta}}{\overline{v}}$, and by the definition of $\psi(\theta)$ as antipode of $\theta$, we then conclude $\psi(\theta)=-i\log(w+z)=-i\log\bigl(\frac{-ve^{-i\theta}}{\overline{v}}\bigr)$.
\end{proof}

Note that when $z=0$, we have $\psi(\theta)=\pi+\theta$, as expected.

\subsection{Reduced coordinates}\label{sec:ReducedCoordinates}

As shown in Figure~\ref{fig:notation} (right), given a partition of the unit circle $-\pi<\theta_1<\dots<\theta_n\le\pi$, we define the reduced coordinates of $z$ as $(\phi_1(z),\dots,\phi_n(z))$, where $\phi_j(z)$ is the harmonic measure of $z$ relative to the arc $(\theta_j,\theta_{j+1})$,
\begin{equation}\label{eq:phi_j}
  \phi_j(z) = \int_{\theta_j}^{\theta_{j+1}} P^D(z,\theta)d\theta = \frac{1}{2\pi} \bigl( 2\alpha_j(z)-(\theta_{j+1}-\theta_j) \bigr),
\end{equation}
with $P^D(z,\theta )$ the Poisson kernel of the unit disk,
\[
  P^D(z,\theta) = \frac{1}{2\pi} \frac{1-\abs{z}^2}{\abs{z-e^{i\theta}}^2},
\]
and $\alpha_j(z)$ denoting the angle the arc forms with $z$. Note that since $e^{i\theta}$ is a $2\pi$-periodic function, we may use $\theta_{n+1}=\theta_1+2\pi$, so that $\theta_{j+1}-\theta_j$ always gives the (positive) length of the arc, thus $\phi_j(z)\ge0$ and $\sum_j \phi_j(z)=1$.

\begin{lemma}\label{lemma:harmonicMeasure}
The harmonic measure in~\eqref{eq:phi_j} can also be expressed as
\[
  \phi_j(z) = \frac{1}{2\pi}(\psi_{j+1}-\psi_j),
\]
where $\psi_j$ is the antipode of $\theta_j$ relative to $z$.
\end{lemma}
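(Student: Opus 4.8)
The plan is to reduce the claim to a single explicit formula for the antipode map $\psi$ and then telescope. First I would use Lemma~\ref{lemma:psi} to rewrite $\psi(\theta)$ in a form that exposes its dependence on the direction of the chord $v = e^{i\theta}-z$. Since $v/\overline{v} = e^{2i\arg v}$ and (with the branch chosen so that $\psi$ lands in the prescribed interval) $-1 = e^{i\pi}$, Lemma~\ref{lemma:psi} gives
\[
  \psi(\theta) = -i\log\!\Bigl(-e^{2i\arg v}e^{-i\theta}\Bigr) = \pi + 2\arg\bigl(e^{i\theta}-z\bigr) - \theta .
\]
This is the key intermediate identity: the antipode angle is, up to the additive constant $\pi$, twice the direction from $z$ to $e^{i\theta}$ minus $\theta$ itself.

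Next I would form the difference $\psi_{j+1}-\psi_j$. The constant $\pi$ cancels, leaving
\[
  \psi_{j+1}-\psi_j = 2\bigl(\arg(e^{i\theta_{j+1}}-z) - \arg(e^{i\theta_j}-z)\bigr) - (\theta_{j+1}-\theta_j).
\]
The bracketed quantity is exactly the angle subtended at $z$ by the two endpoints $e^{i\theta_j}, e^{i\theta_{j+1}}$ of the $j$-th arc, i.e. the inscribed angle $\alpha_j(z)$ appearing in~\eqref{eq:phi_j}. Substituting into~\eqref{eq:phi_j} then yields
\[
  \phi_j(z) = \frac{1}{2\pi}\bigl(2\alpha_j(z) - (\theta_{j+1}-\theta_j)\bigr) = \frac{1}{2\pi}(\psi_{j+1}-\psi_j),
\]
which is the assertion. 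As a sanity check, at $z=0$ the remark after Lemma~\ref{lemma:psi} gives $\psi(\theta)=\pi+\theta$, so $\psi_{j+1}-\psi_j=\theta_{j+1}-\theta_j$, matching $\phi_j(0)=(\theta_{j+1}-\theta_j)/2\pi$, the normalized arc length.

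Alternatively, and more in the spirit of Section~\ref{sec:CircleGeometry}, one can bypass Lemma~\ref{lemma:psi}: the chords $[e^{i\theta_j},e^{i\psi_j}]$ and $[e^{i\theta_{j+1}},e^{i\psi_{j+1}}]$ meet at $z$, and the intersecting-chords angle theorem states that the angle between them equals half the sum of the two intercepted arcs, giving $\alpha_j(z)=\tfrac{1}{2}\bigl((\theta_{j+1}-\theta_j)+(\psi_{j+1}-\psi_j)\bigr)$ directly; rearranging and dividing by $2\pi$ again gives the claim. The main obstacle is bookkeeping rather than substance: one must fix the branch of $\log$ (equivalently of $\arg$) so that the $\psi_j$ are well defined and increasing in $j$, and handle the cyclic wrap-around at $j=n$ consistently — using $\theta_{n+1}=\theta_1+2\pi$ and correspondingly $\psi_{n+1}=\psi_1+2\pi$ — so that each $\theta_{j+1}-\theta_j$ and each $\psi_{j+1}-\psi_j$ is the genuine positive arc length and the $\phi_j(z)$ still sum to $1$. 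Once these conventions are pinned down, both arguments are short computations.
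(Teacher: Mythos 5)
Your proof is correct, and in fact your ``alternative'' argument is precisely the paper's proof: the paper's entire justification is the observation that the angle between the two chords through $z$ satisfies $2\alpha_j(z)=(\theta_{j+1}-\theta_j)+(\psi_{j+1}-\psi_j)$ (the inscribed/intersecting-chords angle relation), after which the claim is immediate from \eqref{eq:phi_j}. Your primary route is a mildly different, more computational derivation of the same identity: using Lemma~\ref{lemma:psi} to get $\psi(\theta)=\pi+2\arg(e^{i\theta}-z)-\theta$ (mod $2\pi$) and then telescoping, with the argument difference identified as $\alpha_j(z)$. This buys a purely analytic verification that does not appeal to the classical circle-geometry fact, at the cost of the branch-of-$\log$ and wrap-around bookkeeping you correctly flag (the convention $\theta_{n+1}=\theta_1+2\pi$, $\psi_{n+1}=\psi_1+2\pi$ handles the $j=n$ segment); the paper's geometric one-liner avoids that bookkeeping but implicitly relies on the same orientation conventions. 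Either way the statement follows, and your $z=0$ sanity check is consistent with the remark after Lemma~\ref{lemma:psi}.
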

\begin{proof}
By the intersecting chords theorem, $\theta_{j+1}-\theta_j+\psi_{j+1}-\psi_j=2\alpha_j(z)$, and the statement then follows directly from~\eqref{eq:phi_j}.
\end{proof}

\begin{lemma}\label{lemma:grad_phi_j}
The gradient (by $z$) of $\phi_j(z)$ is 
\[
  \nabla\phi_j(z) = \frac{i}{\pi(1-\abs{z}^2)}(e^{i\psi_{j+1}}-e^{i\psi_j}).
\]
\end{lemma}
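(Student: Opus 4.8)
The plan is to reduce the statement to the gradient of a single antipode map and then differentiate the closed form from Lemma~\ref{lemma:psi} via the Wirtinger calculus.

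By Lemma~\ref{lemma:harmonicMeasure}, $\phi_j(z)=\frac{1}{2\pi}\bigl(\psi_{j+1}(z)-\psi_j(z)\bigr)$, where $\psi_k(z)$ is the antipode of the \emph{fixed} angle $\theta_k$ relative to $z$, so $\nabla\phi_j(z)=\frac{1}{2\pi}\bigl(\nabla\psi_{j+1}(z)-\nabla\psi_j(z)\bigr)$. It therefore suffices to prove that for any fixed $\theta\in(-\pi,\pi]$, the antipode $\psi=\psi(\theta)$, viewed as a function of $z$ in the open disk, satisfies
\[
  \nabla\psi=\frac{2i}{1-\abs{z}^2}\bigl(e^{i\psi}-z\bigr).
\]
Subtracting the $k=j$ instance of this identity from the $k=j+1$ instance, the $z$-terms cancel, and the factor $\frac{1}{2\pi}$ turns the coefficient into $\frac{i}{\pi(1-\abs{z}^2)}$, which is exactly the claim.

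To establish the identity, note first that $\psi$ is a smooth, \emph{real}-valued function of $z$ on the open disk (the line through $e^{i\theta}$ and $z$ meets the unit circle in a unique second point, depending smoothly on $z$ since $v=e^{i\theta}-z\neq0$). Hence the gradient may be computed as $\nabla\psi=(\partial_x+i\partial_y)\psi=2\,\partial_{\bar z}\psi$. By Lemma~\ref{lemma:psi},
\[
  \psi=-i\log\Bigl(\tfrac{-v}{\overline{v}}\,e^{-i\theta}\Bigr)=-i\log\bigl(z\,e^{-i\theta}-1\bigr)+i\log\overline{v},
\]
and here $z\,e^{-i\theta}-1$ is holomorphic in $z$ while $\overline{v}=e^{-i\theta}-\bar z$ depends only on $\bar z$; applying $\partial_{\bar z}$ annihilates the first term and yields $\partial_{\bar z}\psi=i\cdot\frac{-1}{\overline{v}}=\frac{-i}{\overline{v}}$, so $\nabla\psi=\frac{-2i}{\overline{v}}$. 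Finally, Lemma~\ref{lemma:1/vbar} gives $\frac{1}{\overline{v}}=-\frac{w}{1-\abs{z}^2}$ with $w=e^{i\psi}-z$, hence $\nabla\psi=\frac{2iw}{1-\abs{z}^2}=\frac{2i(e^{i\psi}-z)}{1-\abs{z}^2}$, as required.

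I expect the only delicate point to be the Wirtinger bookkeeping: one must correctly separate the holomorphic and anti-holomorphic dependence on $z$ inside the argument of the complex logarithm, keep the factor $2$ in $\nabla=2\partial_{\bar z}$ for real-valued functions, and note that the branch of $\log$ is irrelevant since only its derivative enters. The remaining steps — the telescoping over $j$ and the substitution via Lemma~\ref{lemma:1/vbar} — are routine algebra. As a sanity check, at $z=0$ we have $\psi_k=\pi+\theta_k$ and the formula reduces to $\nabla\phi_j(0)=\frac{i}{\pi}\bigl(e^{i\theta_j}-e^{i\theta_{j+1}}\bigr)$, a vector of positive length pointing toward the midpoint of the arc $(\theta_j,\theta_{j+1})$, consistent with the harmonic measure of that arc increasing as $z$ moves toward it.
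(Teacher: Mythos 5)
Your proposal is correct and takes essentially the same route as the paper: both reduce the claim via Lemma~\ref{lemma:harmonicMeasure} to computing $\nabla\psi_j$, differentiate the formula of Lemma~\ref{lemma:psi} using the complex (Wirtinger) form of the gradient to obtain $\nabla\psi_j=-2i/\overline{v}_j$, and then convert with Lemma~\ref{lemma:1/vbar}. Your explicit separation of the holomorphic and anti-holomorphic parts of the logarithm, and applying Lemma~\ref{lemma:1/vbar} per term before the telescoping cancellation of $z$, is just a slightly more careful rewriting of the paper's computation.
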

\begin{proof}
By Lemma~\ref{lemma:harmonicMeasure} and viewing $\psi_j$ as a function of $z$, we get 
\[
  \nabla \phi_j(z) = \frac{1}{2\pi} \nabla (\psi_{j+1}(z) - \psi_j(z)).
\]
Then using Lemma~\ref{lemma:psi} and the complex form of the gradient gives 
\[
  \nabla \psi_j(z) 
  = - i \nabla \log \Bigl( \frac{-v_j}{\overline{v}_j} \Bigr)
  = -2i \biggl( \overline{\frac{\partial}{\partial z} \log \Bigl(\frac{-v_j}{\overline{v}_j} \Bigr)} \biggr)
  = -\frac{2i}{\overline{v}_j},
\]
so
\[
  \nabla\phi_j(z) = -\frac{i}{\pi} \biggl( \frac{1}{\overline{v}_{j+1}} - \frac{1}{\overline{v}_j} \biggr). 
\]
Applying Lemma~\ref{lemma:1/vbar}, we finally get 
\[
  \nabla\phi_j(z)
  = \frac{i}{\pi(1-\abs{z}^2)} (w_{j+1}-w_j)
  = \frac{i}{\pi(1-\abs{z}^2)} (e^{i\psi_{j+1}}-e^{i\psi_j}).
\]
\end{proof}

\subsection{Divergence distances}

Given the reduced coordinates of $z$, that is, $(\phi_1(z),\dots,\phi_n(z))$ based on a partition of the unit circle $-\pi<\theta_1<\dots<\theta_n\le\pi$, and a strictly convex function $f$, the $f$-divergence distance between $z$ and $0$ is defined as in~\eqref{eq:df-discrete},
\[
  d_f(z) = \sum_{j=1}^n \phi_j(0) f \biggl( \frac{\phi_j(z)}{\phi_j(0)} \biggr).
\]
By the chain rule, Lemma~\ref{lemma:harmonicMeasure} and Lemma~\ref{lemma:grad_phi_j},
\begin{align*}
  \nabla d_f(z)
  &= \sum_j \phi_j(0) \nabla f \biggl( \frac{\phi_j(z)}{\phi_j(0)} \biggr)\\
  &= \sum_j f' \biggl( \frac{\phi_j(z)}{\phi_j(0)} \biggr) \nabla \phi_j(z)\\
  &= \frac{i}{\pi (1-\abs{z}^2)} \sum_j f'\biggl( \frac{\psi_j-\psi_{j+1}}{\theta_j-\theta_{j+1}} \biggr)
    (e^{i\psi_{j+1}} - e^{i\psi_j}).
\end{align*}
Without loss of generality, in the following we assume that $z$ is on the positive $x$-axis, so that $z=\overline{z}=\abs{z}$. Any other case can be reduced to this by a simple rotation of the plane.

\begin{lemma}\label{lemma:monotonic}
For $z\ne0$, the function $g(\psi)=f'\bigl(\frac{1}{\theta'(\psi)}\bigr)$ is \emph{strictly} increasing for $\psi\in(0,\pi)$ and \emph{strictly} decreasing for $\psi\in(-\pi,0)$. For $z=0$, $g$ is a constant function.
\end{lemma}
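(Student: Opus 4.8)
The plan is to treat $g=f'\circ h$ as a composition, where $h(\psi):=\tfrac{1}{\theta'(\psi)}$ is the argument of $f'$, and to reduce the whole statement to a monotonicity claim about $h$ alone. Since $f$ is strictly convex, $f'$ is strictly increasing, so $g$ is strictly increasing (resp.\ strictly decreasing, resp.\ constant) on an interval exactly when $h$ is. Hence it suffices to prove that $h$ is strictly increasing on $(0,\pi)$, strictly decreasing on $(-\pi,0)$, and identically constant when $z=0$.

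To obtain a closed form for $h$, note first that $h(\psi)=\tfrac{1}{\theta'(\psi)}=\tfrac{d\psi}{d\theta}$, and that $\tfrac{d\psi}{d\theta}$ is already at hand: by Lemma~\ref{lemma:harmonicMeasure} the map $\tfrac{1}{2\pi}\psi$ is an antiderivative of $P^D(z,\cdot)$, so $\tfrac{d\psi}{d\theta}=2\pi P^D(z,\theta)=\frac{1-\abs{z}^2}{\abs{z-e^{i\theta}}^2}$. This still carries a $\theta$-dependence; to make it a genuine function of $\psi$ I would invoke the intersecting chords identity $\abs{v}\abs{w}=1-\abs{z}^2$ from the proof of Lemma~\ref{lemma:1/vbar}, which gives $\abs{z-e^{i\theta}}^2=\abs{v}^2=(1-\abs{z}^2)^2/\abs{w}^2$ and hence
\[
  h(\psi)=\frac{\abs{e^{i\psi}-z}^2}{1-\abs{z}^2}.
\]
(Equivalently, one can differentiate the antipode formula of Lemma~\ref{lemma:psi} with the roles of $\theta$ and $\psi$ interchanged, obtaining $\theta'(\psi)=2\Re\!\bigl(e^{i\psi}/(e^{i\psi}-z)\bigr)-1$, which under $\bar z=z$ simplifies to the reciprocal of the displayed expression.)

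Now I would use the standing assumption of the subsection that $z$ lies on the positive $x$-axis, so $z=\abs{z}$ and $\abs{e^{i\psi}-z}^2=1-2\abs{z}\cos\psi+\abs{z}^2$, giving $h(\psi)=\frac{1-2\abs{z}\cos\psi+\abs{z}^2}{1-\abs{z}^2}$ and therefore $h'(\psi)=\frac{2\abs{z}\sin\psi}{1-\abs{z}^2}$. When $z\ne0$ we have $0<\abs{z}<1$, so the denominator is positive and $h'(\psi)$ has the sign of $\sin\psi$: $h$ is strictly increasing on $(0,\pi)$ and strictly decreasing on $(-\pi,0)$, and composing with the strictly increasing $f'$ transfers this to $g$. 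When $z=0$ we get $h\equiv1$, hence $g\equiv f'(1)$ is constant, consistent with $\psi(\theta)=\theta+\pi$ in that case.

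I do not anticipate a serious obstacle. The one step that needs care is the passage from the $\theta$-parametrized form of $\tfrac{d\psi}{d\theta}$ to the $\psi$-parametrized form of $h$ via the chord relation (or, in the alternative route, performing the complex-logarithm differentiation of Lemma~\ref{lemma:psi} correctly and remembering $\bar z=z$); everything after that is a single elementary derivative. Implicitly one also uses, as the paper does throughout, that $f$ is differentiable so that $f'$ and hence $g$ are well defined.
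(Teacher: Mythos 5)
Your proof is correct, and it takes a route that is recognizably different from the paper's, in a way that is worth noting. The shared core is the identity $\psi'(\theta)=2\pi P^D(z,\theta)=\frac{1-\abs{z}^2}{\abs{e^{i\theta}-z}^2}$ (the paper obtains it by differentiating the formula of Lemma~\ref{lemma:psi}; your derivation from Lemma~\ref{lemma:harmonicMeasure} with arbitrary endpoints is equally valid) together with the chord relation of Lemma~\ref{lemma:1/vbar} and the standing assumption $z=\abs{z}$. From there the paper goes one derivative deeper: it computes $\psi''(\theta)$, deduces strict monotonicity of $\psi'$ on $(0,\pi)$ and $(-\pi,0)$, transfers this to $\theta(\psi)$ via the remark that $\theta(\psi)$ and $\psi(\theta)$ are the same function, and then differentiates $g$ itself, invoking ``$f''>0$ since $f$ is strictly convex.'' You instead put the inner function into the closed form $h(\psi)=\frac{\abs{e^{i\psi}-z}^2}{1-\abs{z}^2}=\frac{1-2\abs{z}\cos\psi+\abs{z}^2}{1-\abs{z}^2}$, whose monotonicity is immediate from $h'(\psi)=\frac{2\abs{z}\sin\psi}{1-\abs{z}^2}$, and compose with $f'$. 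This buys two things: the monotonicity of the argument of $f'$ becomes a one-line computation with no second derivative of $\psi$ and no explicit appeal to the involution, and, more substantively, you only need $f'$ to be strictly increasing, which for differentiable $f$ is exactly equivalent to strict convexity; the paper's step ``$f''>0$'' tacitly assumes $f$ twice differentiable and is not literally implied by strict convexity (e.g.\ $f(x)=(x-1)^4$ at $x=1$), so your composition argument is marginally more general and also covers points where $f''$ vanishes. The one step worth stating explicitly is that $\frac{1}{\theta'(\psi)}=\psi'(\theta(\psi))$ by the inverse-function rule, which is legitimate because $\psi'(\theta)>0$ everywhere; with that said, your passage from the $\theta$-parametrized Poisson-kernel form to the $\psi$-parametrized form via $\abs{v}\abs{w}=1-\abs{z}^2$ is sound, and the $z=0$ case ($h\equiv1$, $g\equiv f'(1)$) is handled correctly.
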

\begin{proof}
For $z\neq 0$, it follows from Lemma~\ref{lemma:psi} that $\psi(\theta)=-i\log\Bigl(\frac{-(e^{i\theta}-z)}{e^{-i\theta}-z}e^{-i\theta}\Bigr)$, hence
\[
  \psi'(\theta)
  = \frac{e^{i\theta}}{e^{i\theta}-z} + \frac{e^{-i\theta}}{e^{-i\theta}-z} - 1
  = \frac{z}{e^{i\theta}-z} + \frac{z}{e^{-i\theta}-z} + 1
  = 2z \Re\biggl( \frac{1}{e^{i\theta}-z} \biggr) + 1
\]
and
\begin{align*}
  \psi''(\theta)
  &= - \frac{i z e^{i\theta}}{(e^{i\theta}-z)^2}
     + \frac{i z e^{-i\theta}}{(e^{-i\theta}-z)^2}\\
  &= 2z \Im \biggl( \frac{e^{i\theta}}{(e^{i\theta}-z)^2} \biggr)\\
  &= \frac{-2z(1-z^2)}{\abs{e^{i\theta}-z}^4}{\sin\theta}
     \begin{cases}
       < 0, & \text{if $\theta\in(0,\pi)$},\\
       > 0, & \text{if $\theta\in(-\pi,0)$}.
     \end{cases}
\end{align*}
Note that when $z=0$, $\psi''(\theta)=0$ for all $\theta$. For $z\ne0$, we conclude that $\psi'$ is \emph{strictly} decreasing for $\theta\in(0,\pi)$ and \emph{strictly} increasing for $\theta\in(-\pi,0)$. Applying the chain rule, we then get
\[
  g'(\psi)
  = -\theta''(\psi) {(\theta'(\psi))}^{-2} f''\biggl( \frac{1}{\theta'(\psi)} \biggr).
\]
Now note that $\theta(\psi)$ and $\psi(\theta)$ are actually the same function (because $\theta$ and $\psi$ are antipodes of each other), so they have the same behavior. Moreover, $f''>0$, since $f$ is strictly convex. We conclude that the sign of $g'$ is opposite to the sign of $\theta''$, thus $g$ is \emph{strictly} increasing for $\psi\in(0,\pi)$ and \emph{strictly} decreasing for $\psi\in(-\pi ,0)$.
\end{proof}

Note that $g(\psi)$ is an even function, as is $\psi'(\theta)$.

\begin{figure}\centering
  \includegraphics[height=2.5in]{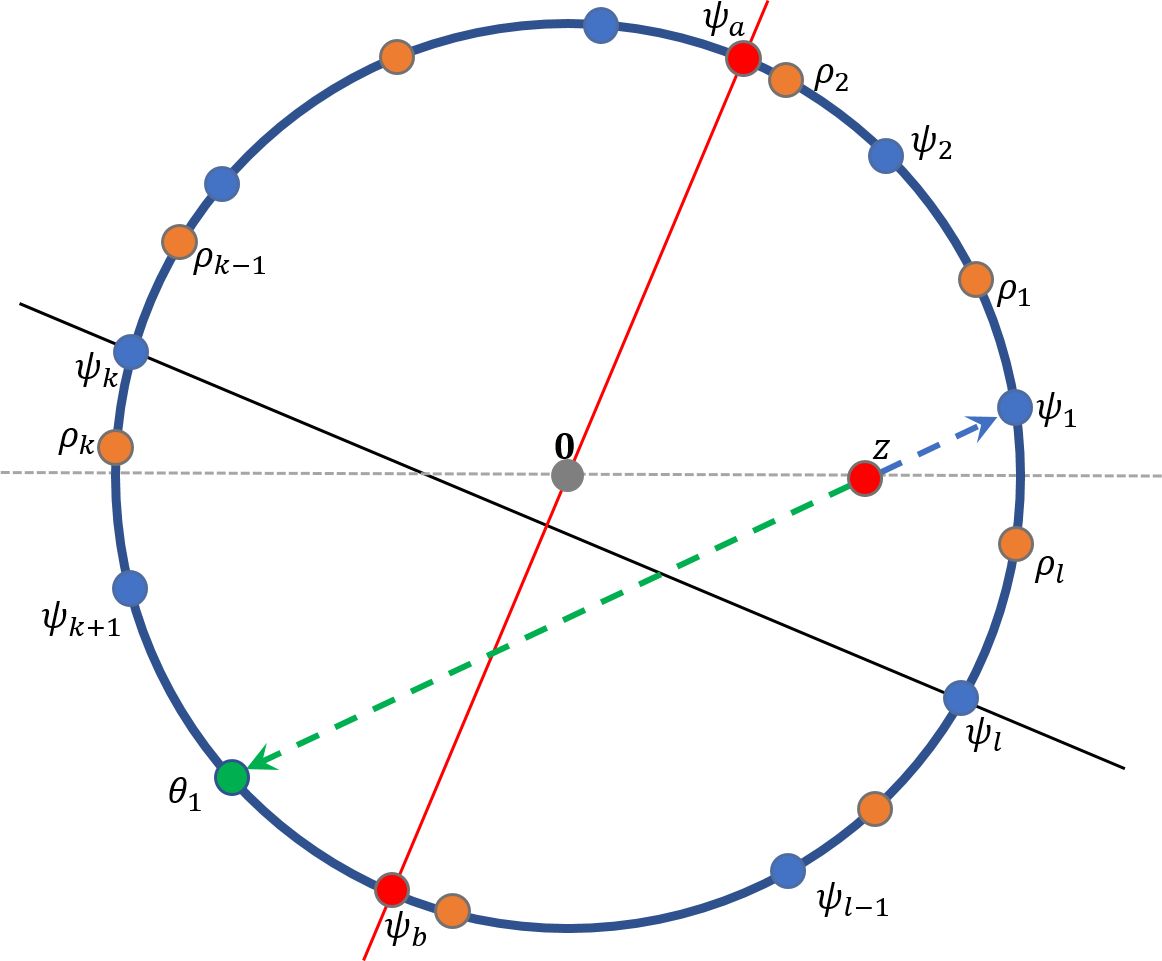}
  \caption{Notation used in the proof of Theorem~\ref{theorem:DGTD}.}
  \label{fig:notation2}
\end{figure}

\begin{theorem}[Divergence Gradient Theorem for the disk]\label{theorem:DGTD}
Given the reduced coordinates of $z$ based on a partition of the unit circle $-\pi<\theta_1<\dots<\theta_n\le\pi$, with $n\ge3$, the $f$-divergence distance satisfies
\[
  \nabla d_f(z) = 0 \qquad\text{iff}\qquad z=0.
\]
\end{theorem}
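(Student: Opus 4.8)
The ``if'' direction is immediate from Lemma~\ref{lemma:monotonic}: when $z=0$ the slope factor $g$ is constant, so the expression for $\nabla d_f(z)$ obtained above is a constant multiple of $\sum_j(e^{i\psi_{j+1}}-e^{i\psi_j})$, which telescopes to $0$ around the circle. For ``only if'', fix $z\ne0$ and, as already arranged, take $z=\abs z>0$. Write $g_j=f'\bigl(\frac{\psi_{j+1}-\psi_j}{\theta_{j+1}-\theta_j}\bigr)$ and $S=\sum_j g_j(e^{i\psi_{j+1}}-e^{i\psi_j})$, so that $\nabla d_f(z)=\frac{i}{\pi(1-\abs z^2)}S$ and it suffices to prove $S\ne0$. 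In fact I would prove the sharper statement that the directional derivative of $d_f$ in the radially outward direction is strictly positive at $z$ (the distance to the origin genuinely increases as one moves outward), which is equivalent to $\Im S<0$.

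To get a handle on $\Im S$, first apply the mean value theorem to the antipode map $\theta\mapsto\psi(\theta)$ on each $[\theta_j,\theta_{j+1}]$; together with $\theta'(\psi)=1/\psi'(\theta)$ (valid since $\theta$ and $\psi$ are mutual antipodes) this yields $g_j=g(\psi_j^\ast)$ for the function $g$ of Lemma~\ref{lemma:monotonic}, with $\psi_j^\ast$ an antipode strictly between $\psi_j$ and $\psi_{j+1}$. Summation by parts (with the cyclic conventions $\psi_{n+1}=\psi_1+2\pi$, $g_0=g_n$) then gives $\Im S=-\sum_{j=1}^n c_j\sin\psi_j$ with $c_j:=g_j-g_{j-1}$; here $\sum_j c_j=0$, and each $c_j$ is an increment of the even, $2\pi$-periodic function $g$ across the point $\psi_j\in(\psi_{j-1}^\ast,\psi_j^\ast)$. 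Since $g$ is strictly decreasing on $(-\pi,0)$ and strictly increasing on $(0,\pi)$, whenever the interval $[\psi_{j-1}^\ast,\psi_j^\ast]$ avoids the two turning points $0$ and $\pi$ of $g$, the increment $c_j$ and $\sin\psi_j$ have the same sign, so that term contributes strictly negatively to $\Im S$. If every term were of this type the proof would be finished, strictness coming from $g$ being non-constant for $z\ne0$; dually, $\nabla d_f(z)=0$ would force certain convex combinations of upper-arc and lower-arc boundary samples $e^{i\psi_j}$ to coincide, which a separation argument forbids.

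The one obstruction is the at most two indices whose interval $[\psi_{j-1}^\ast,\psi_j^\ast]$ contains a turning point $0$ or $\pi$ — the ``crossing'' arcs of Figure~\ref{fig:notation2} — and controlling their contribution is, I expect, the heart of the matter. My plan is to show those terms too satisfy $c_j\sin\psi_j\ge0$. Directly: for real $z$ one has $\psi(-\theta)=-\psi(\theta)$, so a crossing increment near $0$ is $c_j=g(\psi_j^\ast)-g(-\psi_{j-1}^\ast)$ with $\psi_{j-1}^\ast<\psi_j<\psi_j^\ast$, and I would prove $\operatorname{sign}(\psi_j^\ast+\psi_{j-1}^\ast)=\operatorname{sign}(\psi_j)$ — hence $\operatorname{sign}(c_j)=\operatorname{sign}(\sin\psi_j)$ — using the concavity of $\psi$ on $(0,\pi)$ and its convexity on $(-\pi,0)$, which follow from the sign of $\psi''$ computed inside the proof of Lemma~\ref{lemma:monotonic}; the crossing near $\pi$ is symmetric. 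An equivalent repackaging, in case the direct estimate is awkward: write $\Im S=\int_0^\pi\bigl(G(\psi)+G(-\psi)\bigr)\cos\psi\,d\psi$ for the step function $G$ equal to $g_j$ on $(\psi_j,\psi_{j+1})$, and integrate by parts in the Stieltjes sense against $\cos$, whose antiderivative $\sin$ vanishes at both endpoints $0$ and $\pi$; this reduces everything to showing that the folded staircase $G(\psi)+G(-\psi)$ is nondecreasing on $(0,\pi)$, which is again precisely the crossing-arc statement and is exact in the fine-partition limit, where it equals $2g(\psi)$. Either way one gets $\Im S<0$, hence $\nabla d_f(z)\ne0$. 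It then remains only to dispose of the degenerate sub-cases where some $\psi_j\in\{0,\pi\}$ (there $\sin\psi_j=0$ and the term drops out), and to note that $n\ge3$ is used exactly to exclude $n=2$, where $d_f$ depends on the single coordinate $\phi_1$ and $\{z:\phi_1(z)=\phi_1(0)\}$ is a whole circular arc through the two partition points rather than just $z=0$.
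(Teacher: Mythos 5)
Your summation-by-parts reduction and the sign analysis of the non-crossing terms are sound, but the argument founders exactly where you expect it to — and it cannot be repaired, because the strengthened statement you aim for (strict positivity of the outward radial derivative, i.e.\ $\Im S<0$) is itself false for coarse partitions. First, the crossing-arc claim: if the arc on one side of a turning point is long and the arc on the other side is short, the mean-value image $\psi_{j-1}^\ast$ sits deep inside the long arc while $\psi_j^\ast$ and $\psi_j$ sit on the short side, so $\abs{\psi_{j-1}^\ast}>\abs{\psi_j^\ast}$ and $c_j$ has sign opposite to $\sin\psi_j$; the concavity/convexity of $\psi$ does not constrain where the mean-value point lands inside a long arc, and your ``folded staircase'' $G(\psi)+G(-\psi)$ is monotone only in the fine-partition limit. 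Second, the radial claim itself fails: start from the degenerate case $n=2$, where (as the paper's remark after the theorem notes) $d_f$ vanishes on the whole circular arc through $e^{i\theta_1}$, $e^{i\theta_2}$ and the origin. Pick another point $p\neq0$ of that arc inside the disk and rotate it onto the positive real axis; along the segment $[0,p]$ the distance $d_f$ is zero at both ends and positive in between, so its outward radial derivative is strictly negative somewhere on $(0,p)$. Now pass to an admissible $n=3$ partition by inserting a third partition point very close to an existing one: the new term's gradient $f'\bigl(\phi_j(z)/\phi_j(0)\bigr)\nabla\phi_j(z)$ tends to zero (the ratio stays bounded while $\nabla\phi_j\to0$ by Lemma~\ref{lemma:grad_phi_j}), so the $n=3$ gradient converges to the $n=2$ gradient at interior points and the radial derivative stays negative there. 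Thus $\Im S<0$ is false for some $n=3$ configurations even though the theorem is true, and no amount of work on the crossing terms can establish it.

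This is precisely why the paper does not project onto a fixed direction. It applies the mean value theorem as you do and forms the step function $w(\psi)=g(\rho_j)$ on $(\psi_j,\psi_{j+1}]$, but then chooses the projection direction from the data: with $k$ and $l$ the indices of the smallest and largest $\abs{\rho_j}$, it projects onto the chord $[e^{i\psi_l},e^{i\psi_k}]$, along whose two complementary arcs $w$ is monotone (increasing on one, decreasing on the other, by Lemma~\ref{lemma:monotonic}); folding each arc about the orthogonal diameter shows each half contributes a nonnegative projection, strictly positive unless that half consists of a single coordinate, which cannot happen on both halves once $n\ge3$. Your ``if'' direction and your closing observation about $n=2$ agree with the paper, but the fixed radial direction must be replaced by this configuration-dependent one.
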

\begin{proof}
For each arc $\theta_j \theta_{j+1}$, the mean-value theorem states that there exists some $\tau_j\in(\theta_j,\theta_{j+1})$, such that $\psi'(\tau_j)=\frac{\psi_j-\psi_{j+1}}{\theta_j-\theta_{j+1}}$. Now let $\rho_j=\psi(\tau_j)\in(\psi_j,\psi_{j+1})\subset(-\pi,\pi]$ and define the piecewise constant (periodic) function $w(\psi)=g(\rho_j)$, $\psi\in(\psi_j,\psi_{j+1}]$. Obviously there exist $k$ and $l$ such that $\abs{\rho_k}\le\abs{\rho_j}\le\abs{\rho_l}$ for all $j$, that is, the indices $k$ and $l$ correspond to the leftmost and rightmost $\rho$'s, respectively. In particular, as shown in Figure~\ref{fig:notation2}, $l$ is either 1, $n-1$, or $n$. Since, by Lemma~\ref{lemma:monotonic}, $g$ is strictly increasing in $(0,\pi)$ and strictly decreasing in $(-\pi,0)$, by construction, $w(\psi)$ is monotonically (but not strictly) increasing in $(\psi_l,\psi_k]$ (if $\psi_k>0$, otherwise we use $(\psi_l,\psi_k+2\pi]$ and the same principle applies below), and decreasing in $(\psi_k,\psi_l]$.

Let $[e^{i\psi_a},e^{i\psi_b}]$ be the diameter orthogonal to $[e^{i\psi_k},e^{i\psi_l}]$. The gradient $\nabla d_f$ of \eqref{eq:df} can then be rewritten as
\[
  \nabla d_f
  = \frac{i}{\pi(1-\abs{z}^2)} \oint_C  w(\psi) de^{i\psi}
  = \frac{i}{\pi(1-\abs{z}^2)} \Biggl( \int_{\psi_l}^{\psi_k} w(\psi) d e^{i\psi}
                                     + \int_{\psi_k}^{\psi_l} w(\psi) d e^{i\psi} \Biggr)
  = \nabla d^1_f + \nabla d^2_f.
\]
The projection $\nabla d^1_f$ onto $[e^{i\psi_l},e^{i\psi_k}]$ can be computed as
\begin{align*}
  \bigl\langle i e^{i \psi_a}, \nabla d^1_f \bigr\rangle
  &= \biggl\langle i e^{i \psi_a},
                   \frac{i}{\pi(1-\abs{z}^2)} \int_{\psi_l}^{\psi_k} w(\psi) d e^{i\psi} \biggr\rangle\\
  &= \frac{-1}{\pi (1-\abs{z}^2)}
       \biggl\langle e^{i\psi_a},
                     \int_{\psi_l}^{\psi_k} i w(\psi) e^{i\psi} d\psi \biggr\rangle\\
  &= \frac{-1}{\pi (1-\abs{z}^2)}
       \int_{\psi_l}^{\psi_k} \Re\bigl(-i w(\psi) e^{i(\psi_a-\psi)}\bigr) d\psi\\
  &= \frac{-1}{\pi (1-\abs{z}^2)}
       \int_{\psi_l}^{\psi_a} w(\psi) \sin(\psi_a-\psi) d\psi
   + \frac{-1}{\pi (1-\abs{z}^2)}
       \int_{\psi_a}^{\psi_k} w(\psi) \sin(\psi_a-\psi) d\psi\\
  &= \frac{-1}{\pi (1-\abs{z}^2)}
     \int_{\psi_l}^{\psi_a} \bigl( w(\psi) - w(2\psi_a-\psi) \bigr) \sin(\psi_a-\psi) d\psi.
\end{align*}
Now observe that for $\psi\in(\psi_l,\psi_a)$ we have $2\psi_a-\psi\in(\psi_a,2\psi_a-\psi_l)=(\psi_a,\psi_k)$, therefore
\[
  w(\psi) \le w(2\psi_a-\psi),
\]
since $w$ is monotonically increasing in $(\psi_l,\psi_k]$, while $\sin(\psi_a-\psi)>0$ as $\psi_a-\psi\in(0,\psi_a-\psi_l)\subset(0,\pi)$. Overall, we conclude
\[
  \int_{\psi_l}^{\psi_a} \bigl( w(\psi)-w(2\psi_a-\psi) \bigr) \sin(\psi_a-\psi) d\psi \le 0,
\]
hence $\langle ie^{i\psi_a},\nabla d^1_f \rangle \ge 0$. The equality holds iff $w(\psi)$ is constant on $(\psi_l,\psi_k)$, which happens only in the case $k=l+1$, when the harmonic measure on the arc $(\theta_l,\theta_k)$ is reduced to a single coordinate.

Similarly, the projection $\nabla d^2_f$ of the gradient $\nabla d_f$ of~\eqref{eq:df} onto $[e^{i\psi_l},e^{i\psi_k}]$ is
\[
  \bigl\langle i e^{i\psi_a}, \nabla d^2_f \bigr\rangle
  = \biggl\langle i e^{i\psi_a},
                  \frac{i}{\pi(1-\abs{z}^2)} \int_{\psi_k}^{\psi_l} w(\psi) de^{i\psi} \biggr\rangle
  \ge 0.
\]
Again, equality holds iff $w(\psi)$ is constant on $(\psi_k,\psi_l)$, or equivalently $l=k+1$. Therefore, in total $\langle ie^{i\psi_a},\nabla d_f\rangle>0$, so $\nabla d_f\ne0$, as long as $n>2$.

If $n=2$, then $w$ is constant over the two integral intervals, so $\nabla d^1_f=\nabla d^2_f=0$.
\end{proof}

\begin{remark}
The proof shows that when $n=2$, the gradient has zero projection onto $[e^{i\psi_l},e^{i\psi_k}]$, in other words, the gradient is always in the orthogonal direction, and it can vanish at places. In fact, Equation~\eqref{eq:phi_j} implies that each point on the circle through $\theta_1,\theta_2$ and the origin has the same harmonic measure as the origin and therefore has 0 distance to the origin and vanishing gradient.
\end{remark}

We now generalize Theorem \ref{theorem:DGTD} to arbitrary simply-connected domains. The key is the conformal invariance of the $f$-divergence distances, which is implied by the well-known conformal invariance of the harmonic measure of $E\subset\Omega$,
which is defined (in the usual way) in terms of the Poisson kernel as
\[
  \phi^{\Omega}(E,z) = \int_E P^{\Omega}(t,z) dt.
\]

\begin{theorem}[Conformal Invariance of Harmonic Measure~\cite{Garnett:2005:HM}] \label{theorem:conformal_invariance_harmonic}
Let $\Omega_1$ and $\Omega_2$ be two simply-connected domains and $C\colon\Omega_1\to\Omega_2$ be a conformal map between them such that $C(\partial\Omega_1)=\partial\Omega_2$. 
Then,
\[
  \phi^{\Omega_2} (C(E),C(z)) = \phi^{\Omega_1}(E,z).
\]
\end{theorem}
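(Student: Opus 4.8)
The plan is to identify the harmonic measure $\phi^{\Omega}(E,\cdot)$ with the unique bounded harmonic function on $\Omega$ whose (nontangential) boundary values equal the indicator $\mathbf{1}_E$ of $E$ on $\partial\Omega$, and then to use the fact that harmonicity is preserved under precomposition with a conformal map. Writing $u_i(z) = \phi^{\Omega_i}(E_i,z) = \int_{E_i} P^{\Omega_i}(t,z)\,dt$, the defining Poisson integral already shows that $u_i$ is harmonic in $\Omega_i$, is bounded by $1$, and has nontangential boundary limit $\mathbf{1}_{E_i}$ at almost every boundary point. First I would set $E_1 = E$ and $E_2 = C(E)$, noting that a conformal bijection $C\colon\Omega_1\to\Omega_2$ with $C(\partial\Omega_1)=\partial\Omega_2$ extends to a homeomorphism of the closures (Carath\'eodory's theorem, in the Jordan-domain setting relevant here), so that $C(E)$ is a well-defined boundary subset and $C$ carries boundary null sets to boundary null sets.

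Next I would form the composition $v = u_2\circ C$ on $\Omega_1$ and check the three things that make the argument work: (i) $v$ is harmonic, because locally $u_2 = \Re F$ for a holomorphic $F$ and $C$ is holomorphic, so $v = \Re(F\circ C)$ with $F\circ C$ holomorphic; (ii) $v$ is bounded by $1$; and (iii) via the boundary homeomorphism, the nontangential boundary values of $v$ are $\mathbf{1}_{C(E)}\circ C = \mathbf{1}_E$ a.e.\ on $\partial\Omega_1$. Then $v$ and $u_1$ are two bounded harmonic functions on $\Omega_1$ agreeing a.e.\ on the boundary, so $v\equiv u_1$ by the maximum principle for bounded harmonic functions with a.e.-defined boundary data. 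Evaluating at an arbitrary $z\in\Omega_1$ gives $\phi^{\Omega_2}(C(E),C(z)) = u_2(C(z)) = v(z) = u_1(z) = \phi^{\Omega_1}(E,z)$, which is the assertion.

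The harmonicity bookkeeping is routine; the delicate point -- and the part I would be most careful about -- is the boundary behaviour and the precise meaning of ``boundary values'' for the discontinuous datum $\mathbf{1}_E$: one needs $C$ to extend to a boundary homeomorphism, and one needs a uniqueness statement for the Dirichlet problem with merely bounded, a.e.-defined boundary data rather than continuous data. An alternative route that sidesteps this language is to use the conformal invariance of Green's function, $G^{\Omega_2}(C(z),C(\zeta)) = G^{\Omega_1}(z,\zeta)$: differentiating in the inner normal at the boundary produces precisely the factor $\abs{C'}$ relating the two arc-length elements, and changing variables in $\int_E P^{\Omega_1}(t,z)\,dt$ cancels that factor against the $\abs{C'}$ coming from $dt\mapsto\abs{C'}\,dt$, giving the identity directly. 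Since this is a classical result, in the paper we would ultimately just cite~\cite{Garnett:2005:HM} and present only the Dirichlet-problem argument.
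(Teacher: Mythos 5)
The paper does not prove this statement at all: it is quoted as a classical fact and attributed directly to Garnett and Marshall~\cite{Garnett:2005:HM}, with the text moving straight on to the $f$-divergence corollary (Theorem~\ref{theorem:conformal_invariance_divergence}), whose short proof is the only place the invariance is actually used. So your proposal is supplying a proof where the paper offers none, and the argument you give is the standard textbook one: identify $\phi^{\Omega}(E,\cdot)$ with the bounded harmonic extension of $\mathbf{1}_E$, use that harmonic $\circ$ holomorphic is harmonic, transport the boundary data via the Carath\'eodory extension, and invoke uniqueness for the Dirichlet problem with bounded a.e.\ boundary data. That is correct in the setting the paper implicitly works in (boundaries regular enough that the Poisson kernel exists as the normal derivative of Green's function), and your second route --- conformal invariance of Green's function, normal differentiation producing the $\abs{C'}$ factor that cancels under the change of variables $dt\mapsto\abs{C'}\,dt$ --- is in fact closer in spirit to the paper's own formulation, since the paper \emph{defines} harmonic measure through the Poisson kernel. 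One caveat on the delicate point you yourself flag: the claim that $C$ ``carries boundary null sets to boundary null sets'' is false in general if ``null'' means arclength-null (for non-rectifiable Jordan boundaries harmonic measure can be singular with respect to arclength, and conformal maps can distort arclength-null sets badly); the a.e.\ statements and the uniqueness theorem should be phrased with respect to harmonic measure, for which the transport under $C$ is automatic, or else one should state the boundary-regularity hypothesis under which the two measure classes coincide. With that adjustment, or under the smooth-boundary assumption the paper needs anyway, your proof is sound.
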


Let us now turn to the $f$-divergence distances.

\begin{theorem}[Conformal Invariance of $f$-divergence] \label{theorem:conformal_invariance_divergence}
Let $C\colon\Omega_1\to\Omega_2$ be a conformal map and $d_f^{\Omega_1}\colon\Omega_1\times\Omega_1\to\RR$ and $d_f^{\Omega_2}\colon\Omega_2\times\Omega_2\to\RR$ the $f$-divergence distance functions of $\Omega_1$ and $\Omega_2$, respectively. Then,
\[
  d_f^{\Omega_2}(C(x),C(y)) = d_f^{\Omega_1}(x,y).
\]
\end{theorem}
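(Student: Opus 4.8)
The plan is to observe that the $f$-divergence distance $d_f^{\Omega}(z,y)$ in~\eqref{eq:divdist} is exactly the classical Csisz\'ar $f$-divergence of the two harmonic measures $\mu_z$ and $\mu_y$ on $\partial\Omega$, where $d\mu_z(t)=P^{\Omega}(z,t)\,dt$ and $d\mu_y(t)=P^{\Omega}(y,t)\,dt$, namely $\int f(d\mu_y/d\mu_z)\,d\mu_z$. Once this is recognized, the statement should follow from two facts: Theorem~\ref{theorem:conformal_invariance_harmonic}, which says $C$ transports harmonic measures to harmonic measures, and the elementary fact that an $f$-divergence of a pair of measures is unchanged under a common measurable change of variables (the Radon--Nikodym derivative is carried along unchanged). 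I would present this first as an explicit change of variables in the boundary integral, and then note the cleaner measure-theoretic phrasing.

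Concretely: First I would extract the infinitesimal form of Theorem~\ref{theorem:conformal_invariance_harmonic}. For every boundary arc $E\subset\partial\Omega_1$ and every interior point $z$, $\int_{C(E)}P^{\Omega_2}(C(z),s)\,ds=\int_E P^{\Omega_1}(z,t)\,dt$; substituting $s=C(t)$ (so that $ds=|C'(t)|\,dt$ along the boundary curve) and shrinking $E$ yields the transformation law $P^{\Omega_2}(C(z),C(t))\,|C'(t)|=P^{\Omega_1}(z,t)$ for almost every $t\in\partial\Omega_1$. Applying this with $z=x$ and with $z=y$, the factor $|C'(t)|$ cancels in the ratio, so that
\[
  \frac{P^{\Omega_1}(y,t)}{P^{\Omega_1}(x,t)}=\frac{P^{\Omega_2}(C(y),C(t))}{P^{\Omega_2}(C(x),C(t))},\qquad P^{\Omega_1}(x,t)\,dt=P^{\Omega_2}(C(x),s)\,ds .
\]
Substituting both of these into $d_f^{\Omega_1}(x,y)=\oint_{\partial\Omega_1}P^{\Omega_1}(x,t)\,f\bigl(P^{\Omega_1}(y,t)/P^{\Omega_1}(x,t)\bigr)\,dt$ and rereading the result as an integral over $\partial\Omega_2=C(\partial\Omega_1)$ in the variable $s$ produces $\oint_{\partial\Omega_2}P^{\Omega_2}(C(x),s)\,f\bigl(P^{\Omega_2}(C(y),s)/P^{\Omega_2}(C(x),s)\bigr)\,ds=d_f^{\Omega_2}(C(x),C(y))$, which is the claim.

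The conceptually cleanest way to see that nothing can go wrong is to skip $|C'|$ altogether: $C$ restricted to $\partial\Omega_1$ is a bijection onto $\partial\Omega_2$ which, by Theorem~\ref{theorem:conformal_invariance_harmonic}, pushes $\mu_x$ to the harmonic measure of $C(x)$ on $\partial\Omega_2$ and $\mu_y$ to that of $C(y)$; since the functional $(\mu,\nu)\mapsto\int f(d\nu/d\mu)\,d\mu$ depends only on the ordered pair of measures and is invariant under simultaneous transport by a bijection, the two $f$-divergence distances are equal. I expect the only genuinely delicate point to be the boundary bookkeeping needed to justify the change of variables in the first route --- that $C$ extends to a boundary homeomorphism with $|C'|\in L^1(\partial\Omega_1)$ and positive a.e., and that the integrals in~\eqref{eq:divdist} are finite (which holds because $P^{\Omega}(z,\cdot)$ is bounded away from $0$ and $\infty$ on $\partial\Omega$ for interior $z$ in a sufficiently regular, e.g. piecewise-smooth Jordan, domain). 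The measure-theoretic version needs only that $C$ is a Borel isomorphism of the two boundary curves, which conformality already guarantees, so this regularity is the only thing requiring care, not any real mathematical content.
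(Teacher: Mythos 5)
Your mathematics is essentially correct, but it proves the \emph{continuous} version \eqref{eq:divdist} of the claim, whereas the theorem as it functions in this paper concerns the \emph{reduced} $f$-divergence distance \eqref{eq:df-discrete}: it is invoked in the proof of Theorem~\ref{theorem:DGT}, and the paper's own proof is phrased entirely in terms of the finite sum $\sum_{j}\phi^{\Omega}(E_j,x)\,f\bigl(\phi^{\Omega}(E_j,y)/\phi^{\Omega}(E_j,x)\bigr)$, with the tacit convention that the partition of $\partial\Omega_2$ is the image $(C(E_1),\dots,C(E_n))$ of the partition of $\partial\Omega_1$. For that statement the paper's proof is a two-line, termwise application of Theorem~\ref{theorem:conformal_invariance_harmonic}: each harmonic measure $\phi^{\Omega_2}(C(E_j),C(x))$ equals $\phi^{\Omega_1}(E_j,x)$, so the sums agree term by term. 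No Poisson-kernel transformation law, no change of variables, and no boundary regularity enters. This is not a cosmetic difference: the paper's domains are polygonal, where $|C'|$ degenerates at corners, so the hypotheses you flag as ``the only thing requiring care'' (a boundary homeomorphism with $|C'|\in L^1$ and positive a.e., kernels bounded away from $0$ and $\infty$) are precisely what the set-function formulation of harmonic-measure invariance is designed to bypass.

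That said, your second, measure-theoretic paragraph contains exactly the right idea --- an $f$-divergence of a pair of measures is invariant under simultaneous pushforward by a bijection --- and if you apply it to the aggregated discrete distributions $(\phi_1(\cdot),\dots,\phi_n(\cdot))$ rather than to the boundary densities, you recover the paper's proof verbatim. So for the continuous distance \eqref{eq:divdist} your change-of-variables derivation, with $|C'(t)|$ cancelling in the ratio $P^{\Omega_1}(y,t)/P^{\Omega_1}(x,t)$, is a fine and more informative argument (and that version is essentially the one already used by Chen et al.~\cite{Chen:2016:PPW}); but to serve the role the theorem plays here you should state and prove it for \eqref{eq:df-discrete}, where it is an immediate corollary of Theorem~\ref{theorem:conformal_invariance_harmonic}, the image partition must be made explicit, and none of the delicate analytic points you raise arise at all.
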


\begin{proof}
Using the definition in \eqref{eq:df-discrete} and Theorem~\ref{theorem:conformal_invariance_harmonic}, we get
\begin{align*}
  d_f^{\Omega_2}(C(x),C(y)) 
  &= \sum_{j=1}^n \phi^{\Omega_2} (C(E_j),C(x)) 
                  f\biggl( \frac{\phi^{\Omega_2}(C(E_j),C(y))}{\phi^{\Omega_2}(C(E_j),C(x))} \biggr)\\
  &= \sum_{j=1}^n \phi^{\Omega_1}(E_j,x)
                  f\biggl( \frac{\phi^{\Omega_1}(E_j,y)}{\phi^{\Omega_1}(E_j,x)} \biggr)\\
  &= d_f^{\Omega_1}(x,y)
\end{align*}
\end{proof}

Now we are in the position to prove Theorem~\ref{theorem:DGT} for any simply-connected domain by conformally mapping it to the unit disk.

\begin{proof} (of Theorem \ref{theorem:DGT})
For a given target point $y\in\Omega$, the Riemann Mapping Theorem~\cite{Garnett:2005:HM} implies that there exists a conformal map $C\colon\Omega\to D$, where $D$ is the unit disk, such that $\partial\Omega$ is mapped to $\partial D$ and $C(y)=0$. By Theorem~\ref{theorem:conformal_invariance_divergence},
\[
  d_f^{\Omega}(x,y) = d_f^{D}(C(x),0).
\]
The gradients of the two distance functions with respect to their first argument are related by the chain rule for holomorphic functions. Dropping the second (fixed) argument, we get
\[
  \nabla d_f^\Omega (x) = \nabla d_f^D(C(x)) \overline{\frac{\partial C}{\partial z}} (x).
\]
Since the derivative of a conformal map never vanishes, we have
\[
  \nabla d_f^\Omega (x) = 0 
    \qquad\text{iff}\qquad
  \nabla d_f^D(C(x)) = 0
    \qquad\text{iff}\qquad
  C(x)=0
    \qquad\text{iff}\qquad
  x=y.
\]
\end{proof}

\begin{figure}
  \parbox{.3\textwidth}{\centering\includegraphics[height=1.8in]{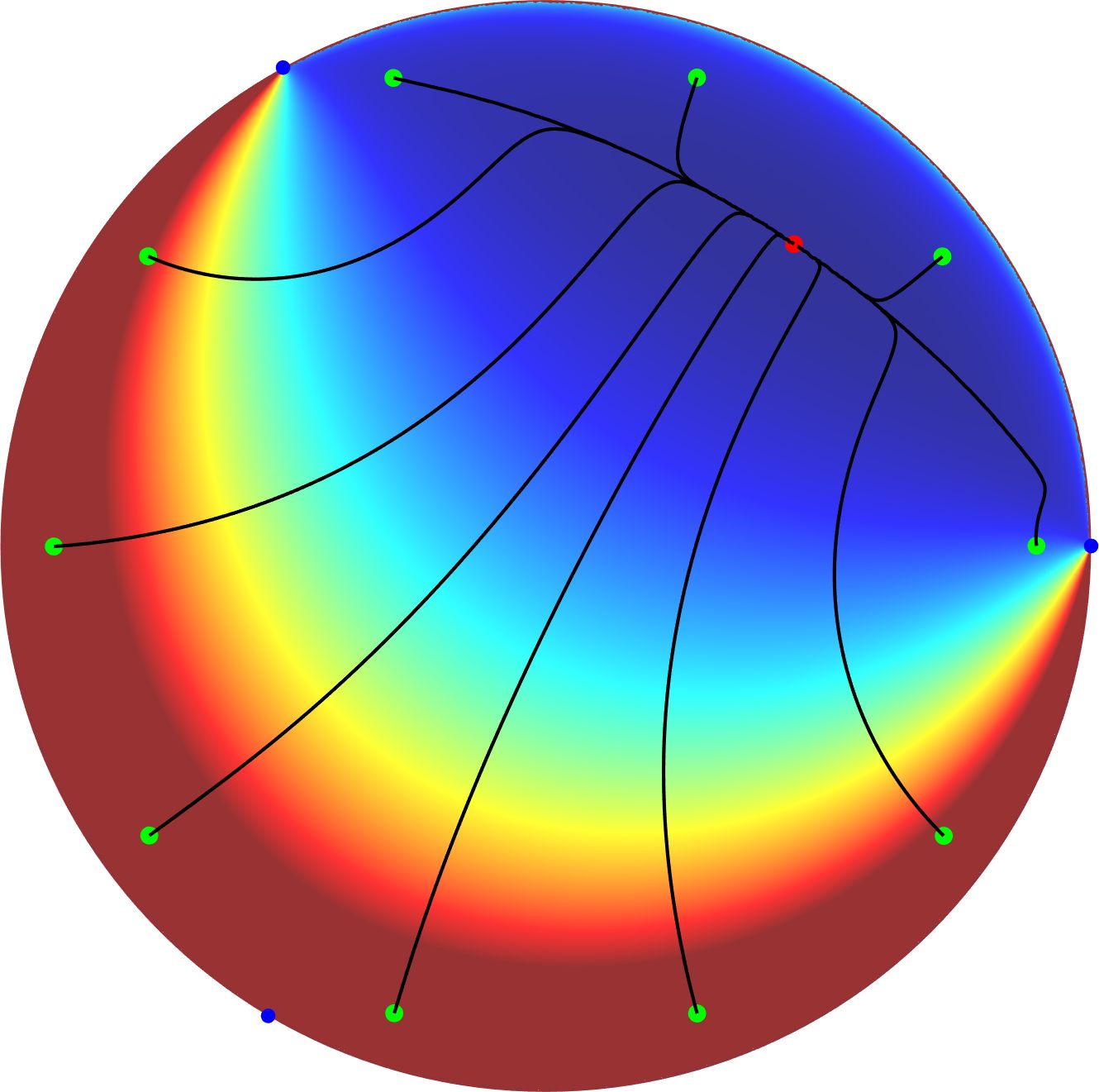}\\ KL, $n=3$}\hfill
  \parbox{.3\textwidth}{\centering\includegraphics[height=1.8in]{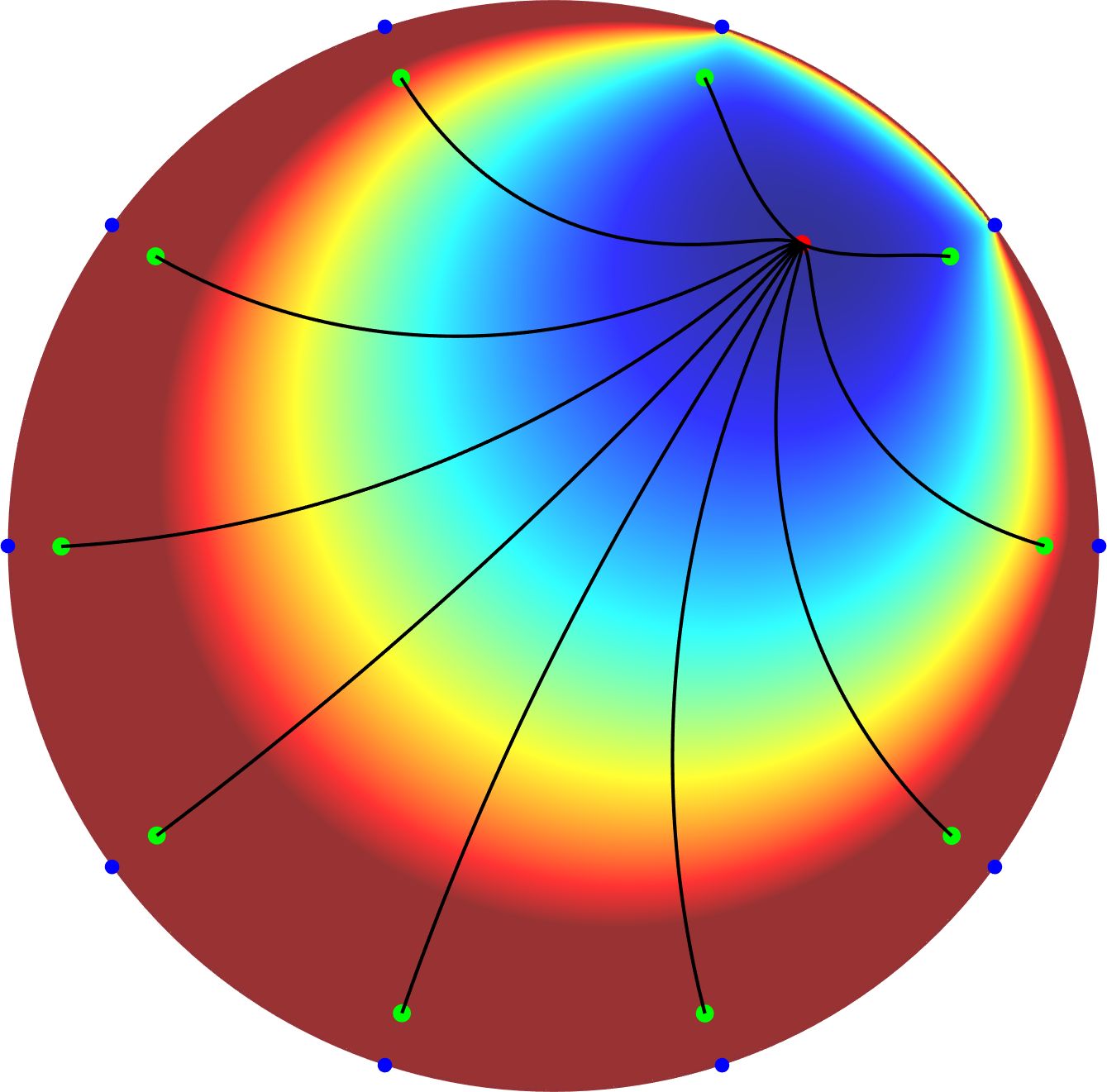}\\ KL, $n=10$}\hfill
  \parbox{.3\textwidth}{\centering\includegraphics[height=1.8in]{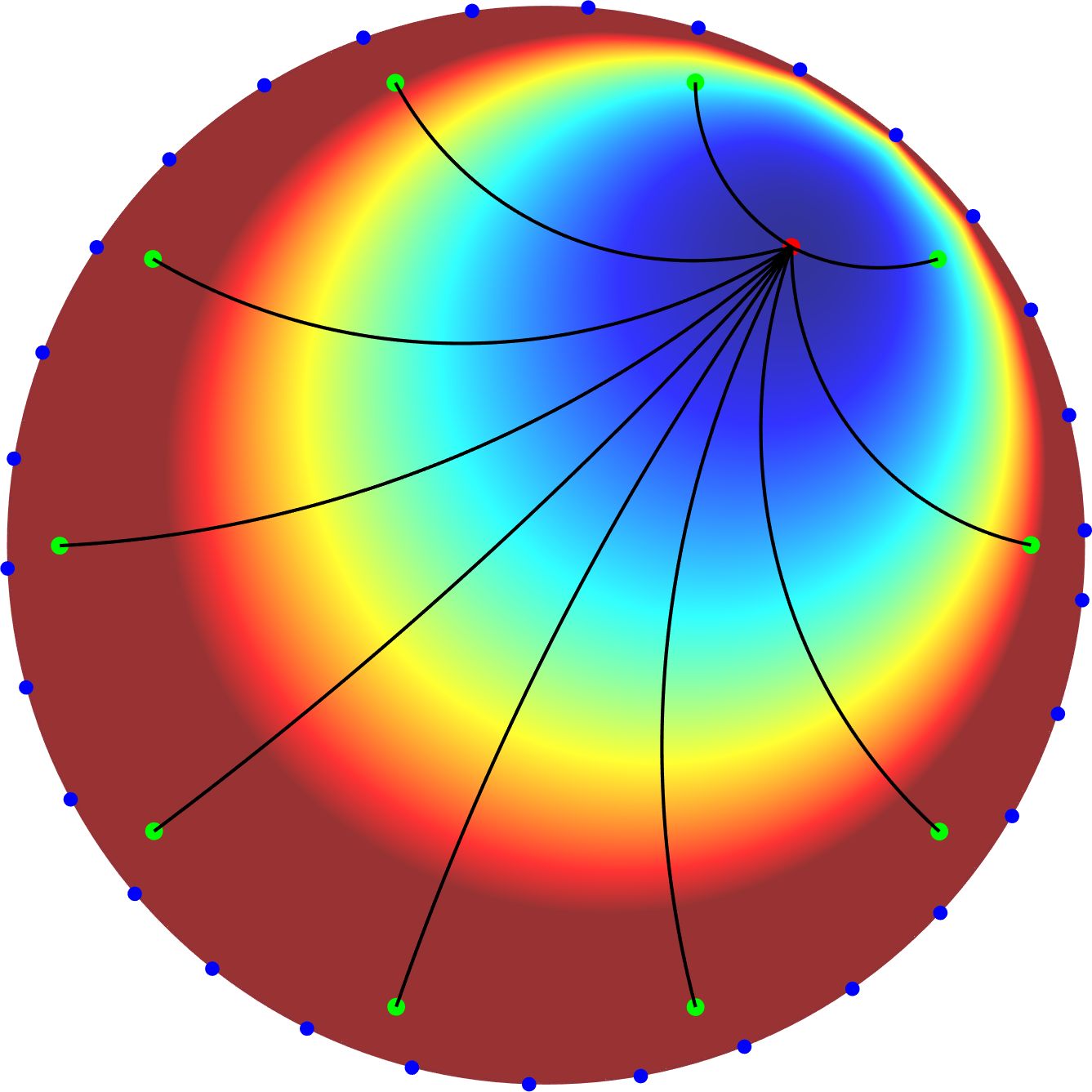}\\ KL, $n=30$}\hfill
  \parbox{.02\textwidth}{\centering\includegraphics[height=1.8in]{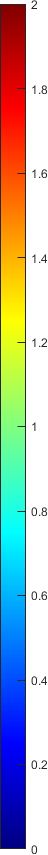}\\ ~}\\[1ex]
  \parbox{.3\textwidth}{\centering\includegraphics[height=1.8in]{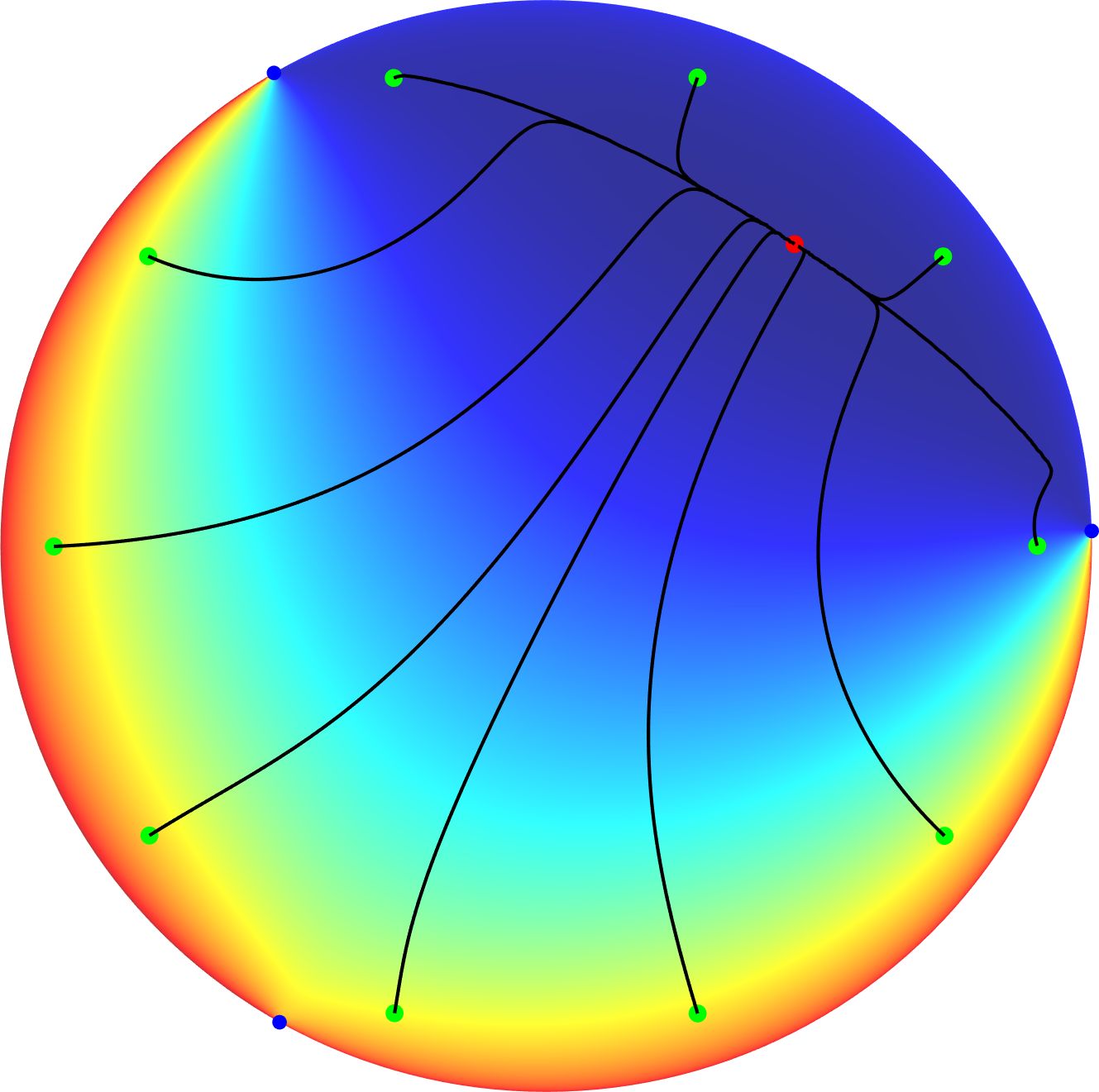}\\ H, $n=3$}\hfill
  \parbox{.3\textwidth}{\centering\includegraphics[height=1.8in]{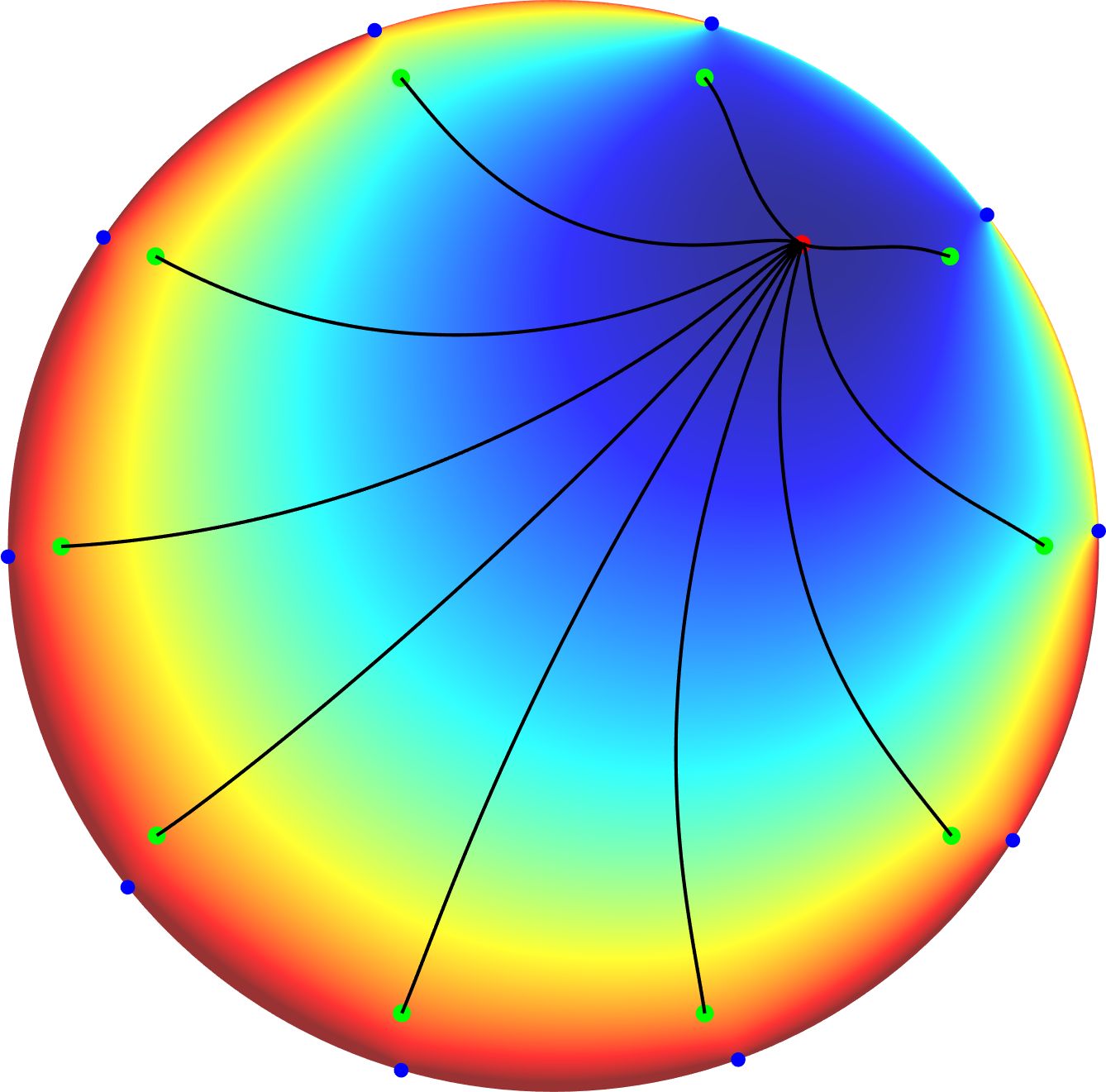}\\ H, $n=10$}\hfill
  \parbox{.3\textwidth}{\centering\includegraphics[height=1.8in]{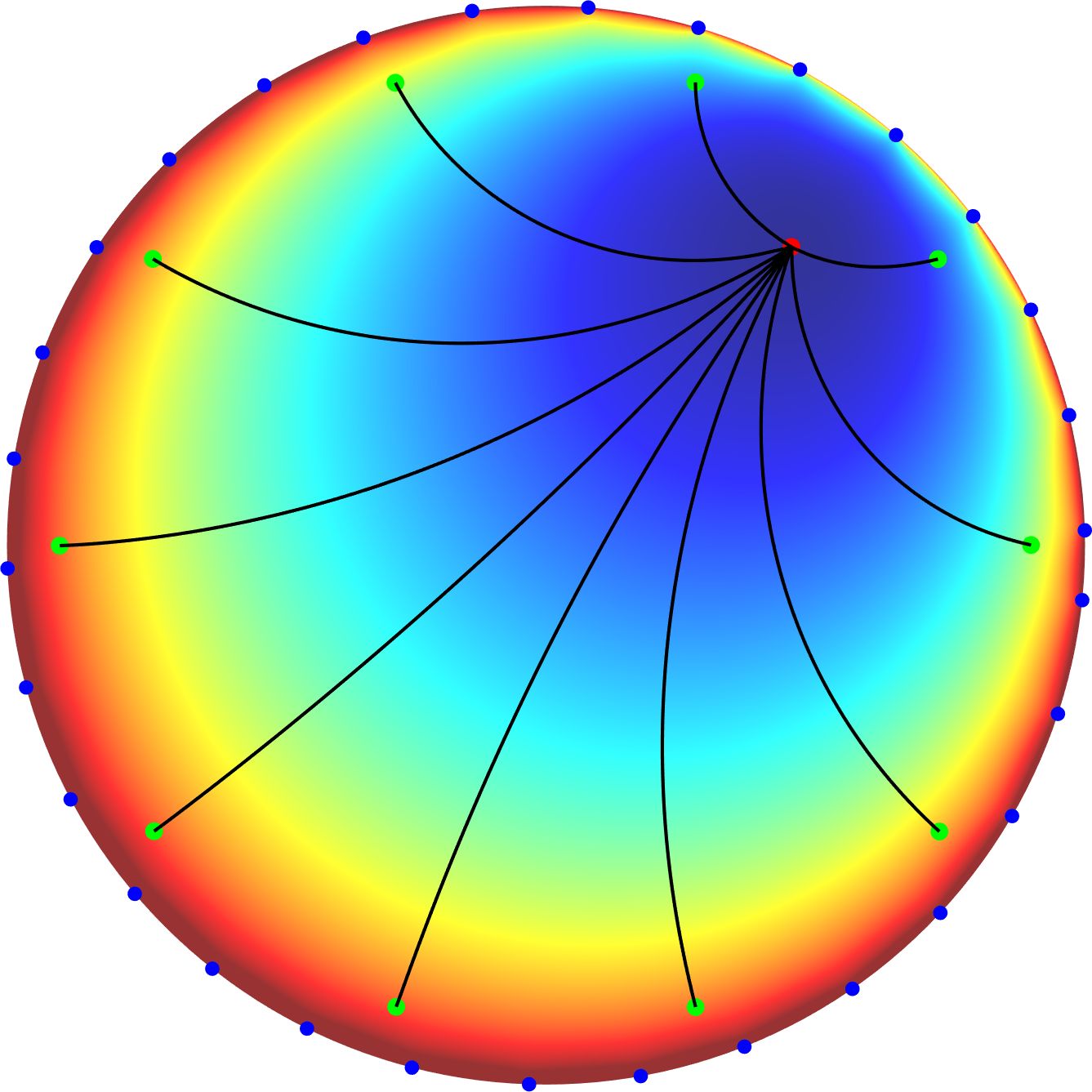}\\ H, $n=30$}\hfill
  \parbox{.02\textwidth}{\centering\includegraphics[height=1.8in]{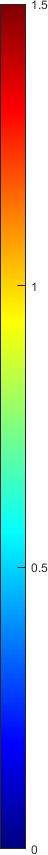}\\ ~}\par
  \caption{Path planning (from the green source points to the red target point) using reduced $f$-divergence distances for different numbers $n$ of reduced coordinates. The reduced coordinates are generated by a uniform partition of the domain boundary, marked by the blue points. \emph{Top:} Using the KL distance function. \emph{Bottom:} Using the H distance function. \emph{Left to right:} Increasing the number of reduced coordinates. Note that the KL paths are in general not identical to the H paths, but both converge to the ``continuous'' paths of Figure~\ref{fig:grad-descent} as $n$ increases.}
  \label{fig:convex-KL-H}
\end{figure}

\begin{figure}
  \parbox{.3\textwidth}{\centering\includegraphics[height=2.2in]{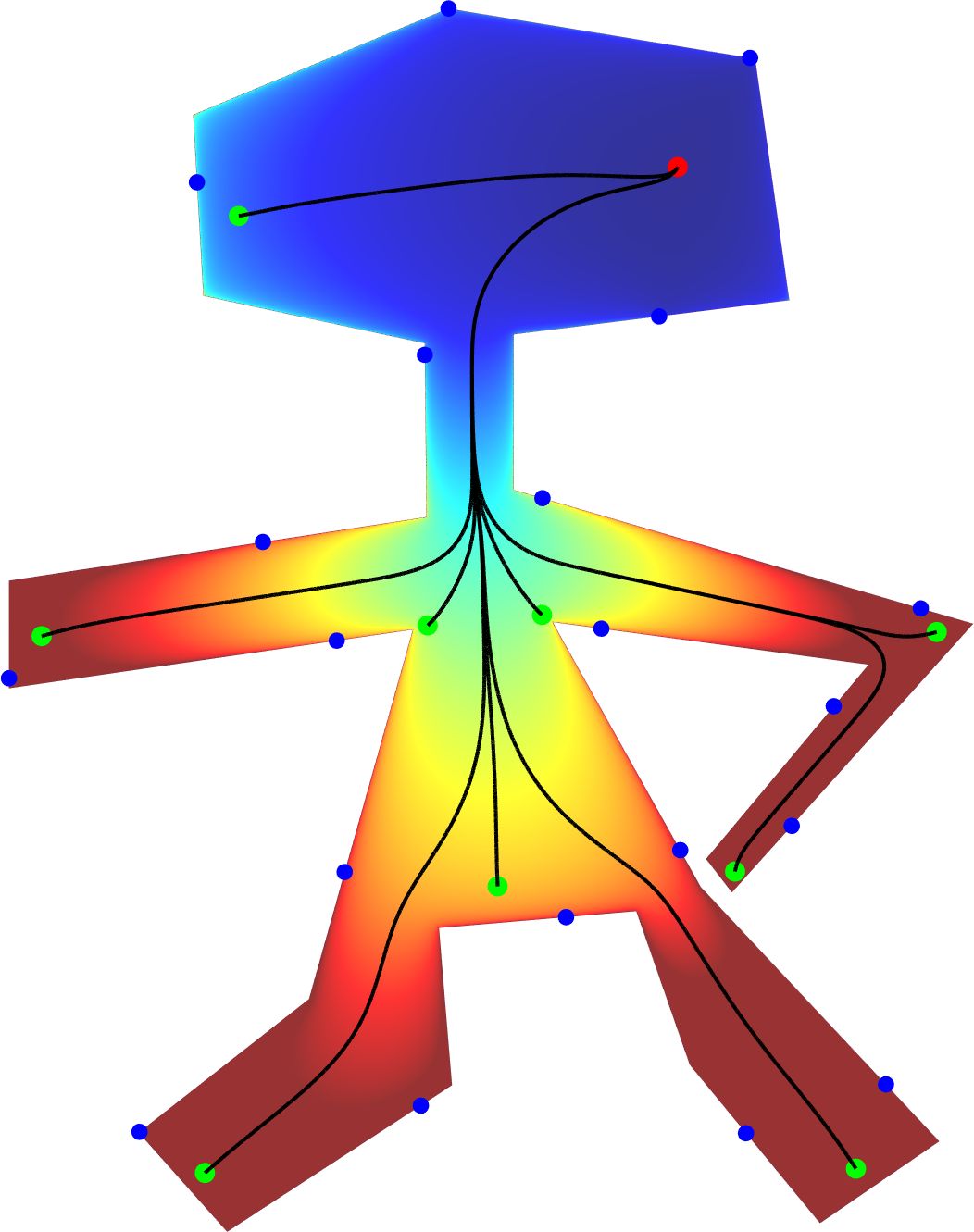}\\ KL, $n=20$}\hfill
  \parbox{.3\textwidth}{\centering\includegraphics[height=2.2in]{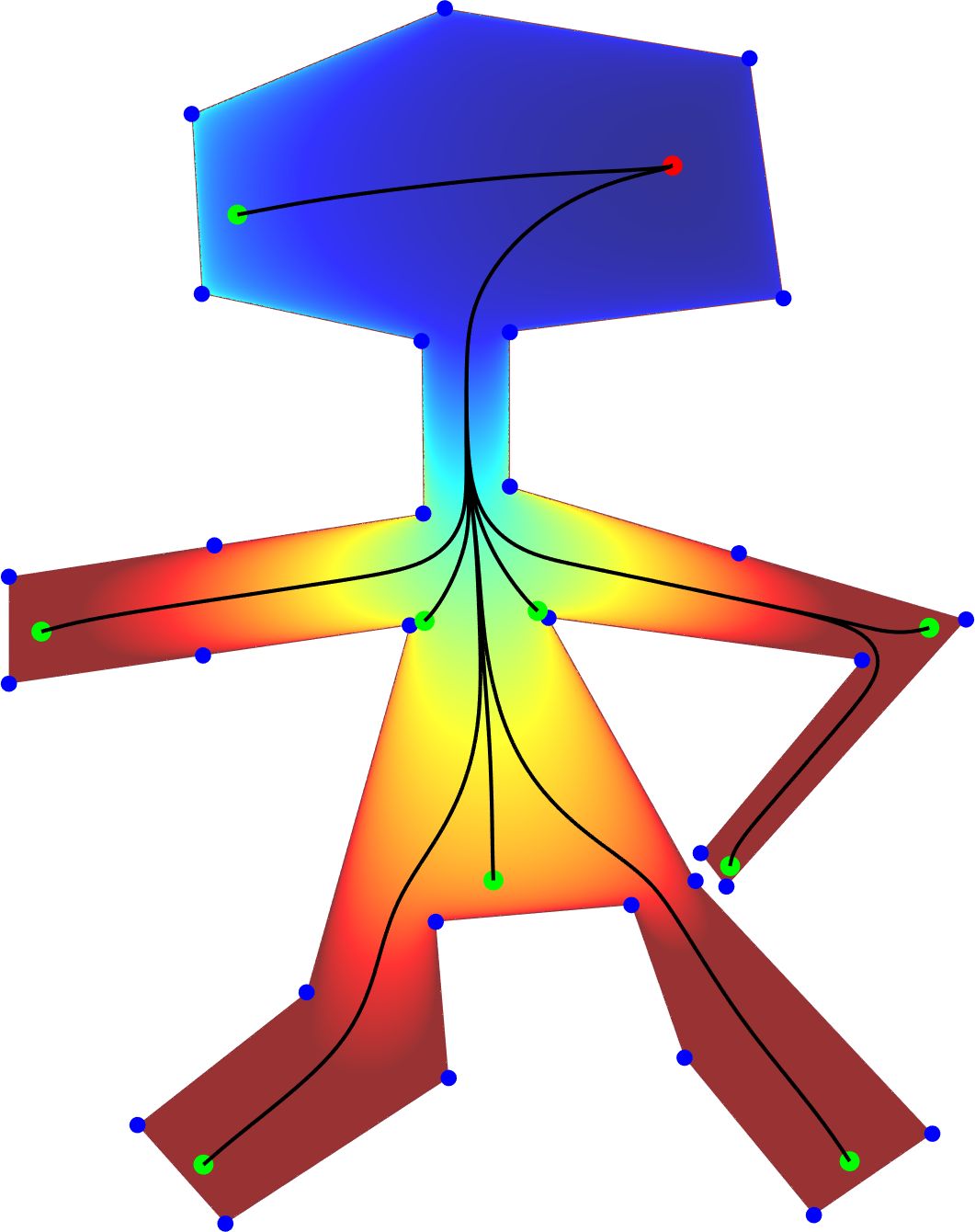}\\ KL, $n=30$}\hfill
  \parbox{.3\textwidth}{\centering\includegraphics[height=2.2in]{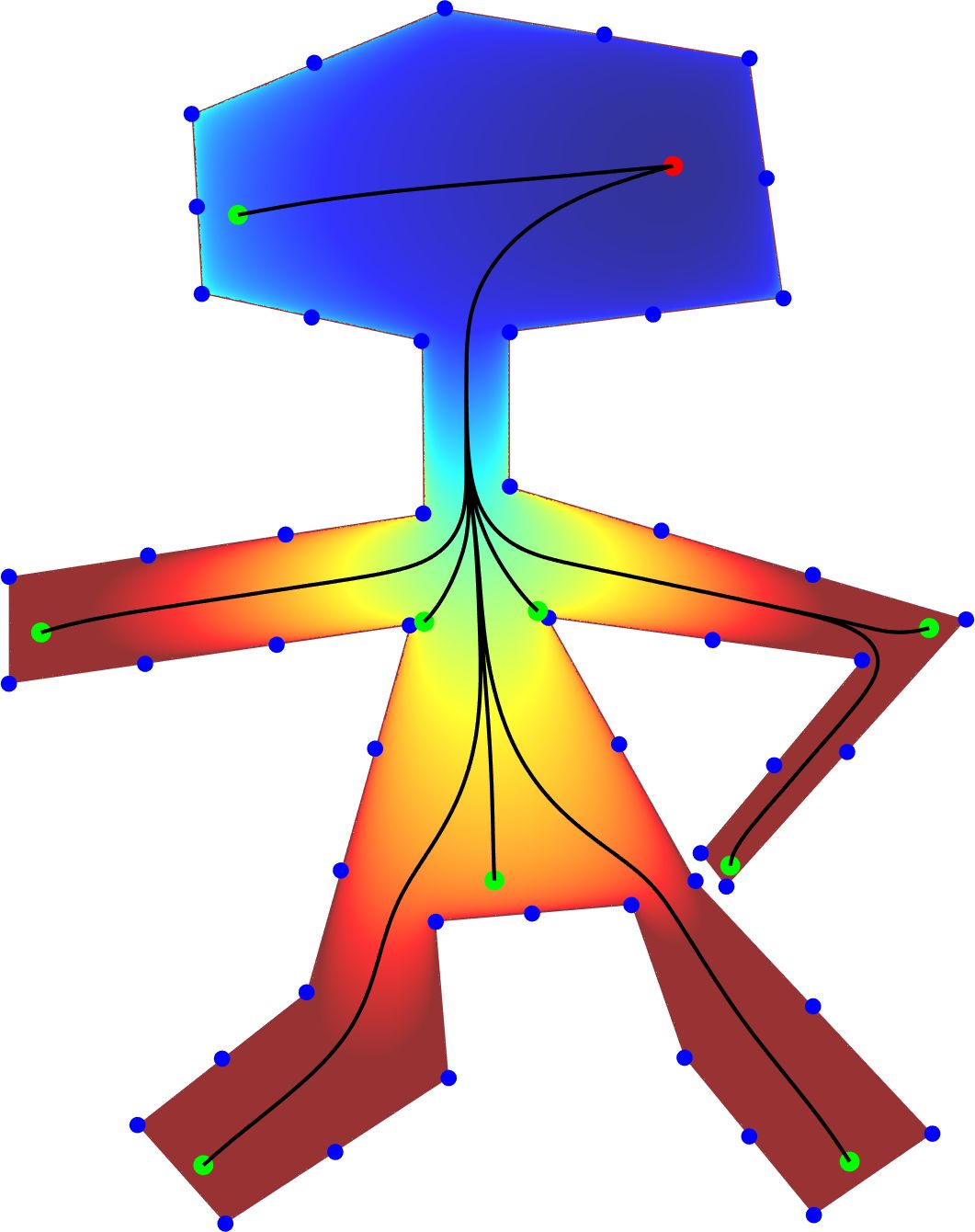}\\ KL, $n=50$}\hfill
  \parbox{.04\textwidth}{\centering\includegraphics[height=2.2in]{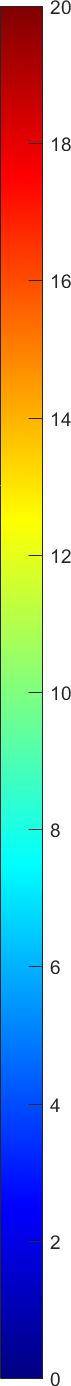}\\ ~}\\[1ex]
  \parbox{.3\textwidth}{\centering\includegraphics[height=2.2in]{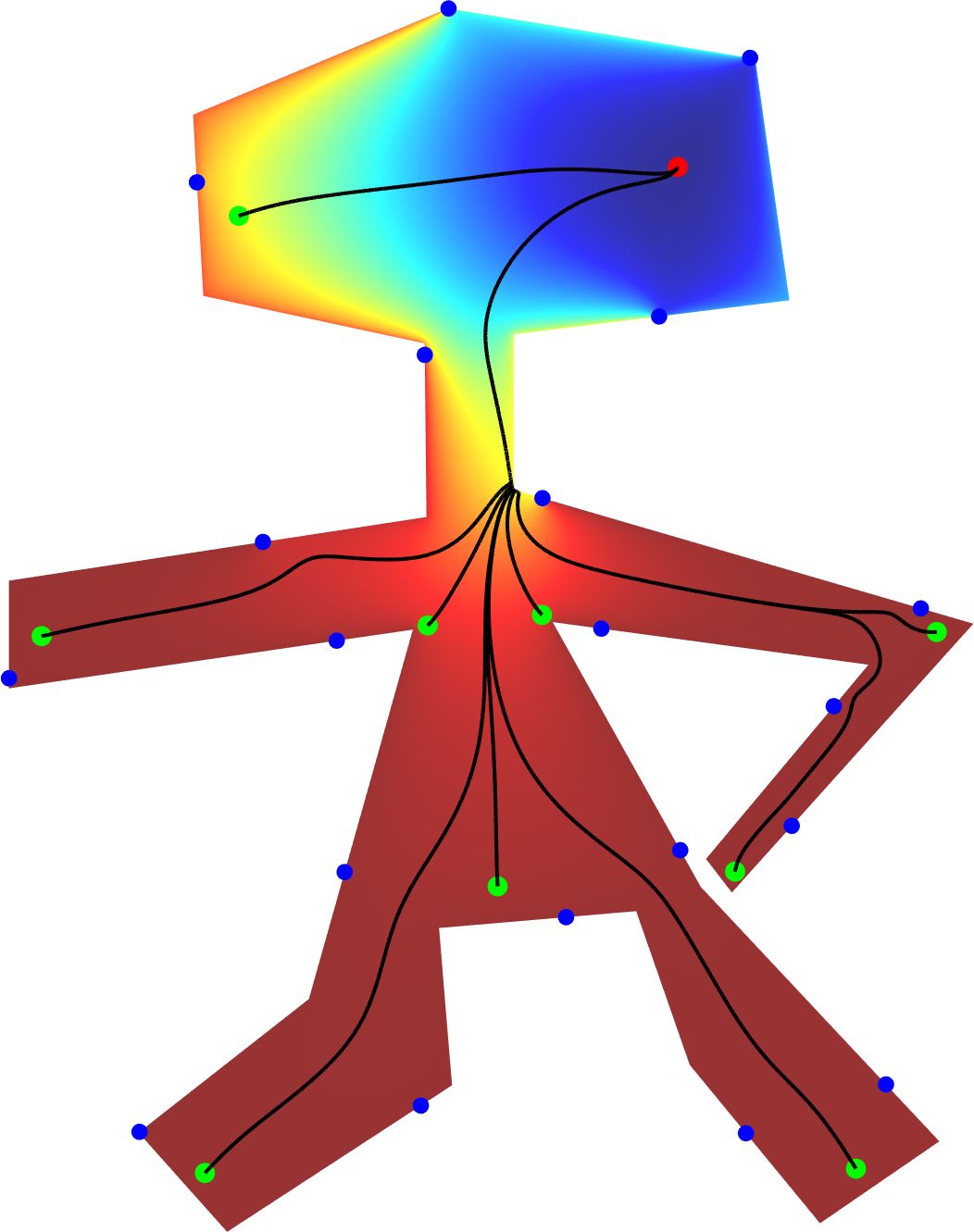}\\ H, $n=20$}\hfill
  \parbox{.3\textwidth}{\centering\includegraphics[height=2.2in]{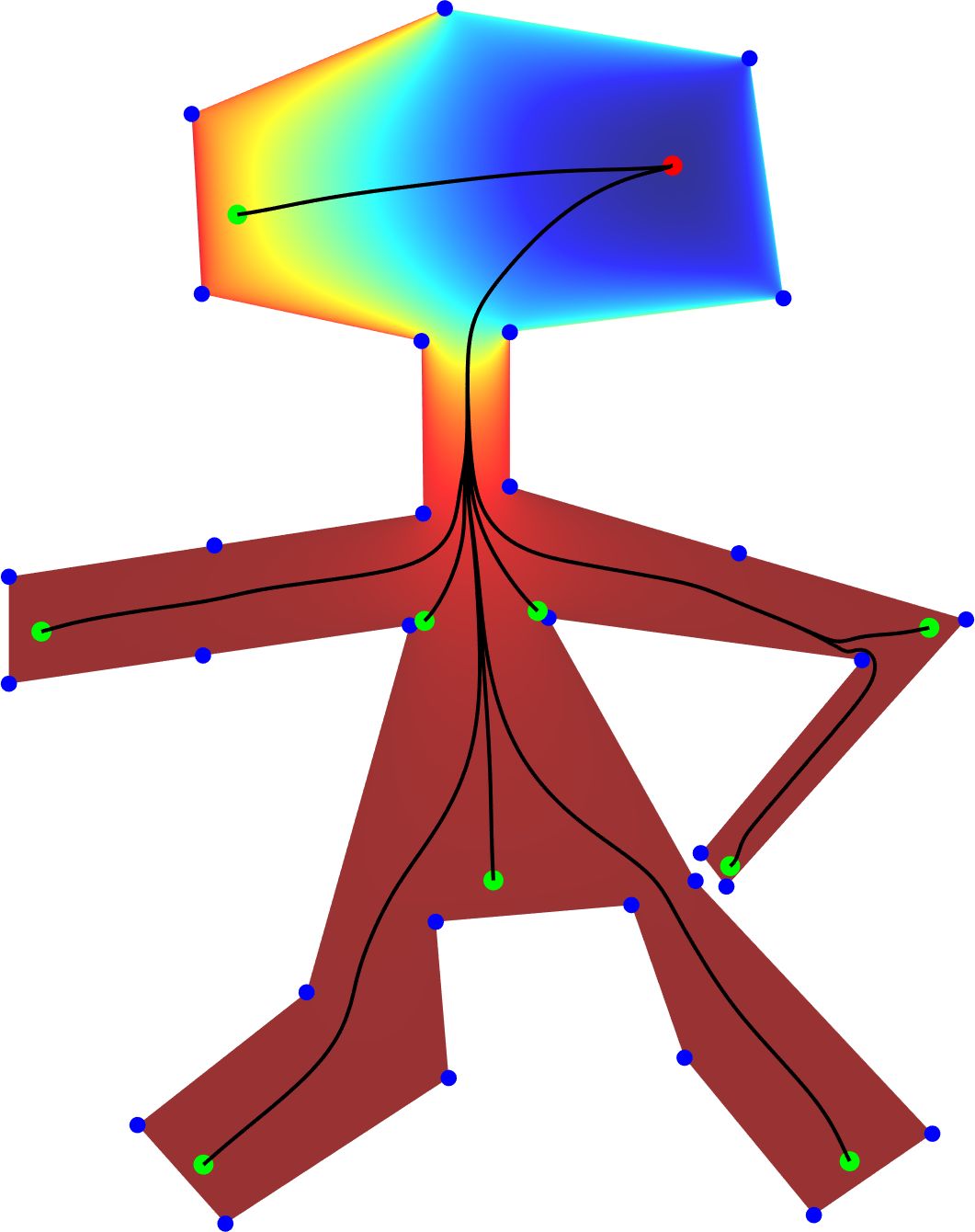}\\ H, $n=30$}\hfill
  \parbox{.3\textwidth}{\centering\includegraphics[height=2.2in]{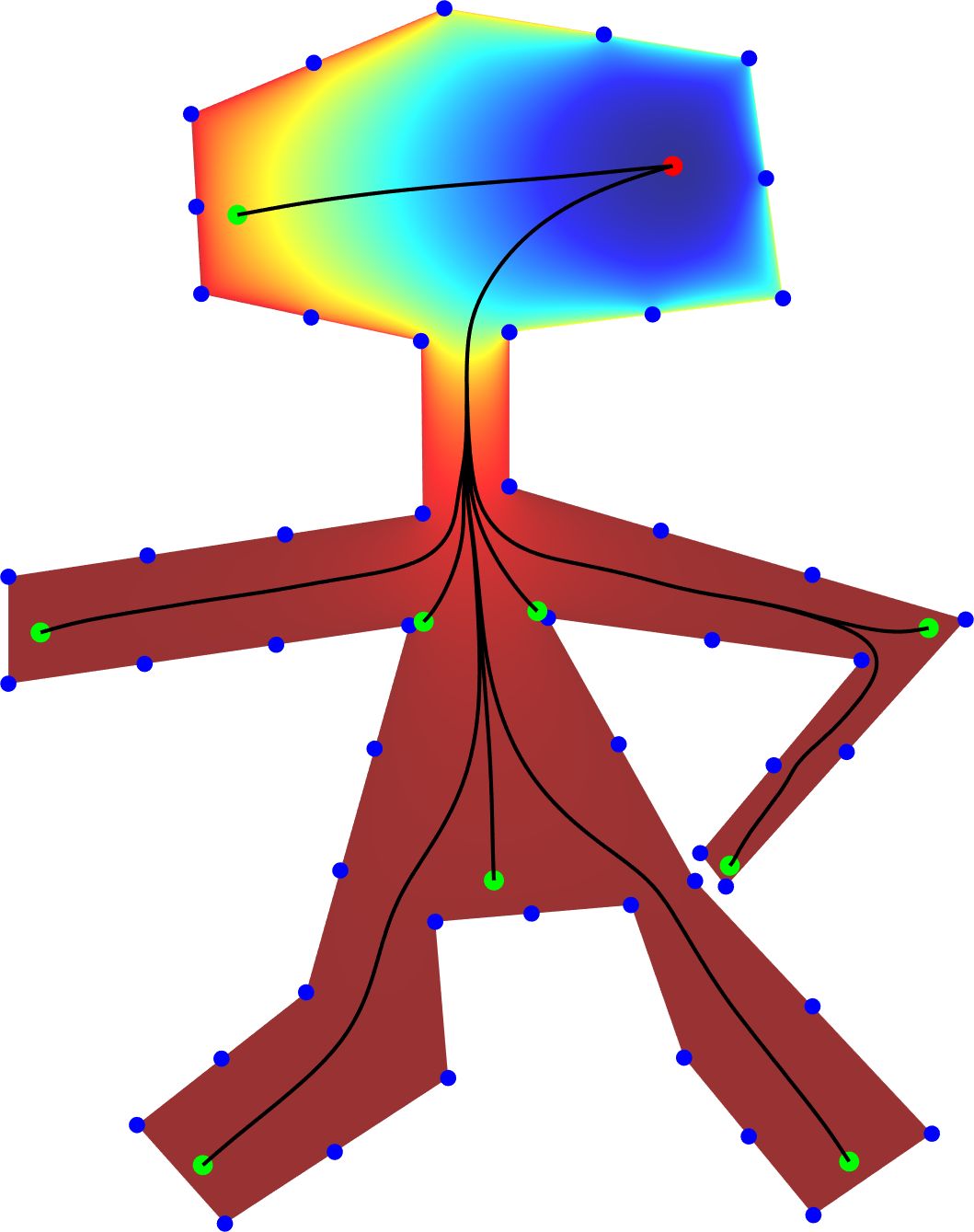}\\ H, $n=50$}\hfill
  \parbox{.04\textwidth}{\centering\includegraphics[height=2.2in]{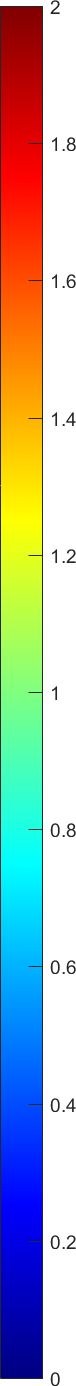}\\ ~}\par
  \caption{Same as Figure~\ref{fig:convex-KL-H} for a non-convex polygonal domain. Convergence to the ``continuous'' paths of Figure~\ref{fig:grad-descent} is more rapid for the KL distance than for the H distance.}
  \label{fig:concave-KL-H}
\end{figure}

\subsection{Examples}

Figures~\ref{fig:convex-KL-H} and~\ref{fig:concave-KL-H} show what happens to the paths shown in Figure~\ref{fig:grad-descent} when the reduced $f$-divergence distance~\eqref{eq:df-discrete} is used instead of the original continuous distance~\eqref{eq:df}. The coordinates are reduced by uniformly partitioning the boundary of the domain. For polygonal boundaries (as in Figure~\ref{fig:concave-KL-H}), it seems natural to partition according to the polygon edges, namely at least one reduced coordinate per edge. Long edges are further partitioned uniformly until the partition length is less than some threshold. Since reduced coordinates lose the invariance (to $f$) property, different paths are obtained for the reduced KL and H distances. These are typically not as natural as the original paths, especially when $n$ is small. Obviously, for very large $n$, the resulting paths approach the invariant continuous case of Figure~\ref{fig:grad-descent}.


\section{Computing Reduced Coordinates}

In practice, computing the reduced coordinates requires the solution of linear Laplace equations on a dense triangulation of the domain, resulting in the reduced coordinates for all $k$ points of the triangulation. Using $n$ reduced coordinates implies $n$ linear systems, all having the same $k\times k$ matrix, but with different right hand sides (corresponding to the boundary conditions),
\begin{equation}\label{eq:rhs}
  A\Phi_j = b_j, \qquad j=1,\dots,n.
\end{equation}
The matrix $A$ is the common matrix, $\Phi_j$ is the unknown $k$-vector of the $j$-th reduced coordinates, and $b_j$ is the boundary condition vector corresponding to the $j$-th coordinate, the binary indicator of the segment $[t_j,t_{j+1}]$ of points on the boundary loop,
\begin{equation}\label{eq:binary-indicator}
  b_{jr} = \begin{cases}
             1, & \text{if $r\in[t_j,t_{j+1}]$,}\\
             0, & \text{otherwise.}
           \end{cases}
\end{equation}
The matrix $A$ is the standard Laplacian operator on the triangulation, a sparse symmetric (positive definite) matrix with the so-called cotangent weights~\cite{Pinkall:1993:CDM} corresponding to edges of the triangulation, which are always positive if a constrained Delaunay triangulation of the domain~\cite{deBerg:2008:CGA,Shewchuk:1996:TEA} is used. Since all $n$ linear systems share the common matrix $A$, they may be solved efficiently by pre-factoring $A$ and performing back-substitution for the different $b_j$~\cite{Press:2007:NRI}.


\section{Discrete Routing Graph} \label{section:discrete}

We now show how to make path-planning using reduced $f$-divergence distance even more practical. This means allowing the definition of a finite and reasonably sized set of $m$ \emph{sites} $S$ to be used in the domain $\Omega$, where the path planner moves only along (the straight line) edges of a graph connecting these sites. To mimic the gradient-descent path in this discrete world, a network (graph) is defined on $S$, such that~\eqref{eq:neighbours} -- the discrete analog of~\eqref{eq:grad_df=0} -- holds.

To achieve this, we follow the logic of Bose and Morin~\cite{Bose:2004:ORI} (also used by Ben-Chen et al.~\cite{BenChen:2011:DCO}), who show that the \emph{Delaunay triangulation}~\cite{deBerg:2008:CGA} of $S$ supports greedy routing on the convex hull of $S$ using the simple Euclidean ($L_2$) distance between points in the plane. The reason that the Delaunay triangulation has this property is because it is the dual to the Euclidean \emph{Voronoi diagram} of $S$, namely two sites are connected by an edge iff their two corresponding Voronoi cells share a common edge. The proof that greedy routing works relies on the fact that the Euclidean distance is a metric, and its Voronoi cells are convex polygons. Since the Voronoi diagram of $S$ using a reduced $f$-divergence distance is more complicated, the condition must be modified, requiring the concept of a \emph{local} Voronoi cell of a site in a network:

\begin{definition}[Local Voronoi cell]
Let $G$ be a graph on $S$, which in turn is a set of $m$ sites sampled in $\Omega$. The \emph{local Voronoi cell} of $s\in S$ relative to $G$ is
\[
  LV_G(s) = \{z\in\Omega : \forall r\in N_G(s), d_f(s,z) < d_f(r,z) \},
\]
where $N_G(s)$ is the set of neighbors of $s$ in $G.$ In other words, $LV_G(s)$ is the set of all points in $\Omega$ closer to $s$ than to any of the neighbors of $s$ in $G$.
\end{definition}

The greedy routing property now explicitly guarantees that~\eqref{eq:neighbours} is satisfied.

\begin{definition}[Greedy routing property]
The graph $G=(S,E)$ has the \emph{greedy routing property} if for every site $s\in S$, $LV_G(s)$ does not contain any site other than $s$,
\[
  \forall s,t \in S : t\in LV_G(s) \qquad\text{iff}\qquad t=s.
\]
\end{definition}

Constructing a graph $G$ on $S$ having the greedy routing property is not as straightforward as it seems. It is not sufficient to merely take the dual to the Voronoi diagram of $S$. This is because the local Voronoi cells of the reduced $f$-divergence distance may have irregular structure, including not being connected (with so-called \emph{orphan cells}), and is further made more complicated by the fact that the reduced $f$-divergence distance may be asymmetric. This is especially true when the number of coordinates $n$ is small. At the other extreme, obviously the clique graph (where all sites are connected to each other) has the greedy routing property, but this is a gross overkill, as we would like to have as sparse a graph as possible, with edges as short as possible. A planar graph would be the most desirable.

\begin{figure}
  \parbox{.3\textwidth}{\centering\includegraphics[height=2.2in]{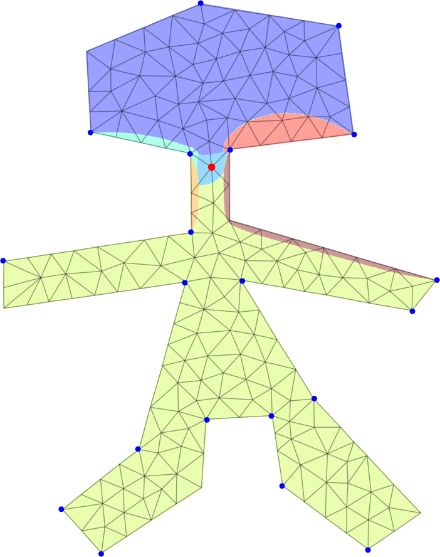}}\hfill
  \parbox{.3\textwidth}{\centering\includegraphics[height=2.2in]{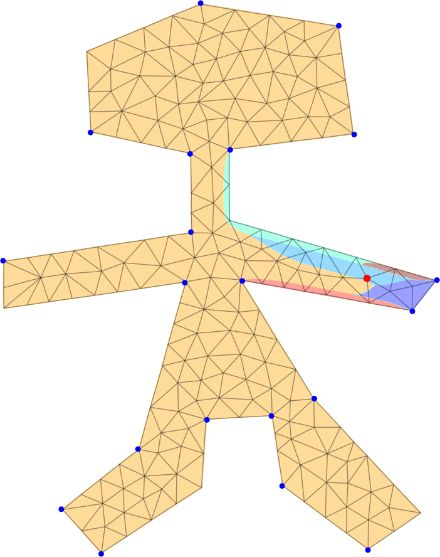}}\hfill
  \parbox{.3\textwidth}{\centering\includegraphics[height=2.2in]{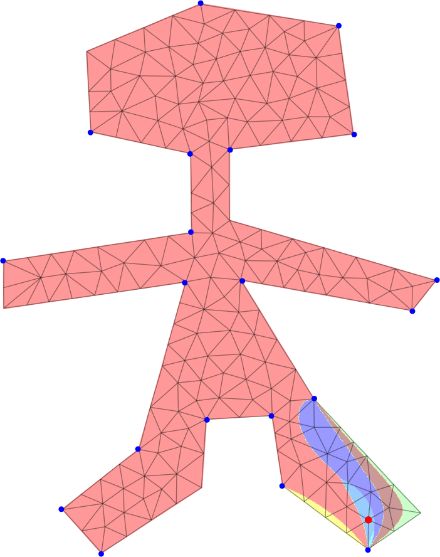}}\\[1ex]
  \parbox{.3\textwidth}{~}\hfill
  \parbox{.3\textwidth}{\centering\includegraphics[height=2.2in]{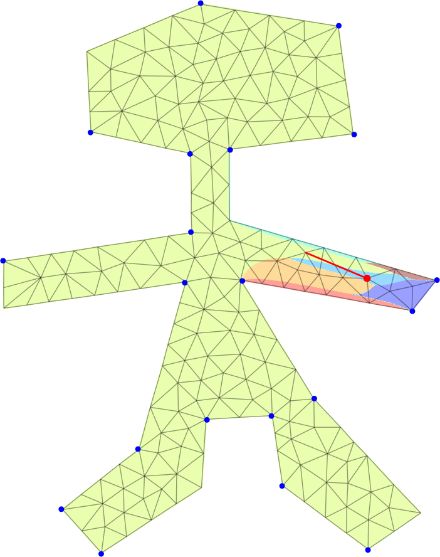}}\hfill
  \parbox{.3\textwidth}{\centering\includegraphics[height=2.2in]{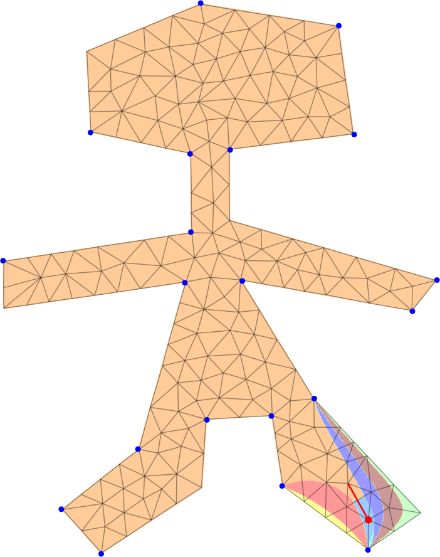}}\par
  \caption{Construction of a greedy routing graph for the KL distance function with $n=20$ reduced coordinates and $m=200$ sites sampled from a simply-connected domain. \emph{Top:} Delaunay triangulation of the sites and the local Voronoi cells of select red sites (colored in light blue) and its neighbors. Note that the local Voronoi cell may be disconnected. Should the local Voronoi cell contain sites other than the red site, the triangulation is augmented with red edges from the red site to other sites, until the local Voronoi cell contains no other sites. \emph{Bottom:} Augmented triangulations of the Delaunay triangulation above, where the shape is missing if no augmented edges were required. The underlying triangulation of the domain on which the coordinates are computed contains $k=2\times10^5$ points.}
  \label{fig:greedyRoutingGraph-simple}
\end{figure}

So the remaining question is how, given $S$ and $d_f$, to construct a greedy routing graph on $S$. This is done by an incremental algorithm. Starting with the (Euclidean) Delaunay triangulation constrained by a polygonal outline of the domain, this graph is \emph{augmented} with additional edges until it becomes greedy, namely, given a site $s\in S$, edges are added in $G$ between $s$ and other sites, until  $LV_G(s)$ contains only $s$. By definition, each addition of an edge shrinks $LV_G(s)$. Obviously this procedure eventually terminates when the worst case of $s$ being connected to all other sites is obtained. A good heuristic is to add edges between $s$ and other sites in order of increasing Euclidean distance to $s$. We call the resulting greedy graph the \emph{augmented Delaunay triangulation}. In practice, no Voronoi diagrams are computed, and the only data structure required to support the graph construction algorithm is the $m\times m$ matrix of pairwise $d_f$ distances between the $m$ sites, sampled from the dense underlying triangulation used to pre-compute the coordinates (requiring solutions to the Laplace equation). Note that this matrix is not symmetric if the reduced $f$-divergence distance is not symmetric.


\section{Experimental Results}

\begin{figure}
  \parbox{.3\textwidth}{\centering\includegraphics[height=1.8in]{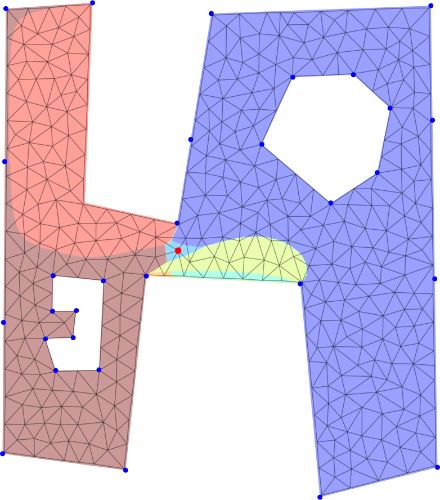}}\hfill
  \parbox{.3\textwidth}{\centering\includegraphics[height=1.8in]{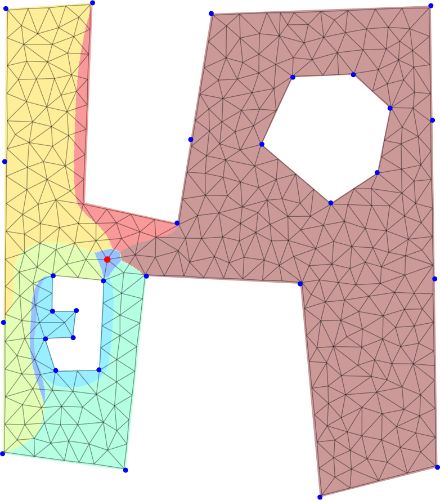}}\hfill
  \parbox{.3\textwidth}{\centering\includegraphics[height=1.8in]{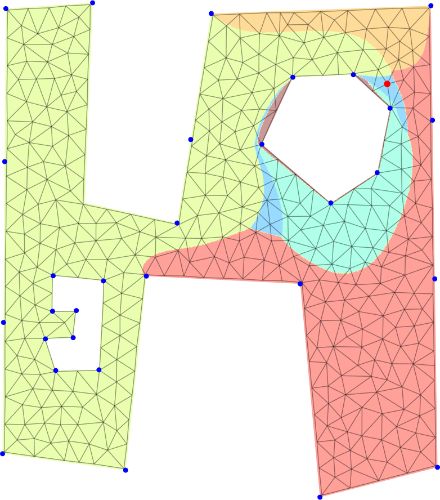}}\\[1ex]
  \parbox{.3\textwidth}{~}\hfill
  \parbox{.3\textwidth}{\centering\includegraphics[height=1.8in]{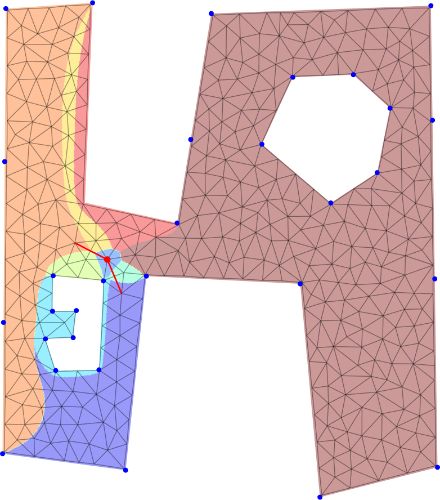}}\hfill
  \parbox{.3\textwidth}{\centering\includegraphics[height=1.8in]{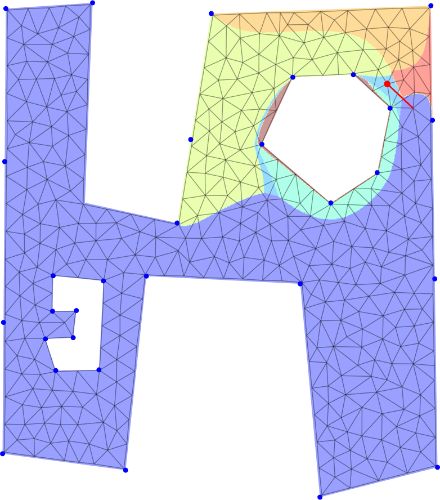}}\par
  \caption{Same as Figure~\ref{fig:greedyRoutingGraph-simple} for a multiply-connected domain with $n=30$ reduced coordinates and $m=400$ sites.}
  \label{fig:greedyRoutingGraph-hole}
\end{figure}

Figure~\ref{fig:greedyRoutingGraph-simple} and~\ref{fig:greedyRoutingGraph-hole} show parts of the greedy routing graphs generated by our algorithm on a few sites sampled in two different domains. The black edges are the initial constrained Delaunay triangulation, and the red edges are those augmented by our algorithm for three select sites to obtain the greedy routing property. Figures~\ref{fig:greedyRoutingGraph-KL-simple} and~\ref{fig:greedyRoutingGraph-KL-hole} show the greedy augmented Delaunay triangulation on three increasing sets of sites with increasing sets of reduced coordinates using a similar coloring of the edges. As before, the boundary polygon was segmented to form coordinates by placing one (blue) sample point at each polygon vertex, and then partitioning each edge to segments of equal lengths until the length of each such segment is below a predefined threshold. The results demonstrate how the Delaunay triangulation is already very close to greedy, requiring only a slight augmentation, for large values of $n$. Figure~\ref{fig:path-KL-H} shows routing trees for three target vertices on the resulting graph ($n=20$, $m=400$) for two different distance functions on the multiply-connected shape.

\begin{figure}
  \parbox[c][1.8in]{.05\textwidth}{\centering \rotatebox[origin=c]{90}{$n=27$}}\hfill
  \parbox{.23\textwidth}{\centering\includegraphics[height=1.7in]{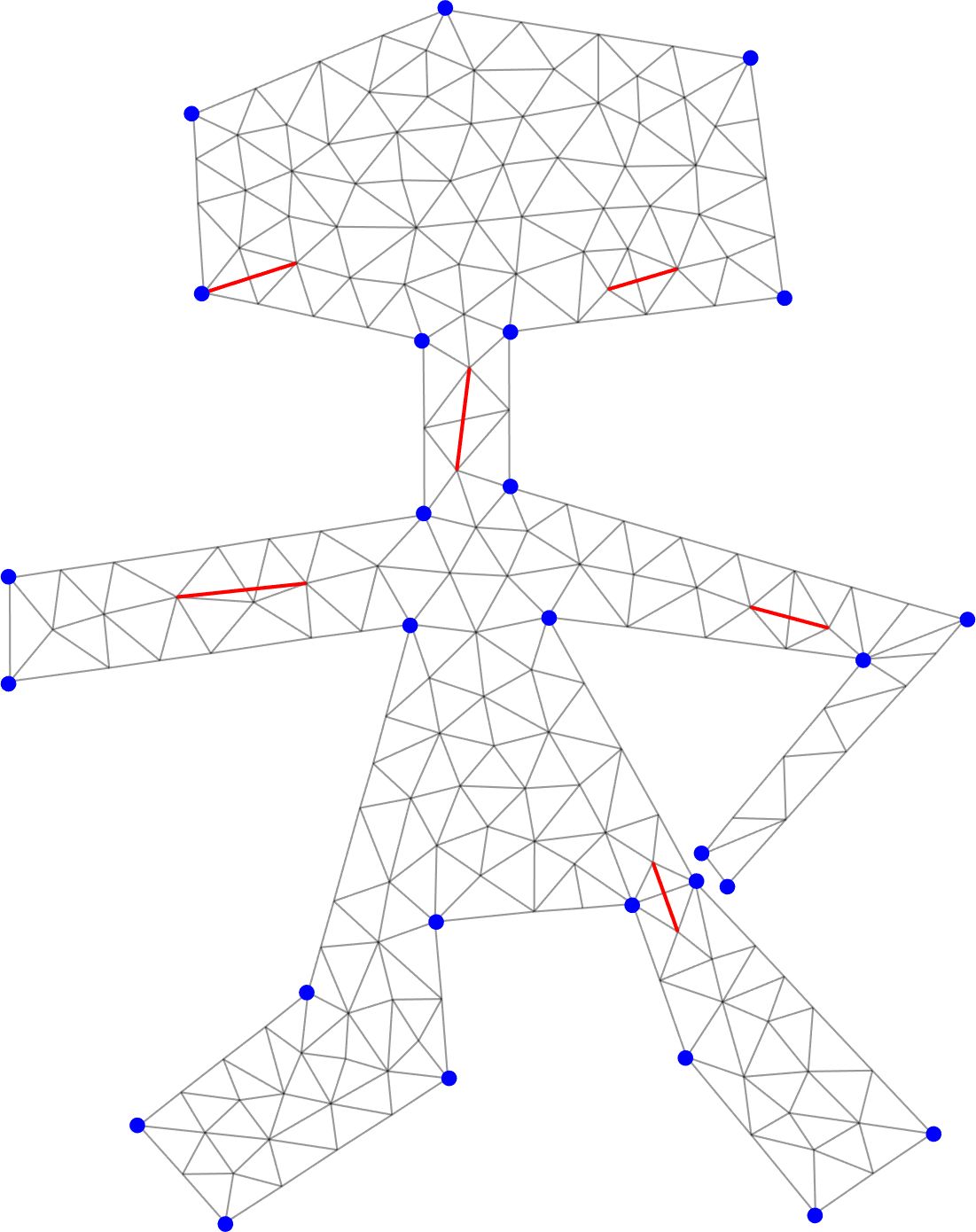}}\hfill
  \parbox{.23\textwidth}{\centering\includegraphics[height=1.7in]{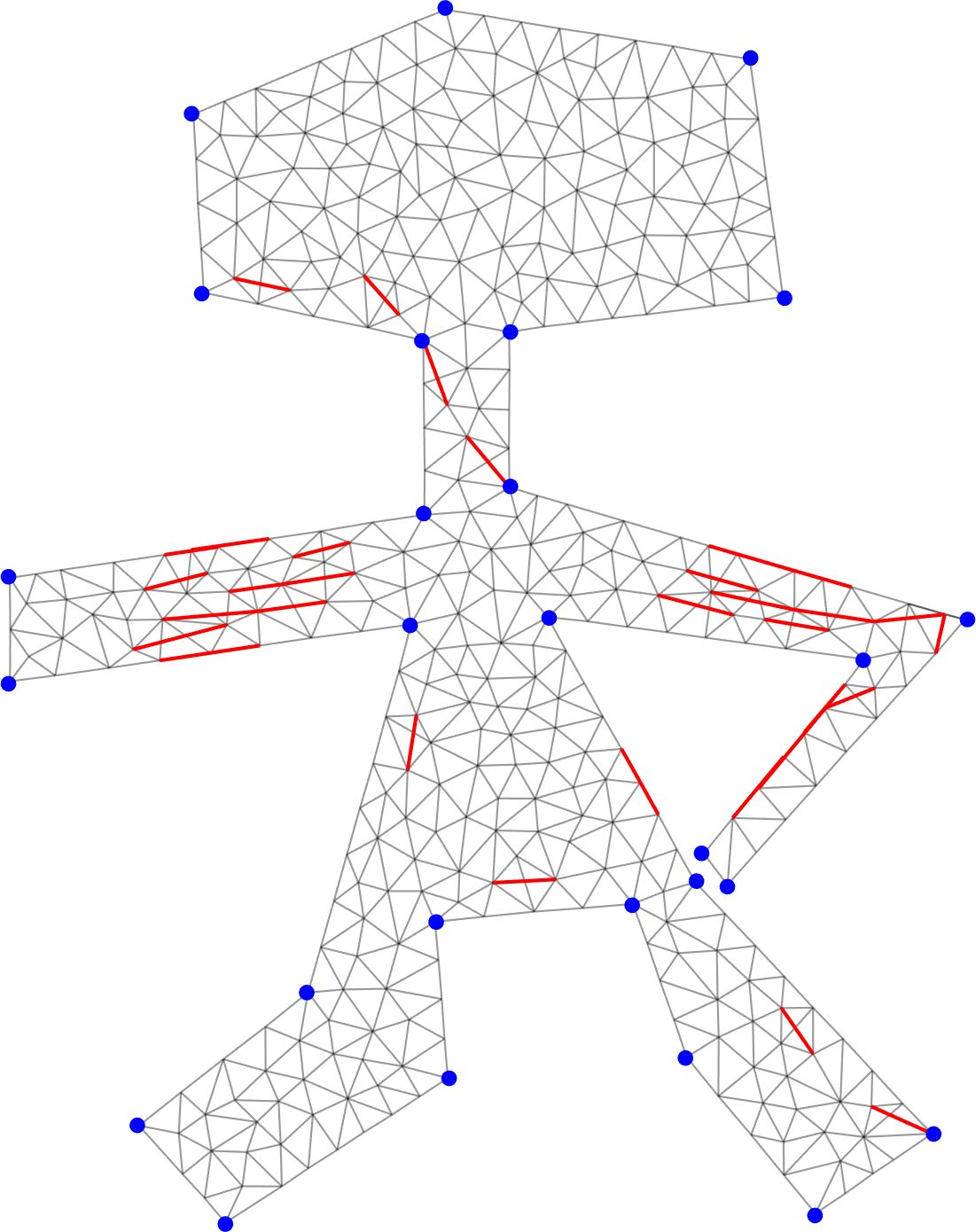}}\hfill
  \parbox{.23\textwidth}{\centering\includegraphics[height=1.7in]{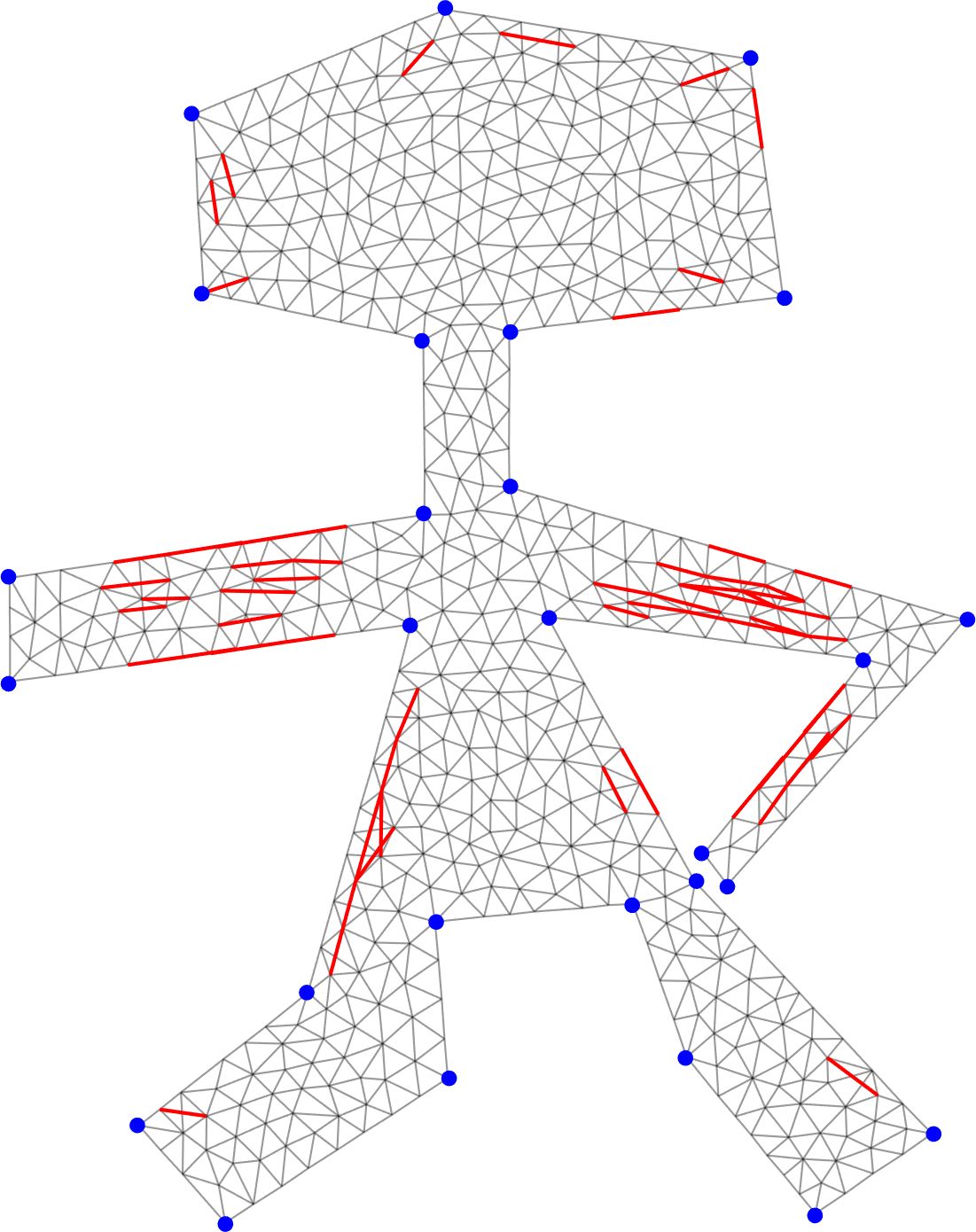}}\hfill
  \parbox{.23\textwidth}{\centering\includegraphics[height=1.7in]{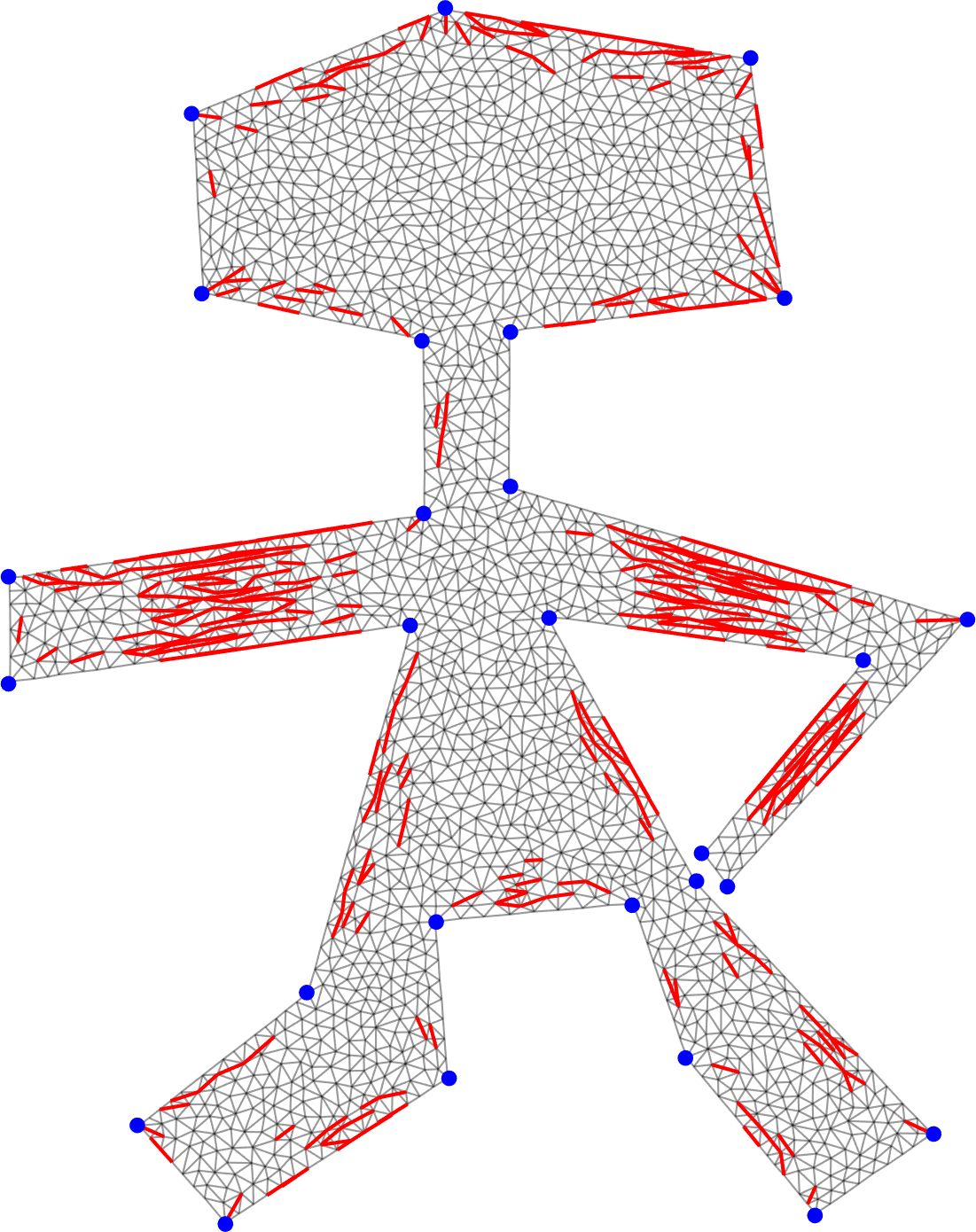}}\\[1ex]
  \parbox[c][1.7in]{.05\textwidth}{\centering \rotatebox[origin=c]{90}{$n=40$}}\hfill
  \parbox{.23\textwidth}{\centering\includegraphics[height=1.7in]{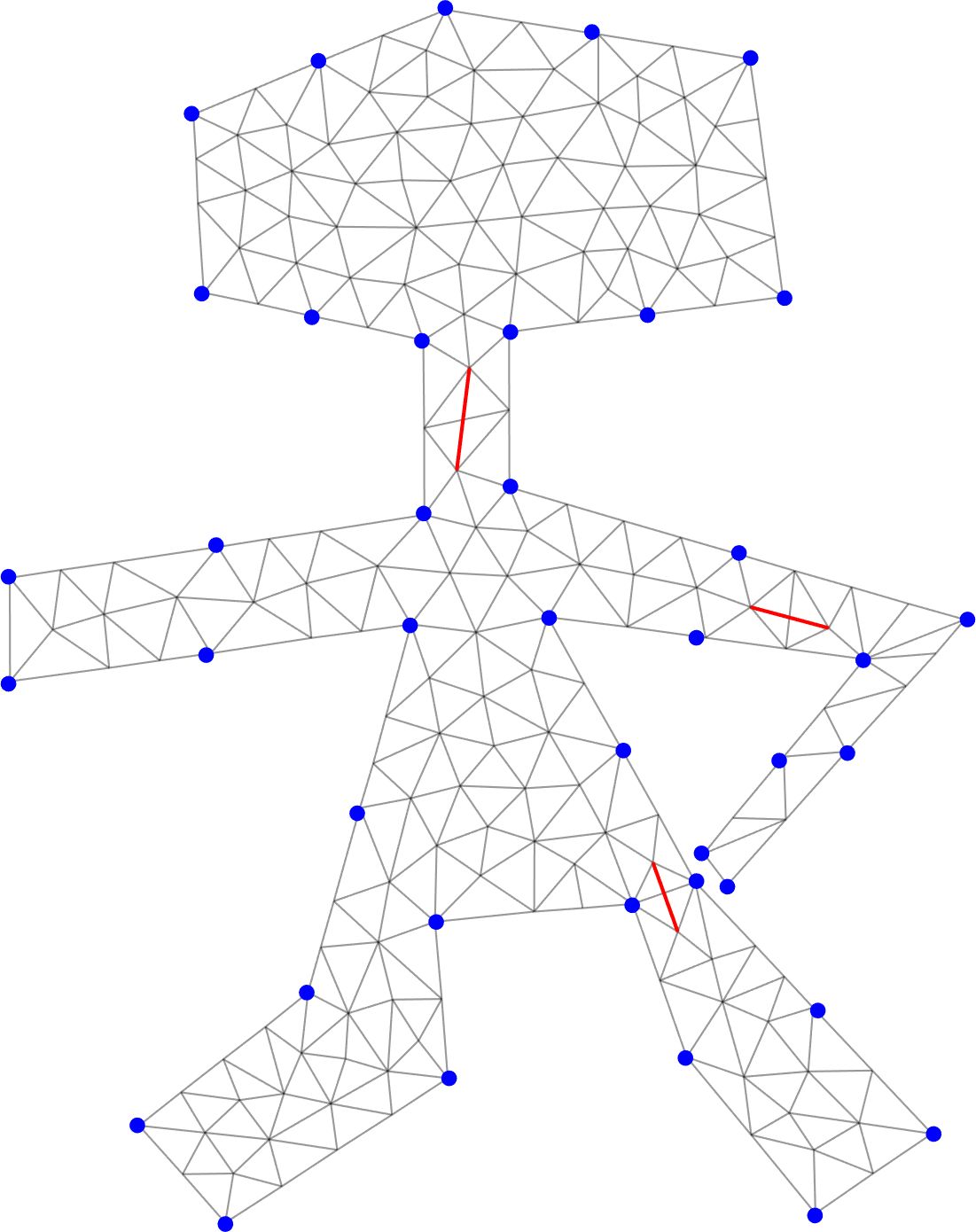}}\hfill
  \parbox{.23\textwidth}{\centering\includegraphics[height=1.7in]{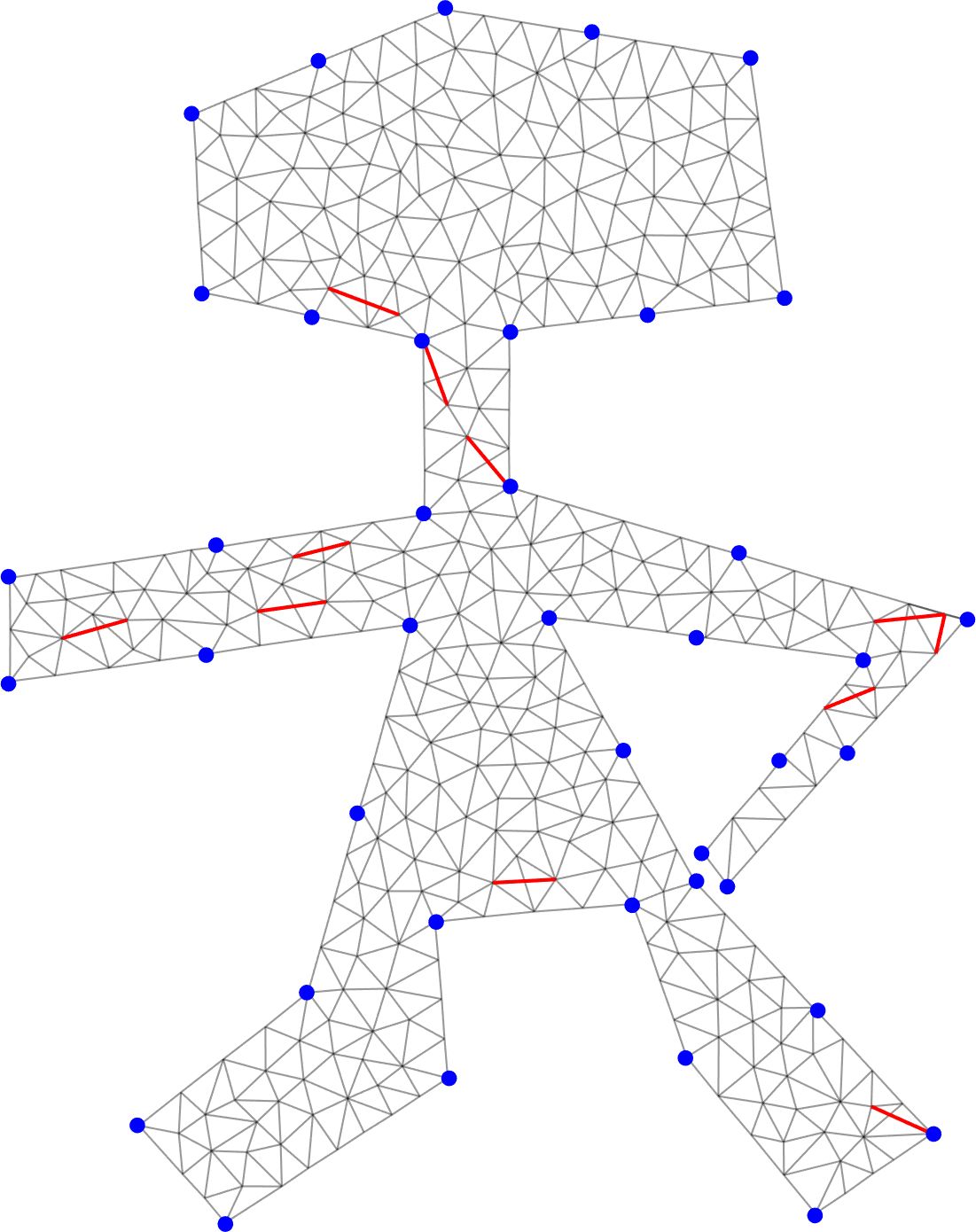}}\hfill
  \parbox{.23\textwidth}{\centering\includegraphics[height=1.7in]{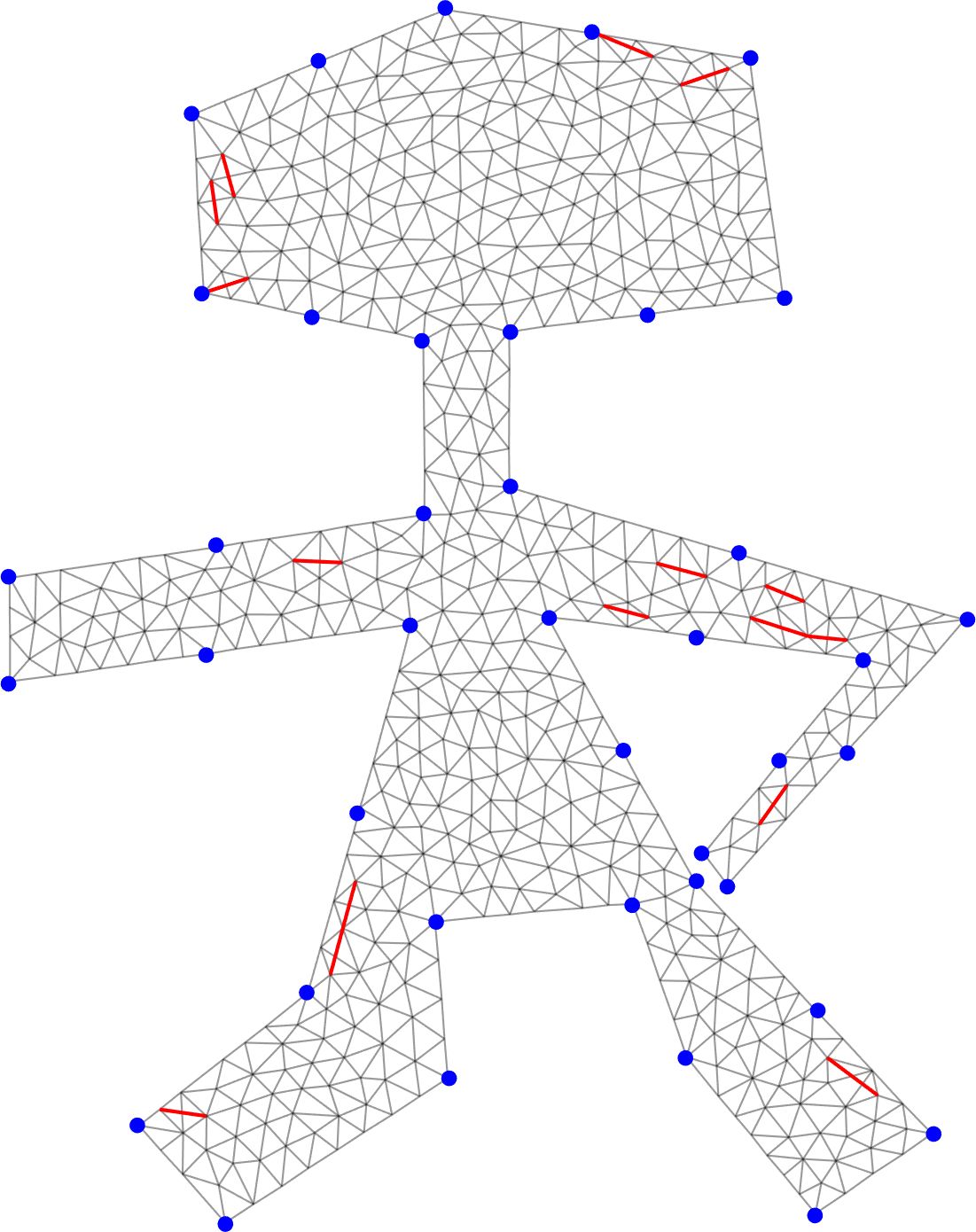}}\hfill
  \parbox{.23\textwidth}{\centering\includegraphics[height=1.7in]{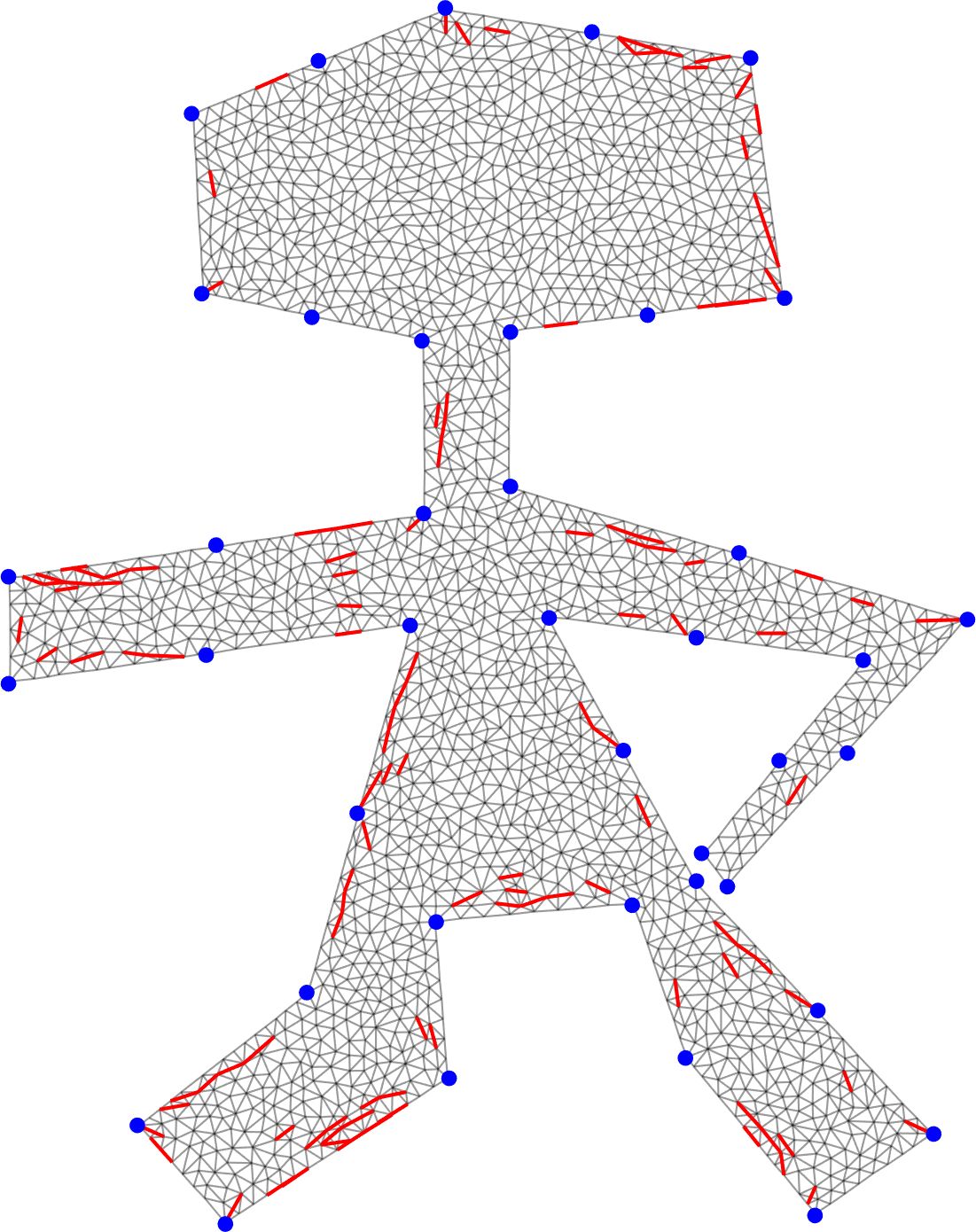}}\\[1ex]
  \parbox[c][1.7in]{.05\textwidth}{\centering \rotatebox[origin=c]{90}{$n=60$}}\hfill
  \parbox{.23\textwidth}{\centering\includegraphics[height=1.7in]{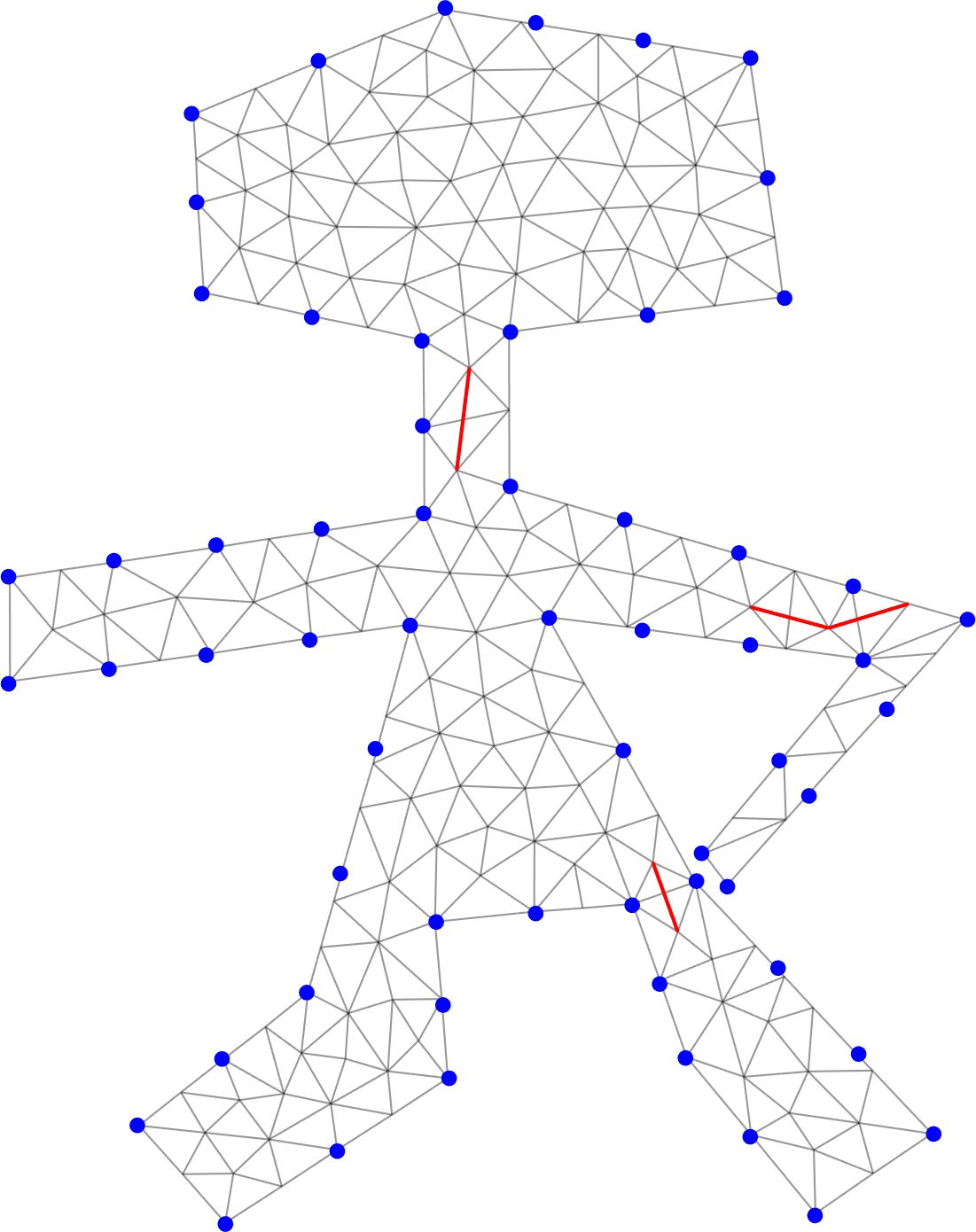}}\hfill
  \parbox{.23\textwidth}{\centering\includegraphics[height=1.7in]{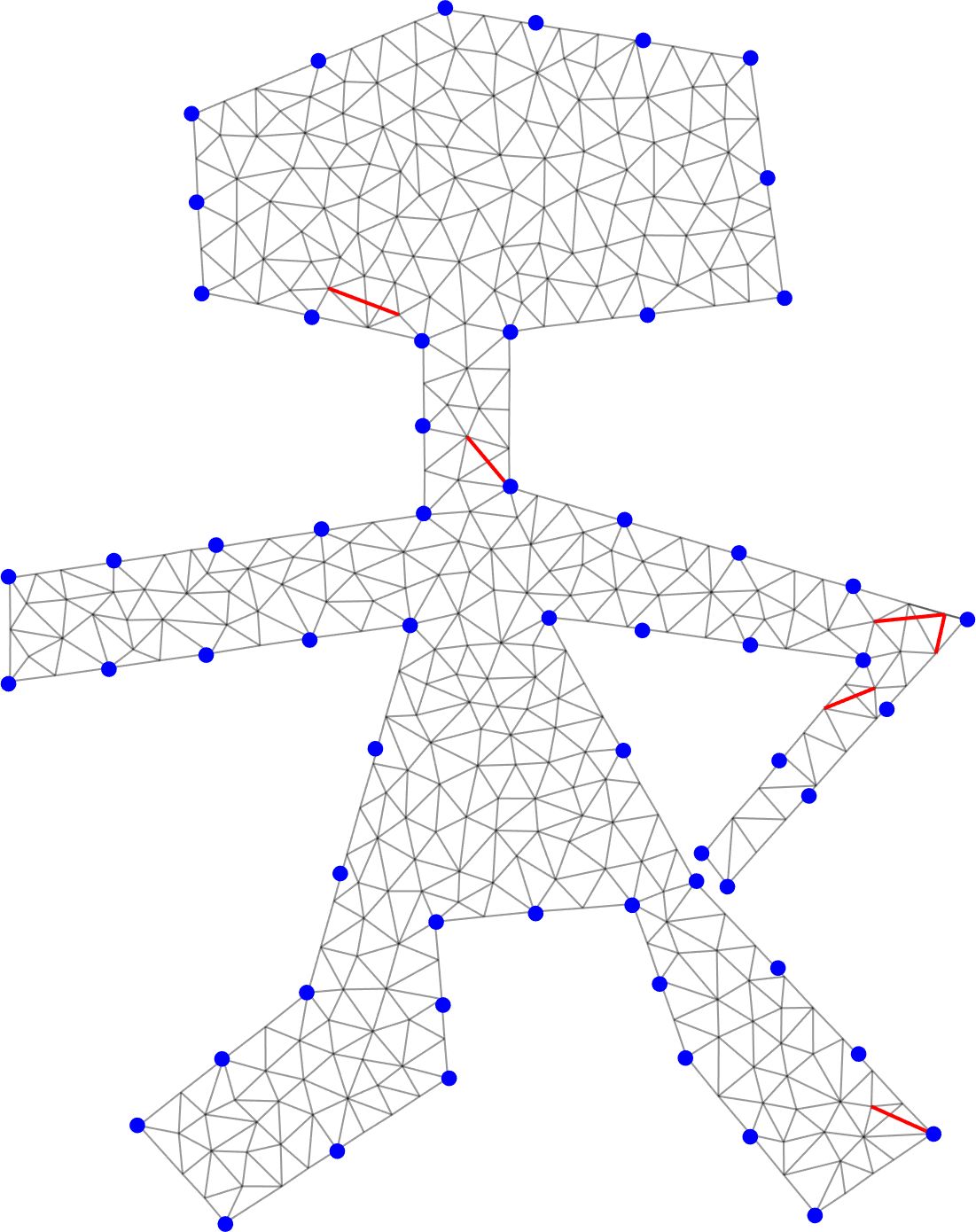}}\hfill
  \parbox{.23\textwidth}{\centering\includegraphics[height=1.7in]{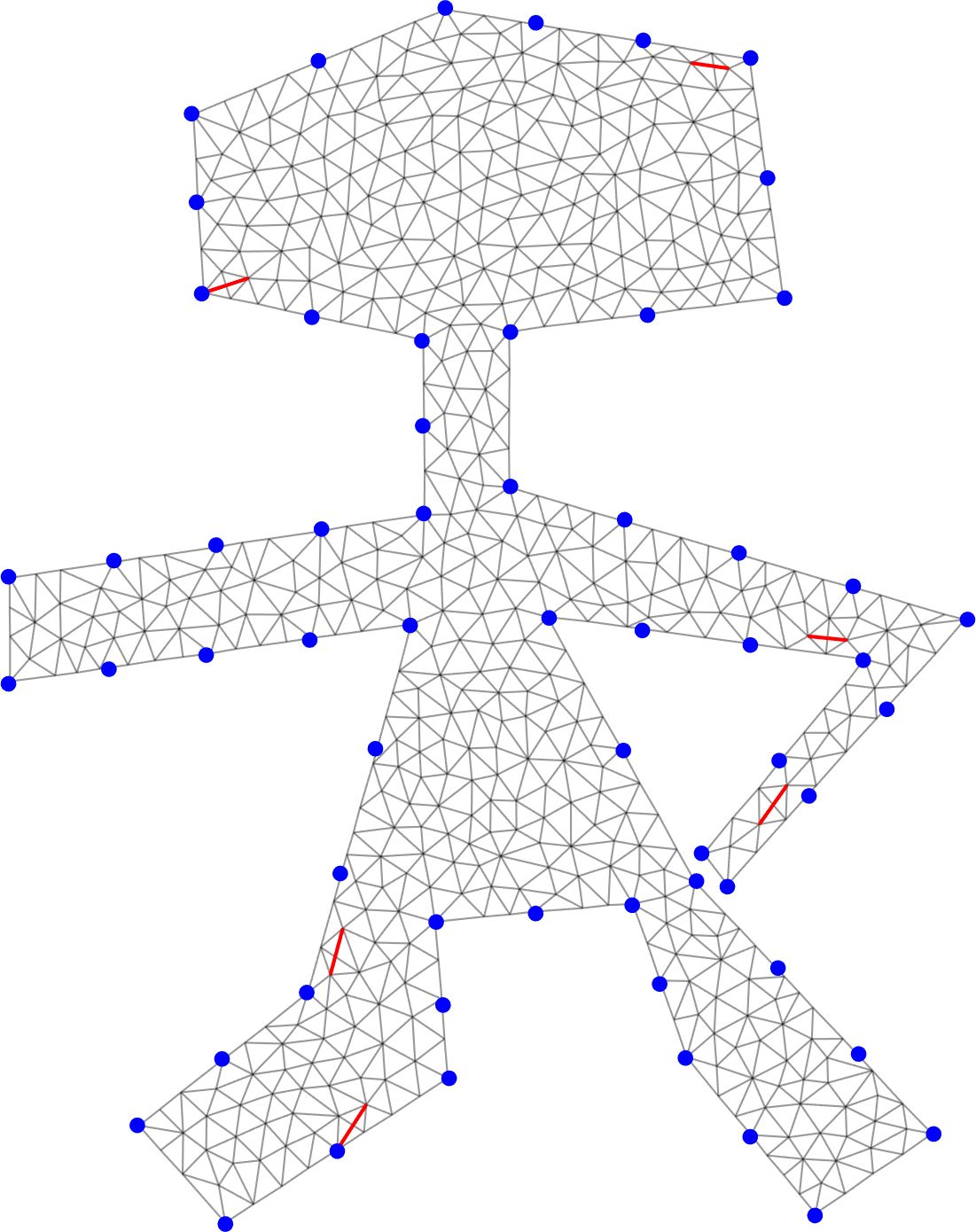}}\hfill
  \parbox{.23\textwidth}{\centering\includegraphics[height=1.7in]{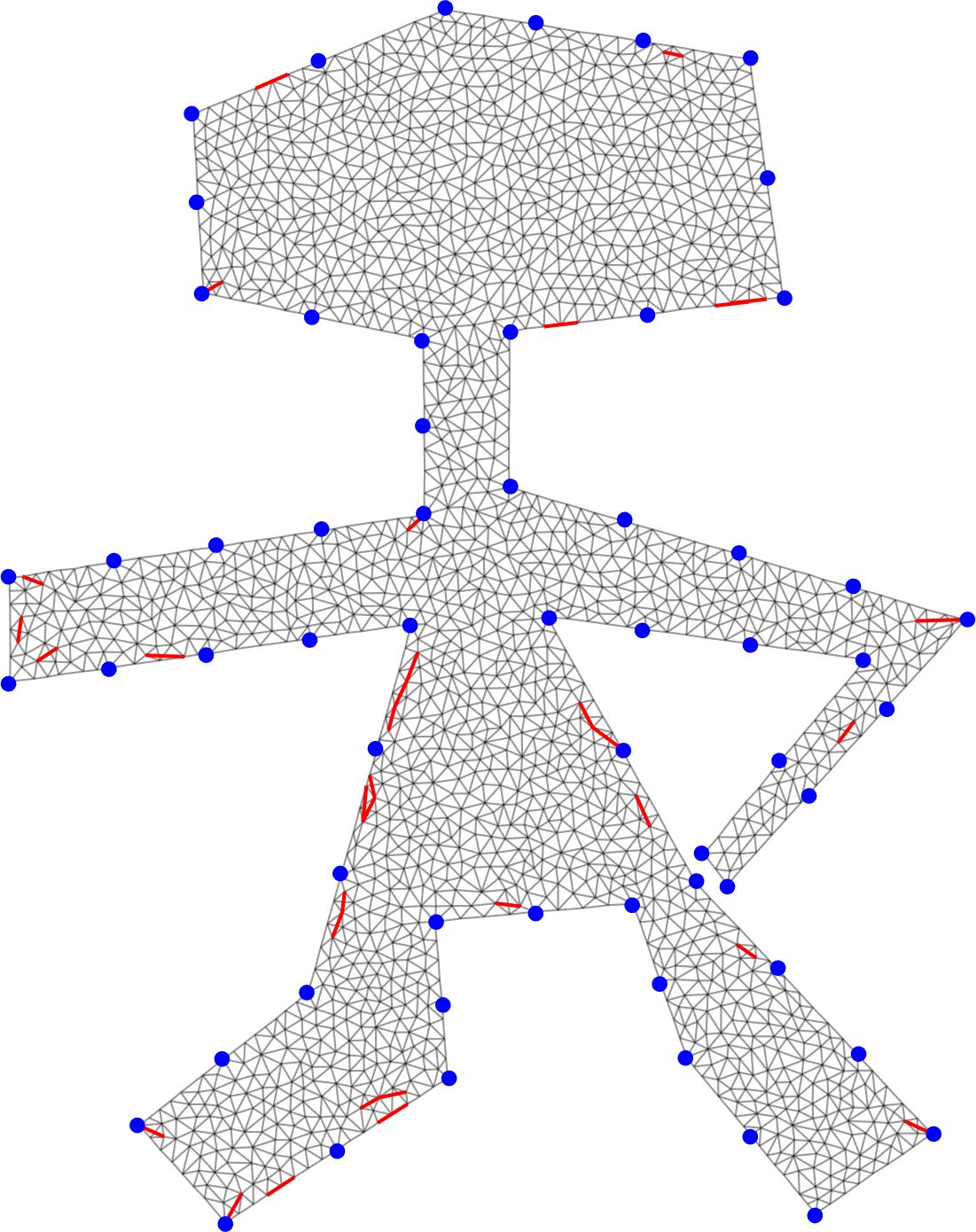}}\\[1ex]
  \parbox{.05\textwidth}{~}\hfill
  \parbox{.23\textwidth}{\centering $m=200$}\hfill
  \parbox{.23\textwidth}{\centering $m=400$}\hfill
  \parbox{.23\textwidth}{\centering $m=600$}\hfill
  \parbox{.23\textwidth}{\centering $m=2000$}\par
  \caption{Discrete greedy routing graph using the KL distance function with $n$ reduced coordinates and $m$ sites on a simply connected domain for different values of $n$ and $m$. Black edges are constrained Delaunay triangulation and red edges are those augmented to obtain the greedy routing property. Note how the number of augmented edges decreases with increasing $n$.}
  \label{fig:greedyRoutingGraph-KL-simple}
\end{figure}

\begin{figure}
  \parbox[c][1.7in]{.05\textwidth}{\centering \rotatebox[origin=c]{90}{$n=40$}}\hfill
  \parbox{.23\textwidth}{\centering\includegraphics[height=1.5in]{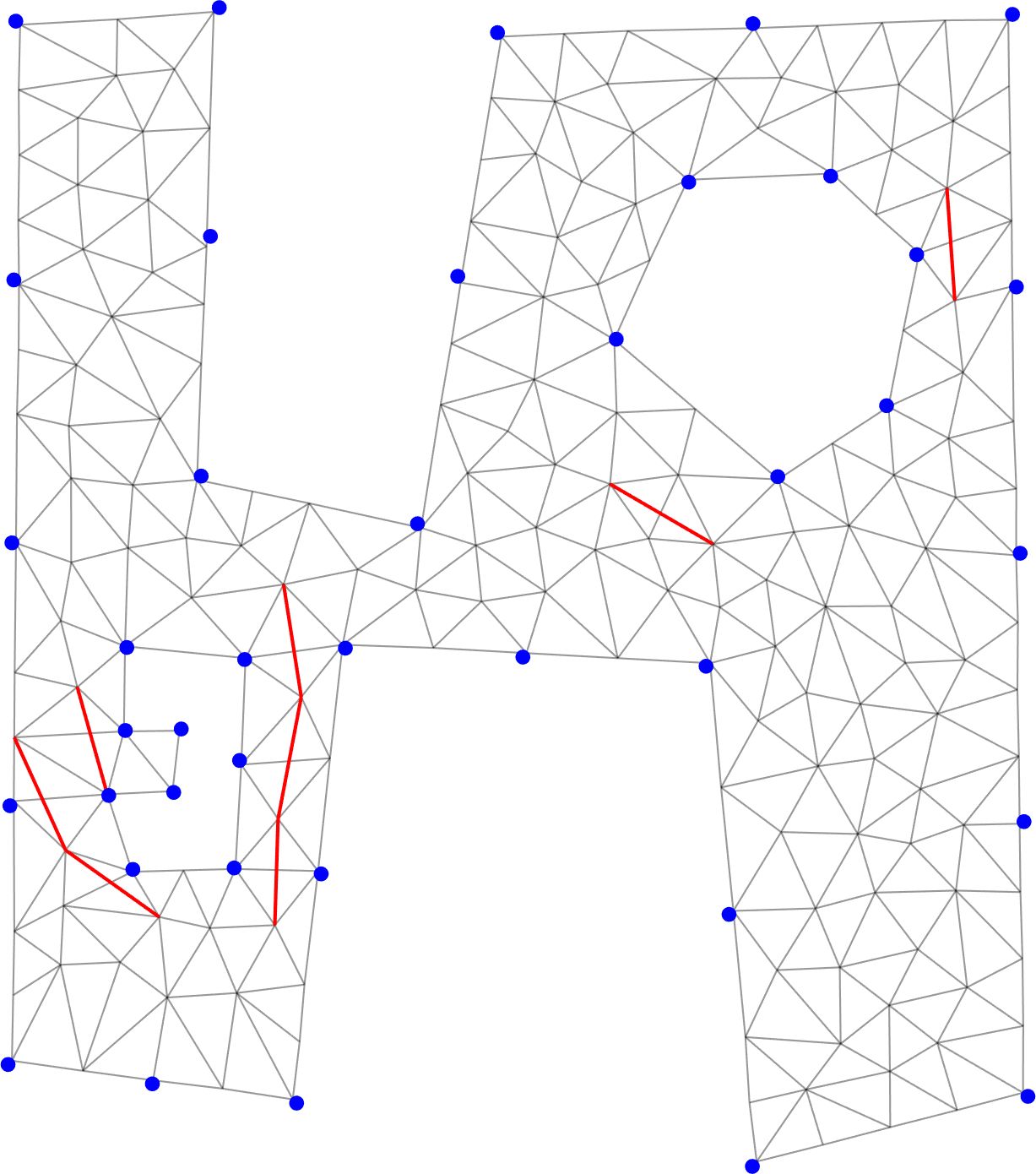}}\hfill
  \parbox{.23\textwidth}{\centering\includegraphics[height=1.5in]{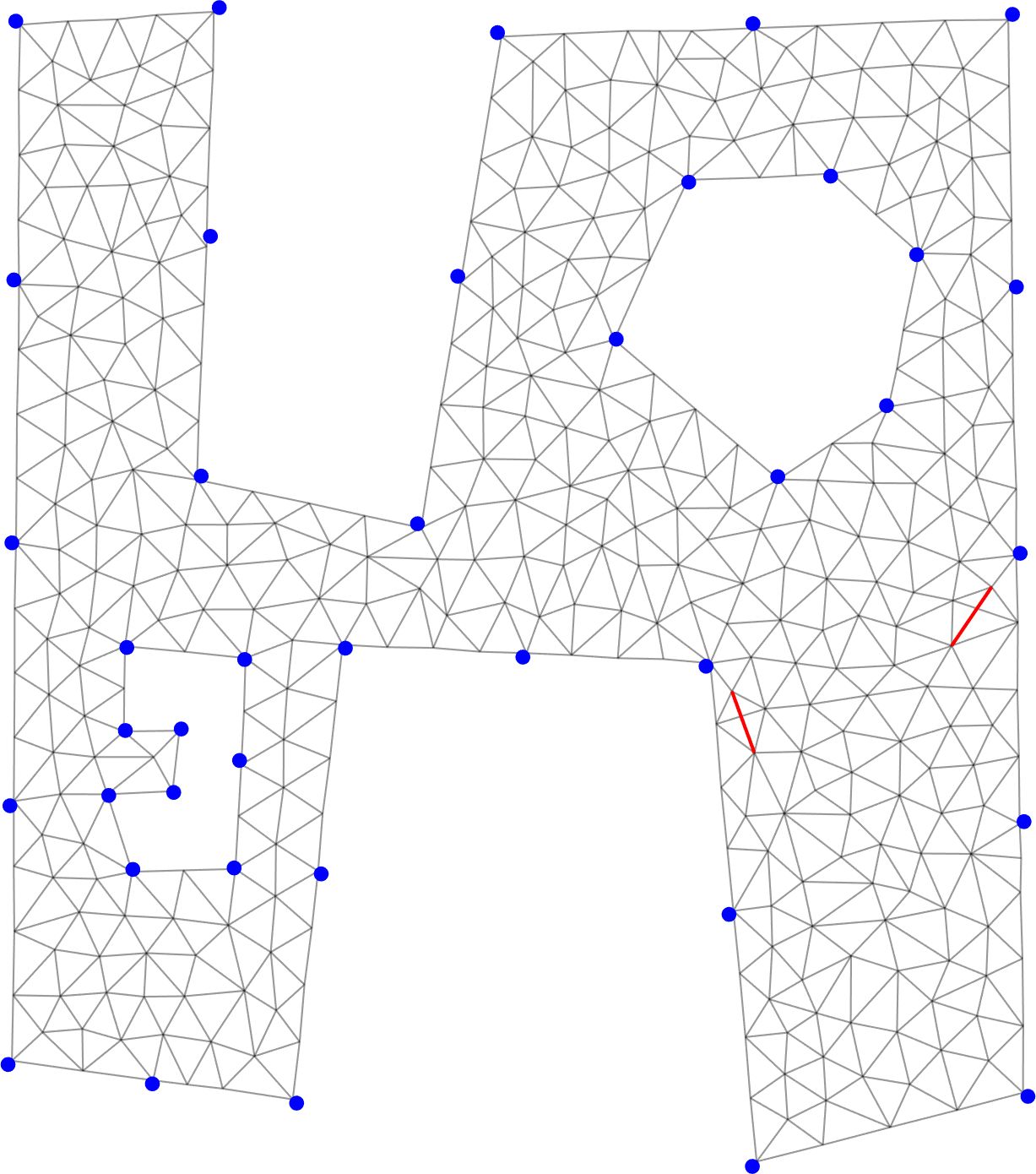}}\hfill
  \parbox{.23\textwidth}{\centering\includegraphics[height=1.5in]{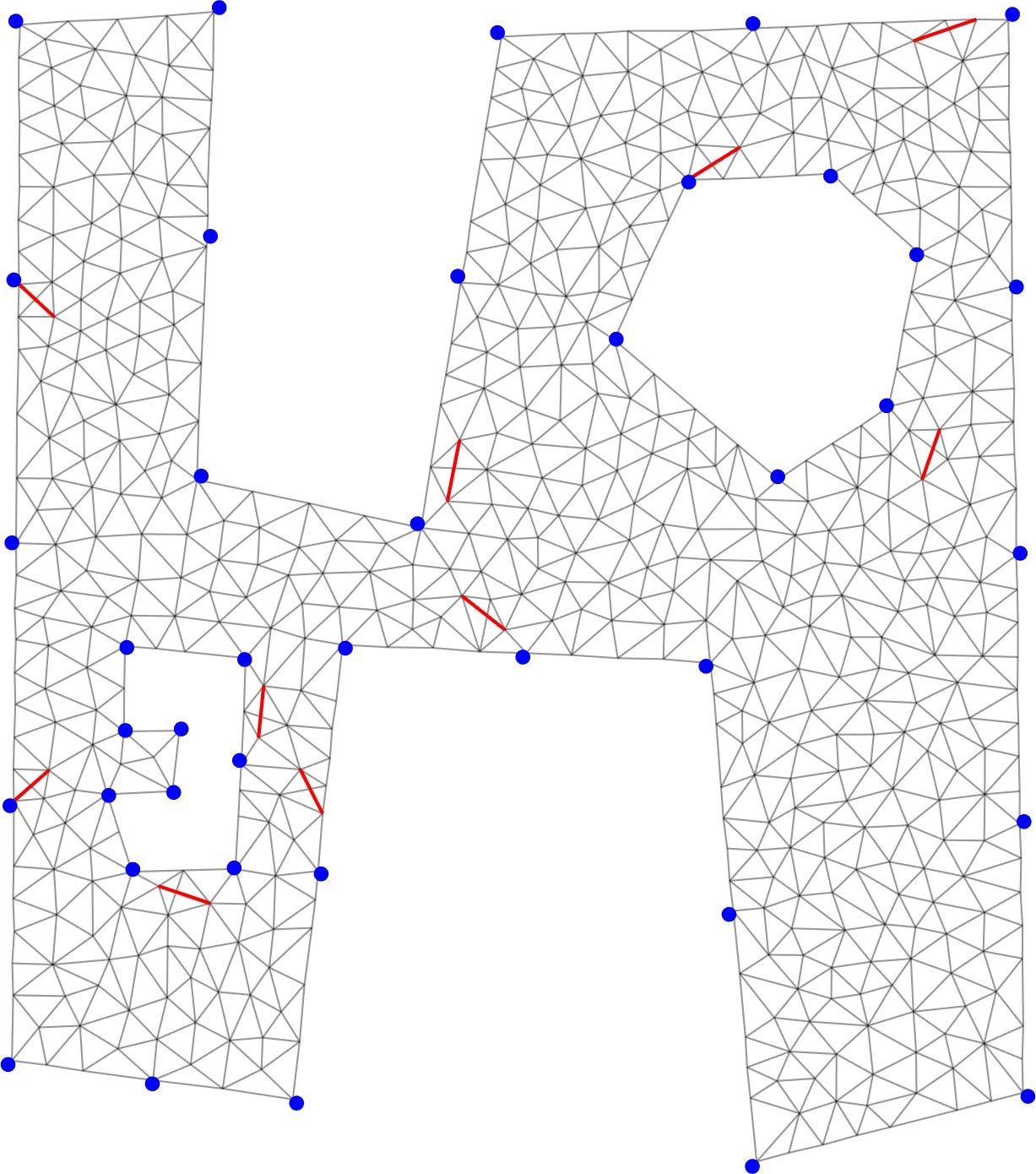}}\hfill
  \parbox{.23\textwidth}{\centering\includegraphics[height=1.5in]{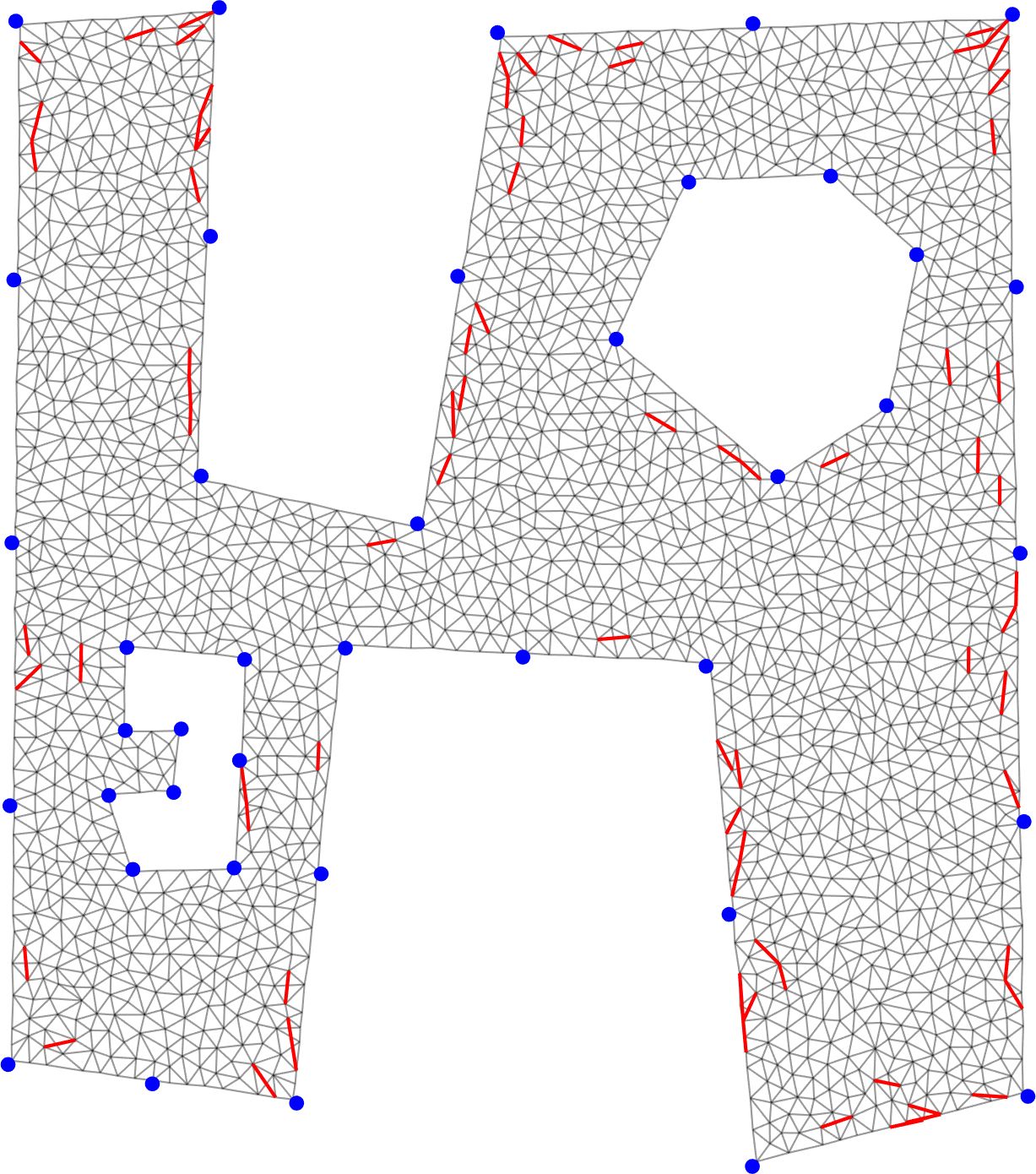}}\par
  \parbox[c][1.7in]{.05\textwidth}{\centering \rotatebox[origin=c]{90}{$n=60$}}\hfill
  \parbox{.23\textwidth}{\centering\includegraphics[height=1.5in]{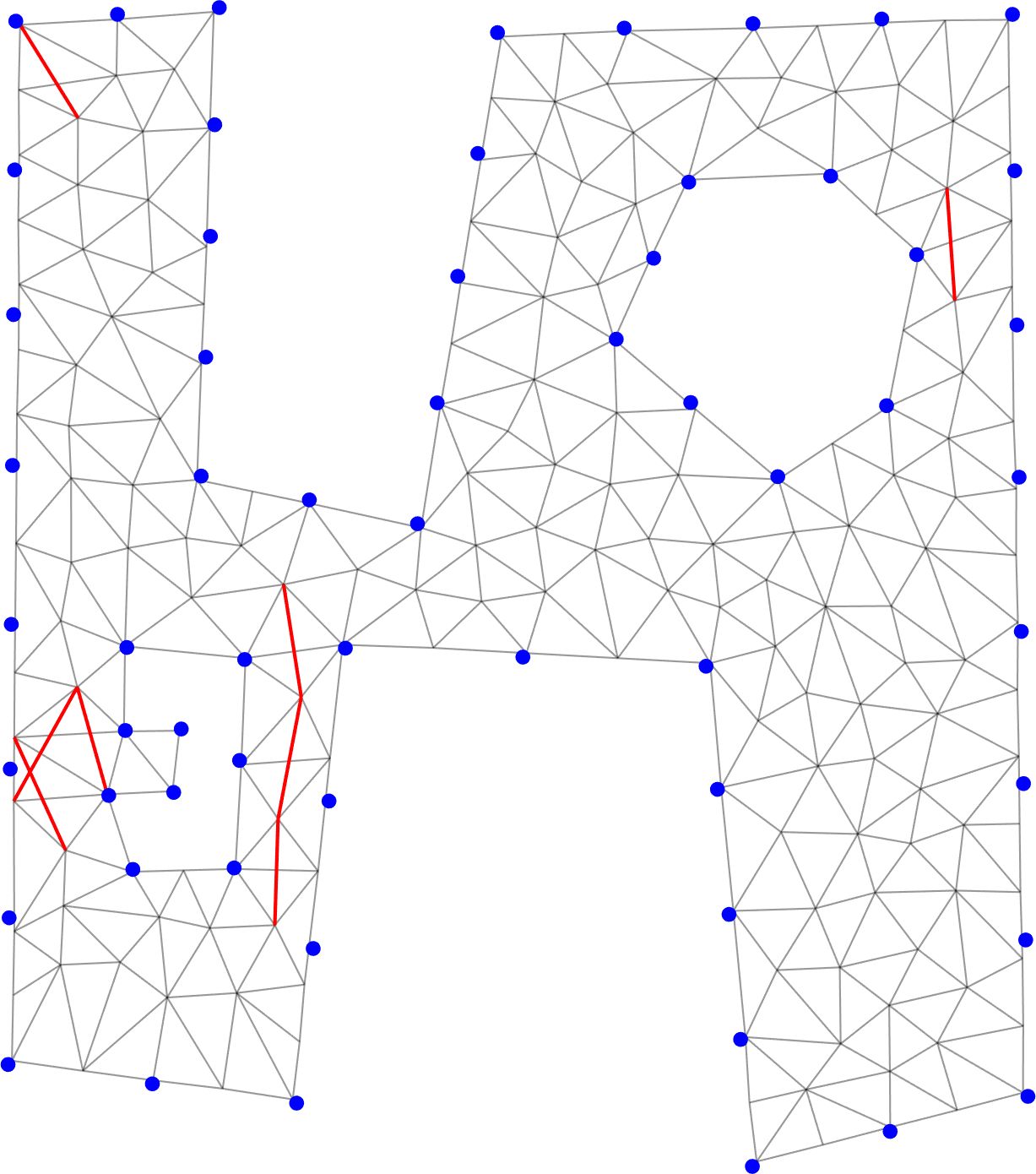}}\hfill
  \parbox{.23\textwidth}{\centering\includegraphics[height=1.5in]{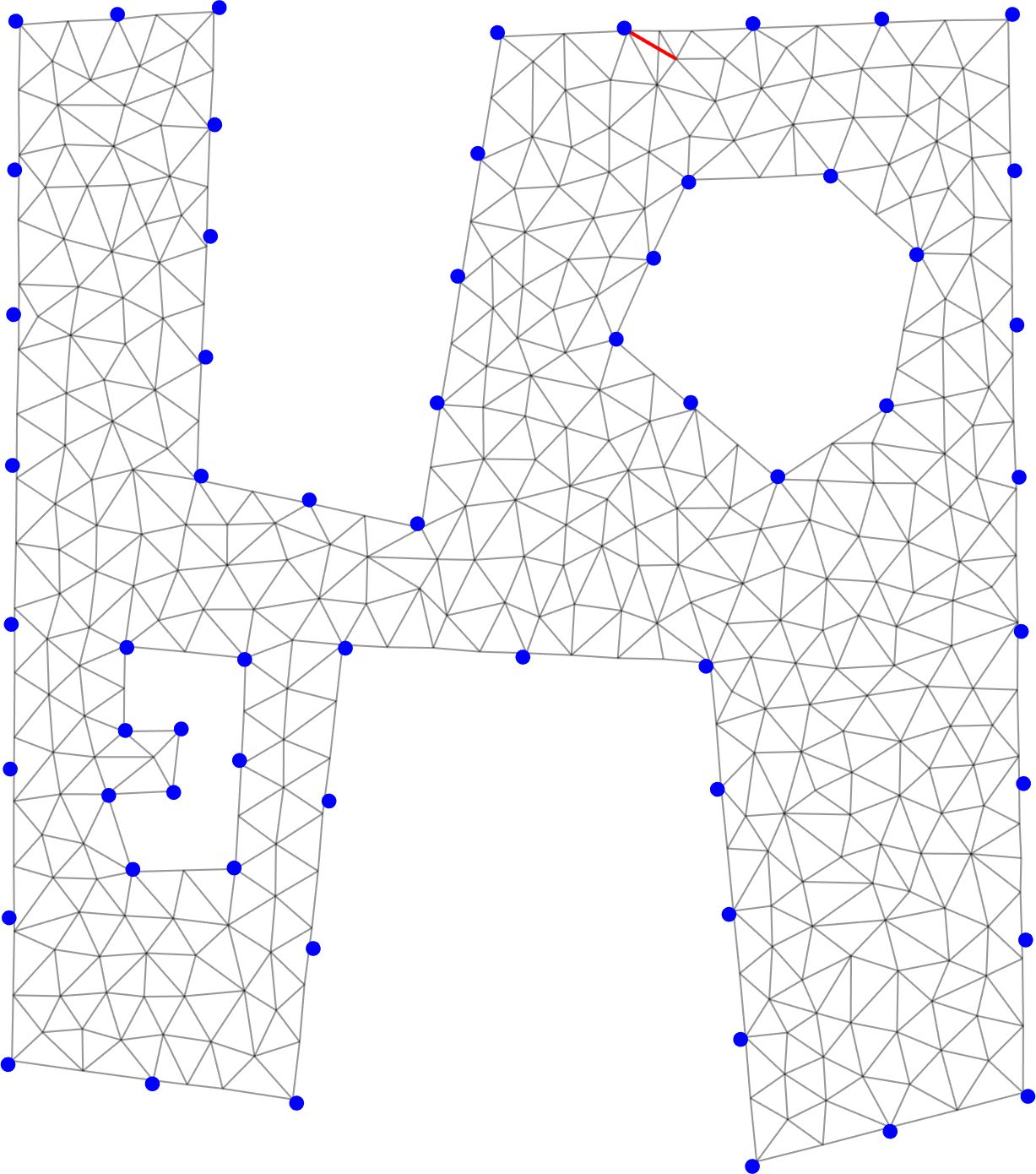}}\hfill
  \parbox{.23\textwidth}{\centering\includegraphics[height=1.5in]{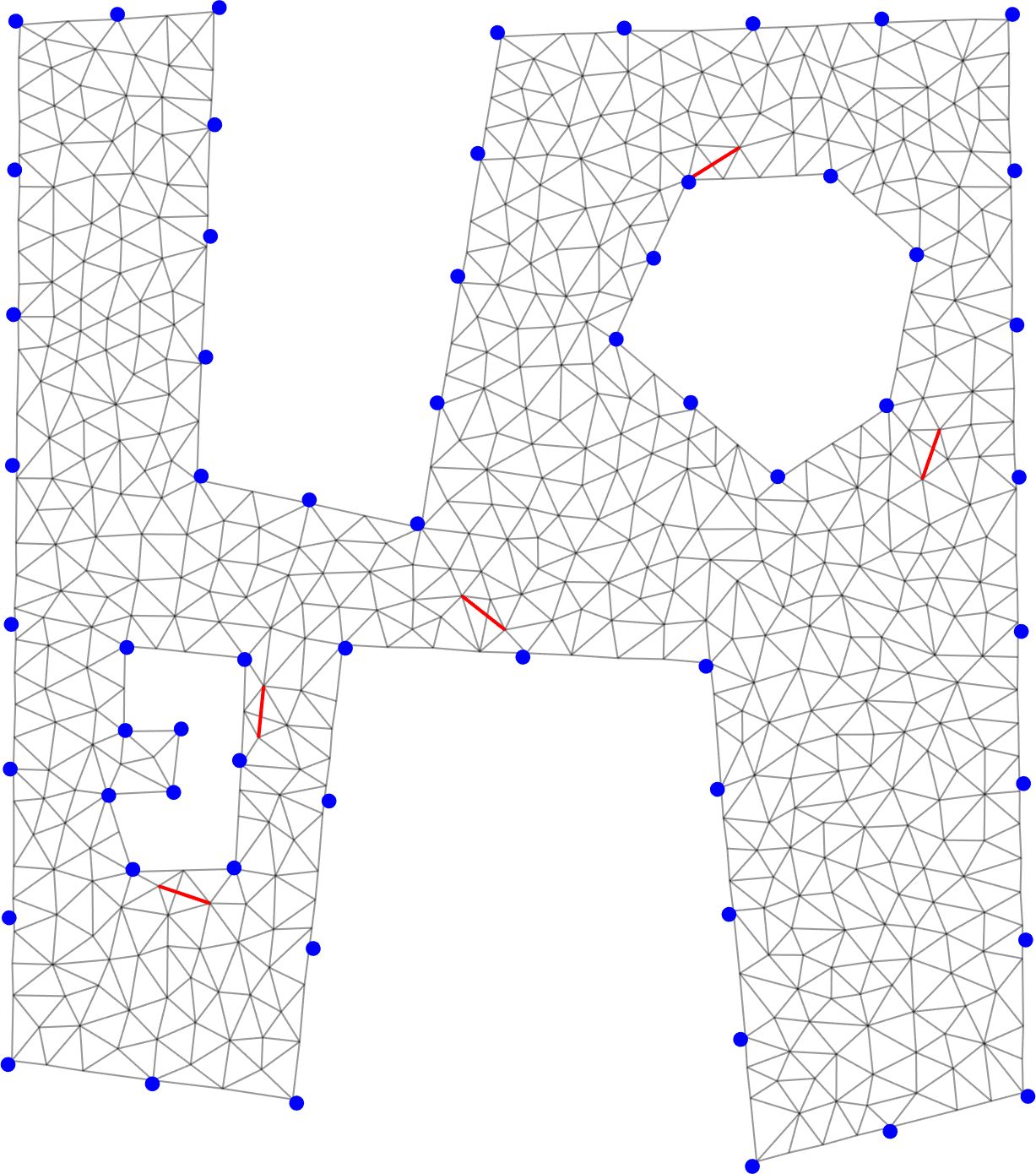}}\hfill
  \parbox{.23\textwidth}{\centering\includegraphics[height=1.5in]{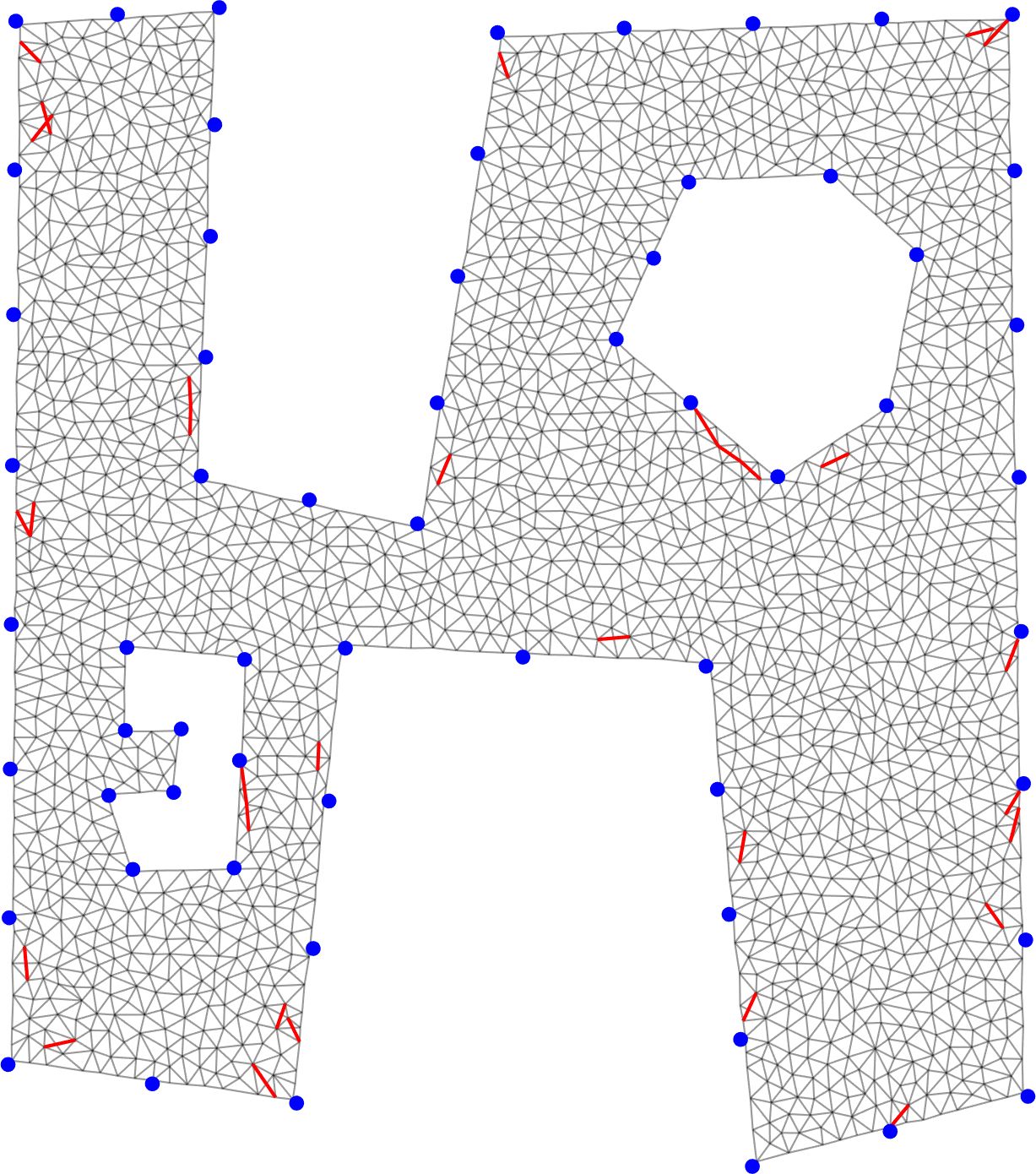}}\par
  \parbox{.05\textwidth}{~}\hfill
  \parbox{.23\textwidth}{\centering $m=200$}\hfill
  \parbox{.23\textwidth}{\centering $m=400$}\hfill
  \parbox{.23\textwidth}{\centering $m=600$}\hfill
  \parbox{.23\textwidth}{\centering $m=2000$}\par
  \caption{Same as Figure~\ref{fig:greedyRoutingGraph-KL-simple} for a multiply-connected domain.}
  \label{fig:greedyRoutingGraph-KL-hole}
\end{figure}

\begin{figure}
  \parbox{.3\textwidth}{\centering\includegraphics[height=2in]{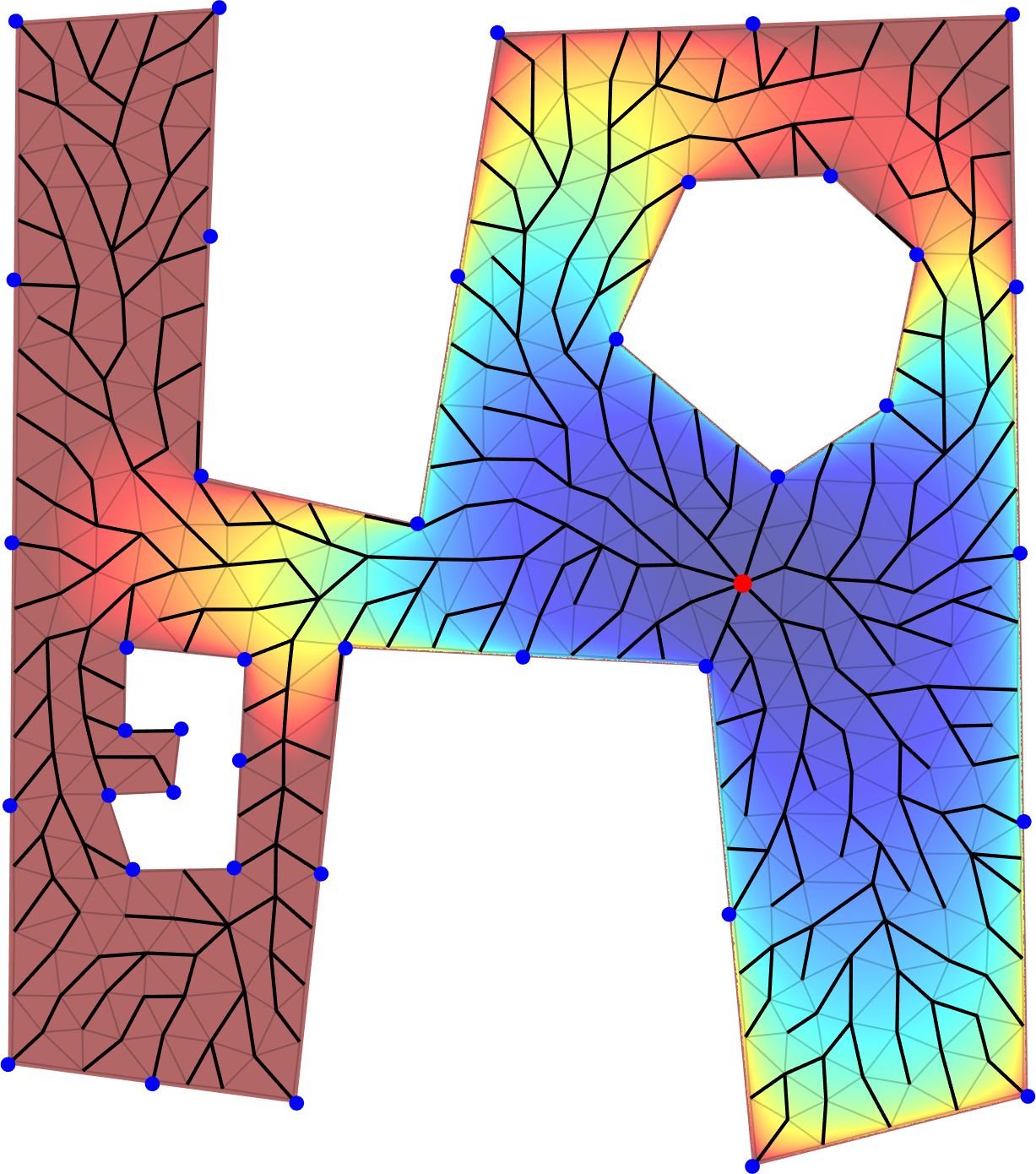}}\hfill
  \parbox{.3\textwidth}{\centering\includegraphics[height=2in]{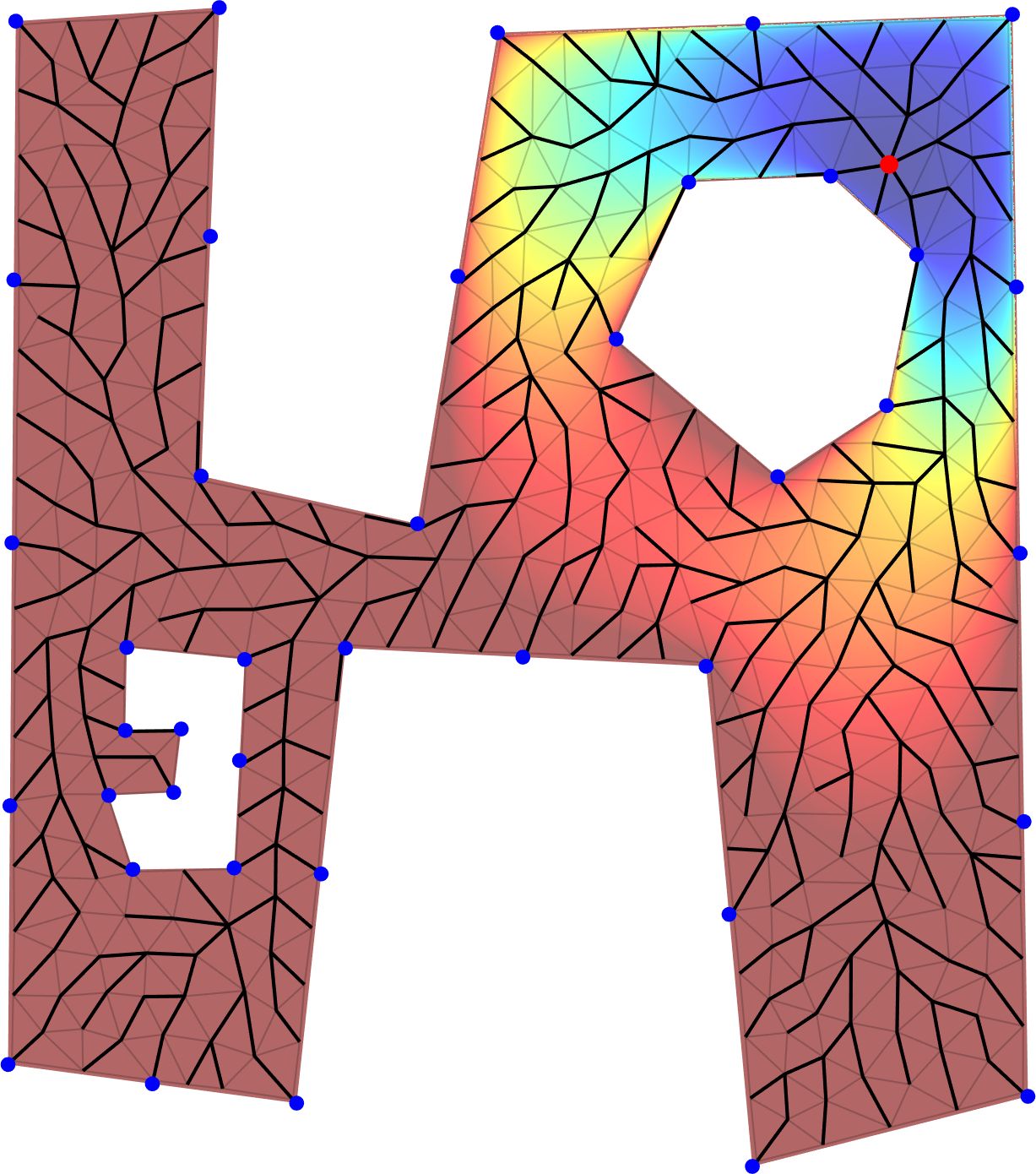}}\hfill
  \parbox{.3\textwidth}{\centering\includegraphics[height=2in]{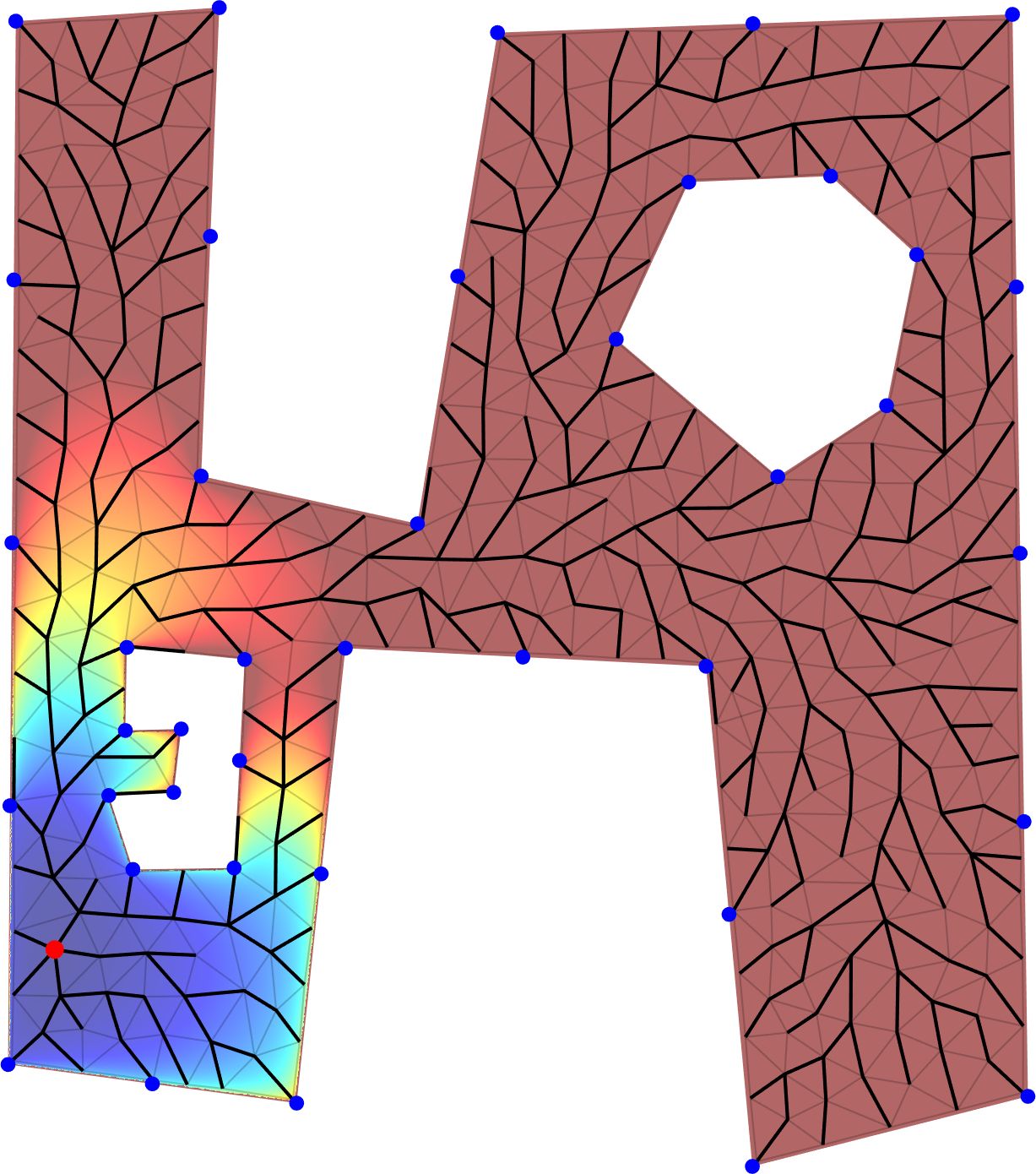}}\hfill
  \parbox{.05\textwidth}{\centering\includegraphics[height=2in]{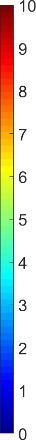}}\\[1ex]
  \parbox{.3\textwidth}{\centering\includegraphics[height=2in]{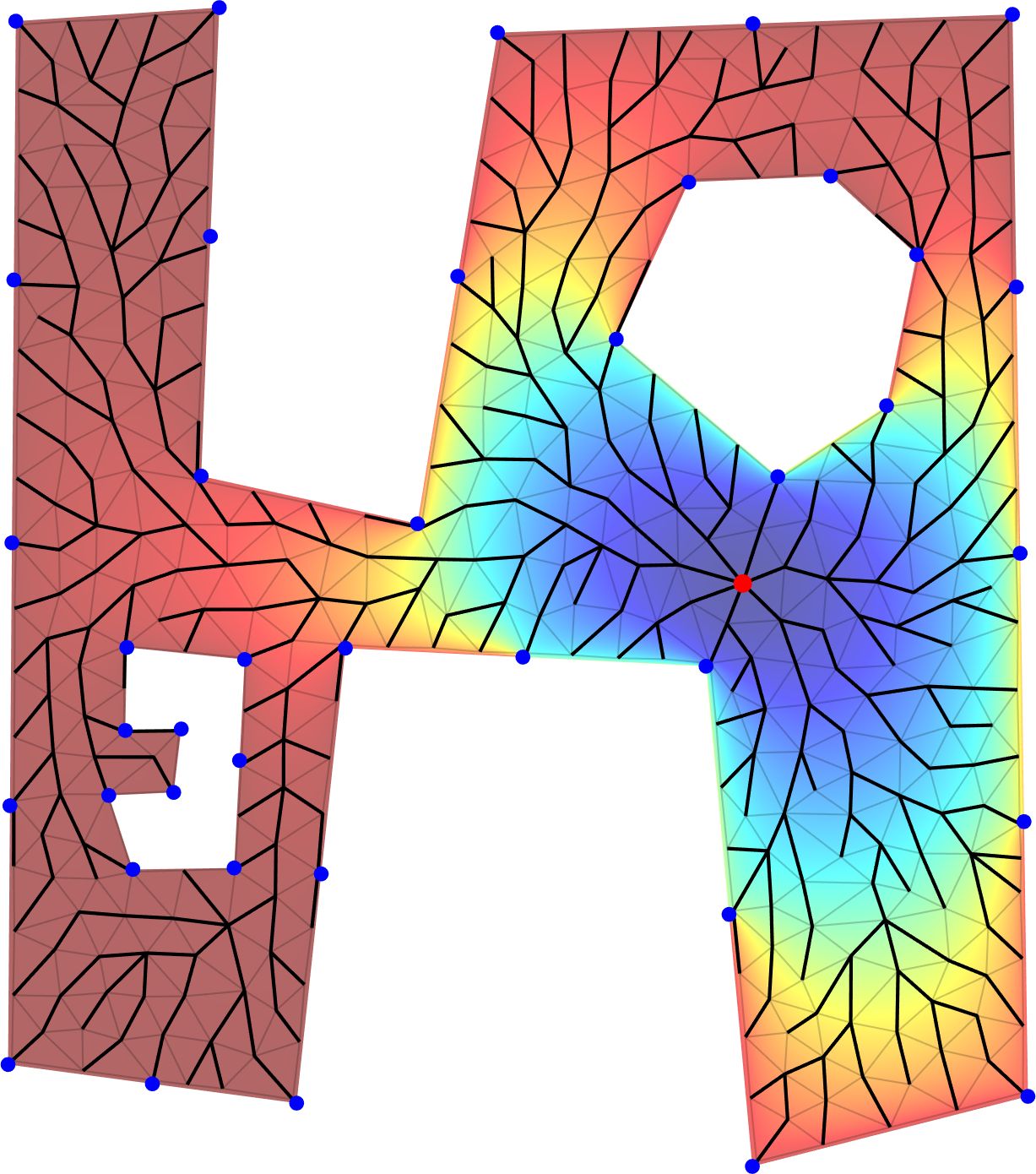}}\hfill
  \parbox{.3\textwidth}{\centering\includegraphics[height=2in]{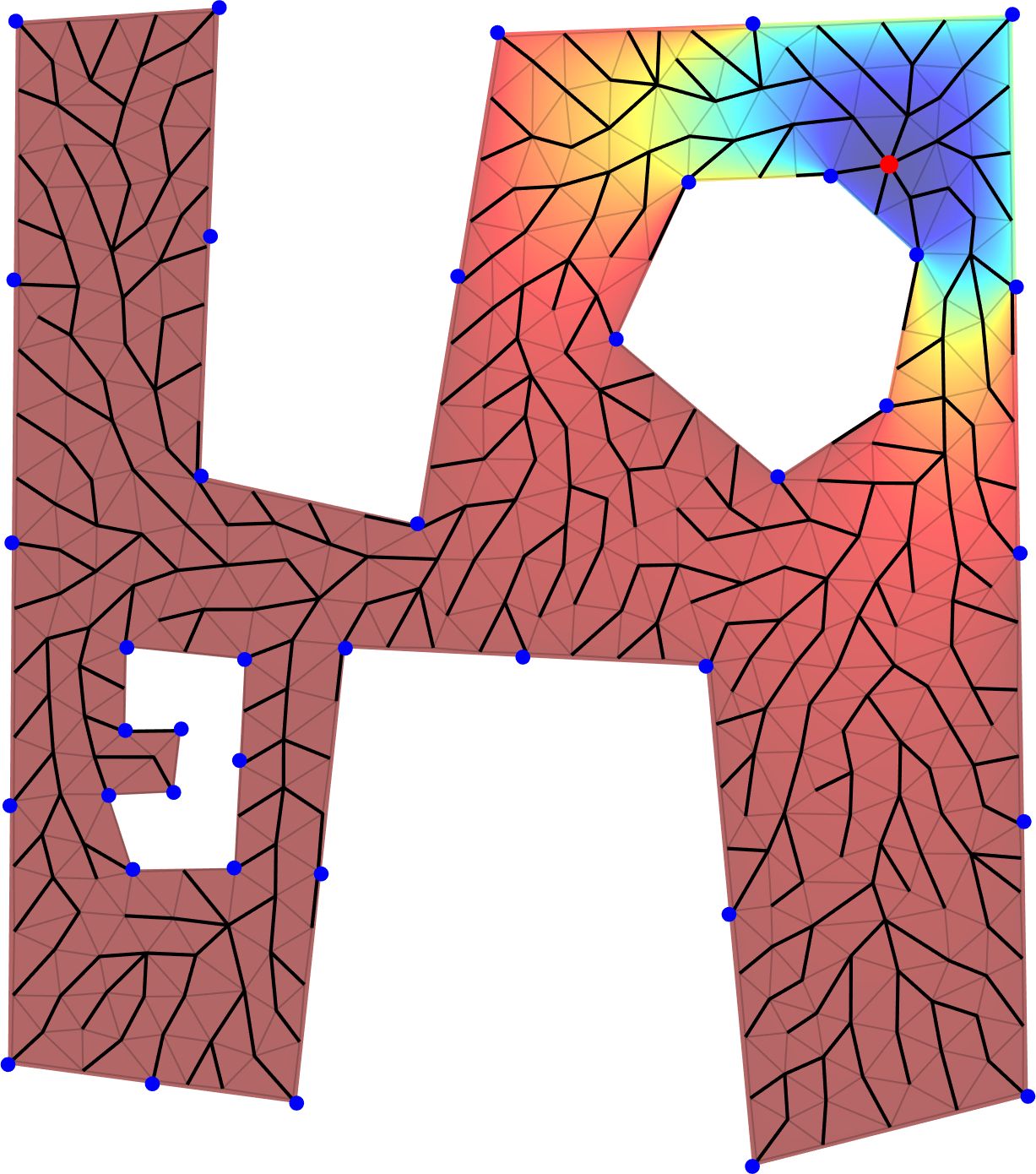}}\hfill
  \parbox{.3\textwidth}{\centering\includegraphics[height=2in]{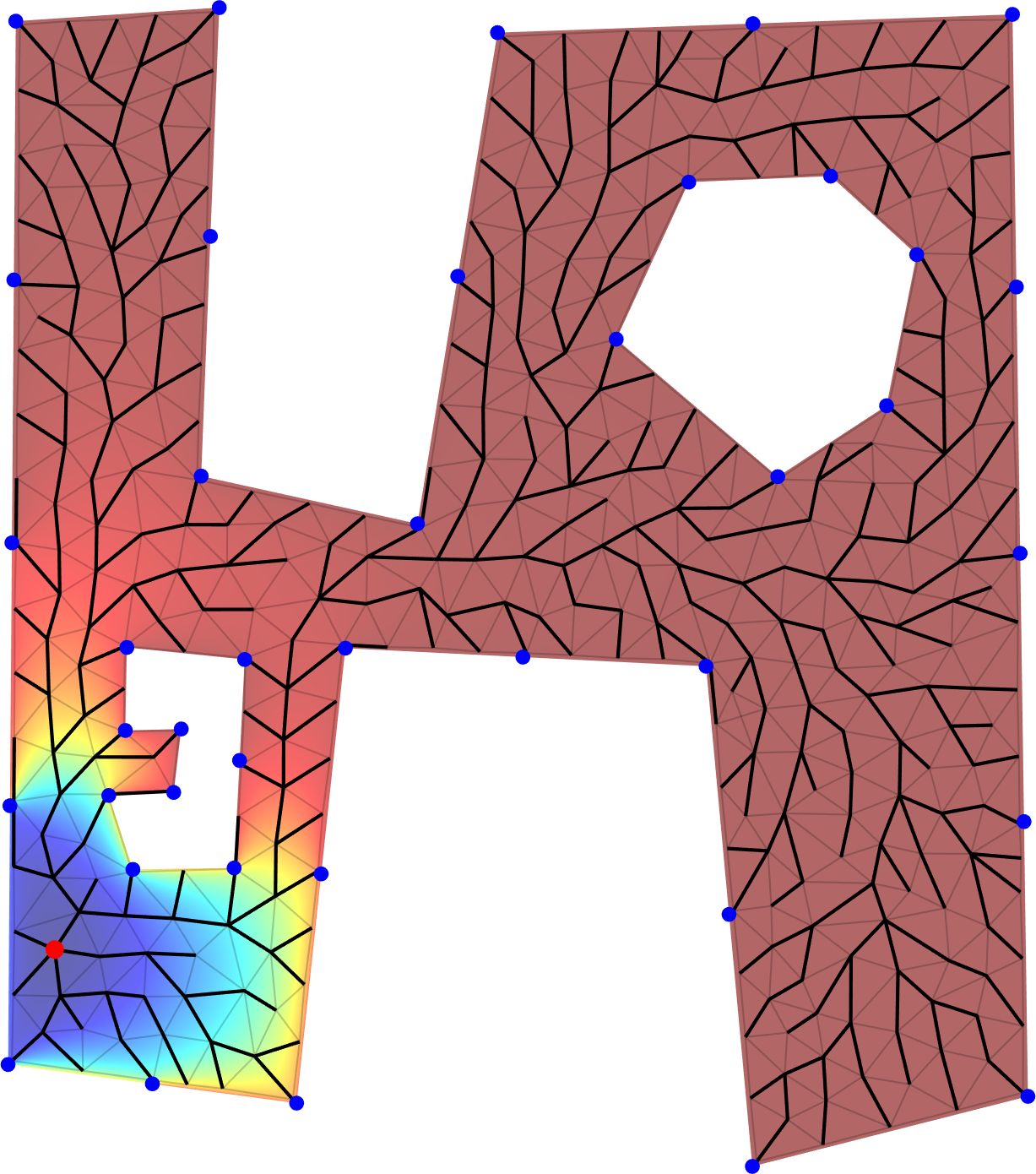}}\hfill
  \parbox{.05\textwidth}{\centering\includegraphics[height=2in]{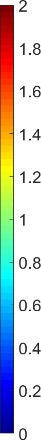}}\par
  \caption{Path tree (thick black edges) to select (red) target sites for the multiply-connected domain and graph of Figure~\ref{fig:greedyRoutingGraph-KL-hole} for $n=40$ reduced coordinates and $m=400$ sites. The domain is color-coded according to the $f$-divergence distance of reduced coordinates from the target site. \emph{Top:} Using the KL distance function. \emph{Bottom:} Using the H distance function. Note the subtle differences between the two rows.}
  \label{fig:path-KL-H}
\end{figure}


\section{Conclusion and Discussion}

We have described a practical method for path planning on planar domains based on gradient-descent using a distance function. The key is to use an $f$-divergence function on discrete coordinate vectors. Our discrete coordinate vectors are just the harmonic measures of a partition of the boundary into $n$ segments, which are the inner products of the Poisson kernel with (``box'') indicator basis functions for each segment. We speculate that a Divergence Gradient Theorem holds for other sets of coordinates derived by an inner product of more sophisticated basis functions, such as the piecewise linear ``tent'' function over two adjacent boundary segments. This is the method used to construct harmonic \emph{barycentric coordinates} on a polygonal domain~\cite{Joshi:2007:HCF}. Also possible are Gaussian basis functions over the boundary, as long as they are not too narrow or too wide. These variants of reduced coordinates may be computed similarly to the basic reduced coordinates used in this paper merely by changing the right hand sides $b_j$ of the linear equations in~\eqref{eq:rhs} to something more sophisticated than the binary vector in~\eqref{eq:binary-indicator}. Figure~\ref{fig:path-box-tent-Gaussian} shows the gradient-descent trees generated by these coordinates using the KL distance function.

\begin{figure}[t!]
  \parbox{.33\textwidth}{\centering\includegraphics[height=2in]{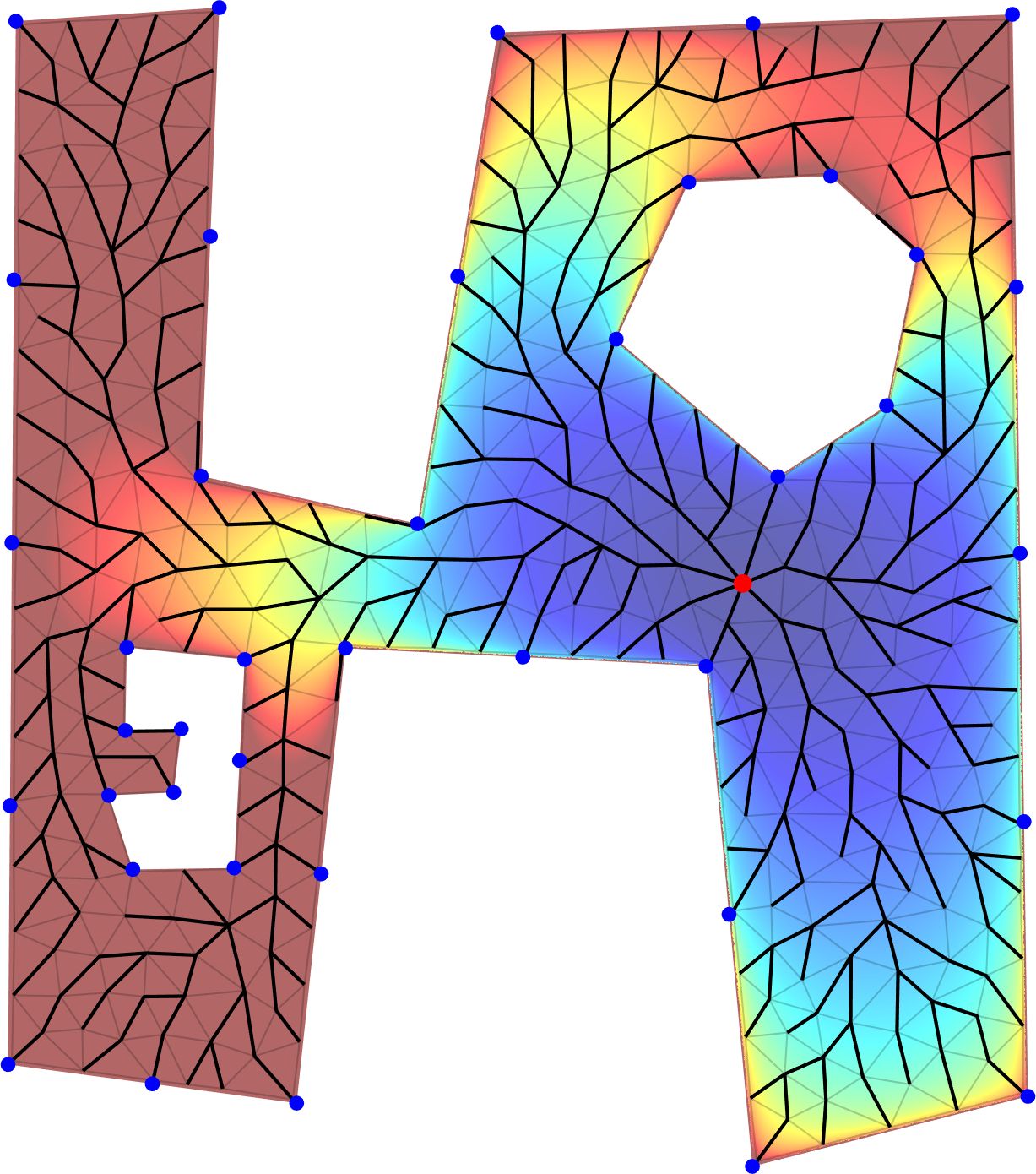}}\hfill
  \parbox{.33\textwidth}{\centering\includegraphics[height=2in]{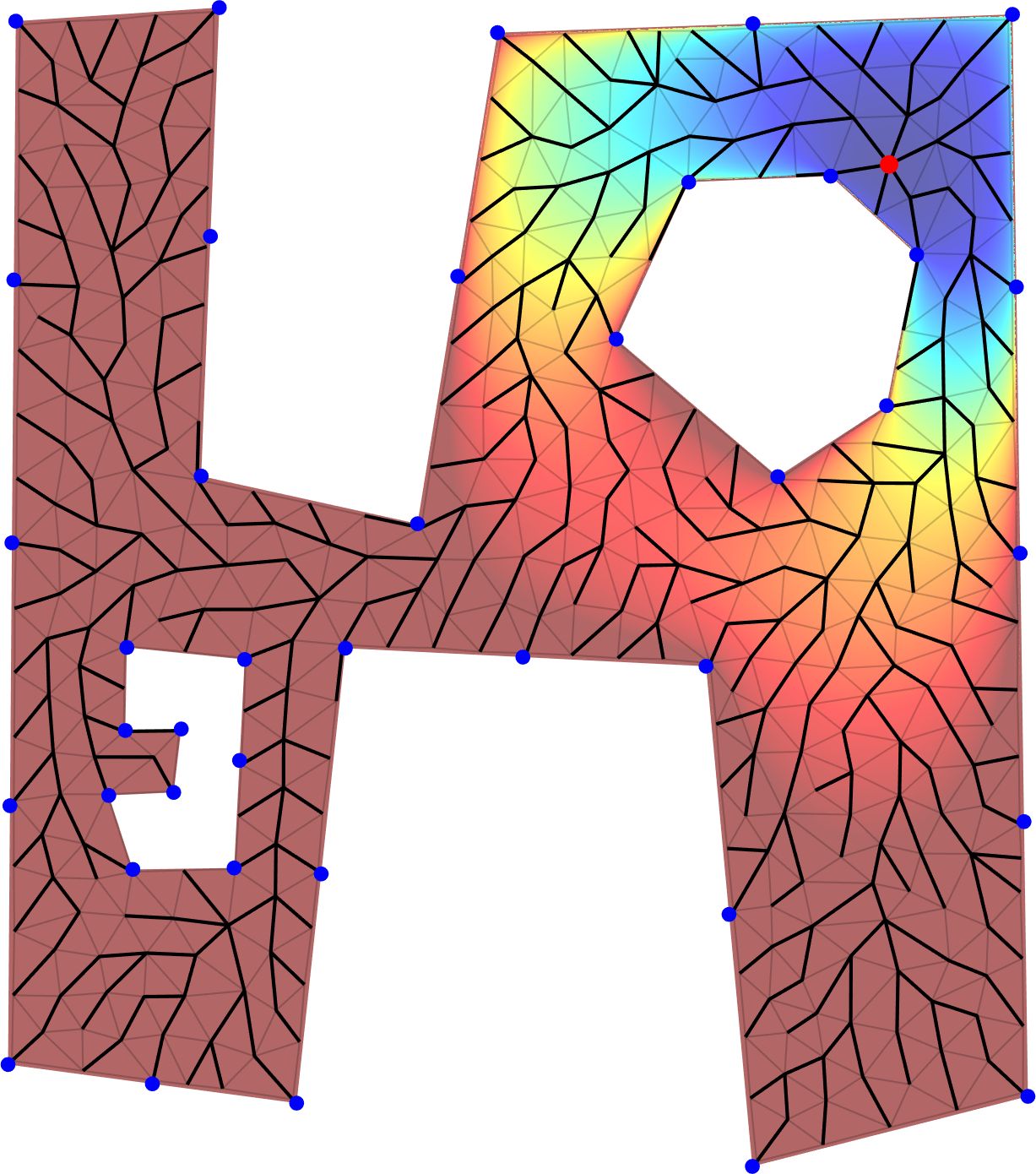}}\hfill
  \parbox{.33\textwidth}{\centering\includegraphics[height=2in]{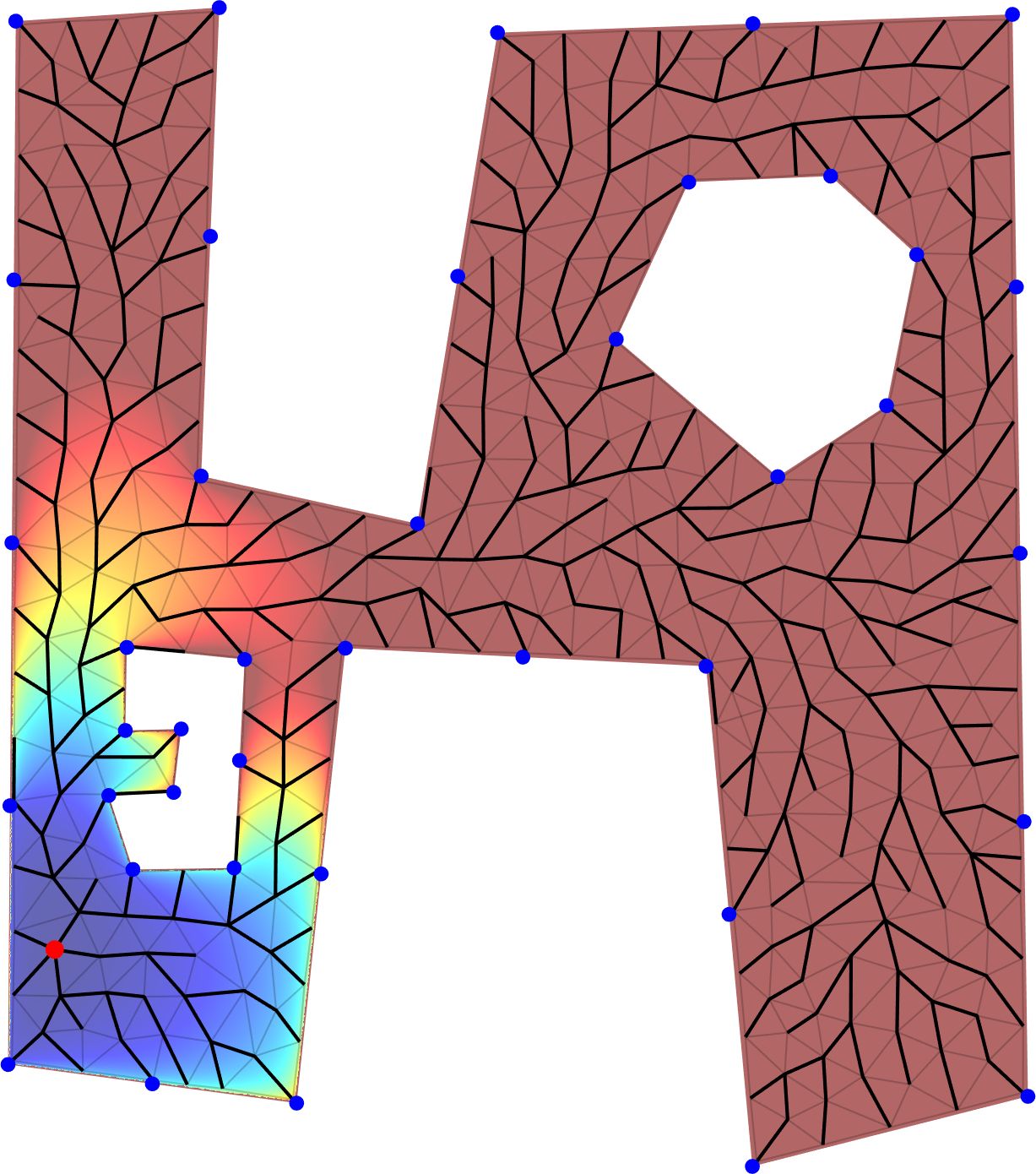}}\\[1ex]
  \parbox{.33\textwidth}{\centering\includegraphics[height=2in]{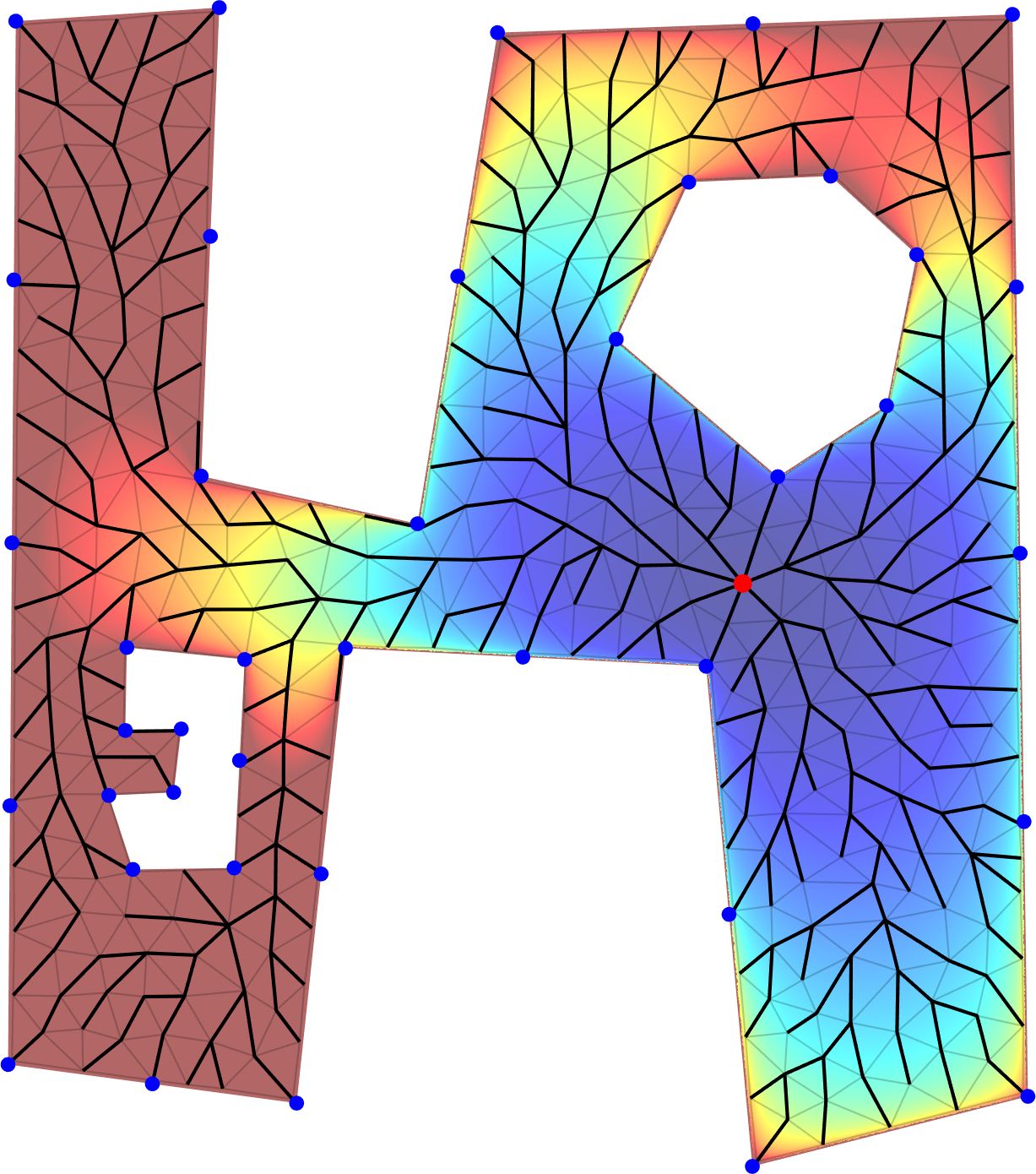}}\hfill
  \parbox{.33\textwidth}{\centering\includegraphics[height=2in]{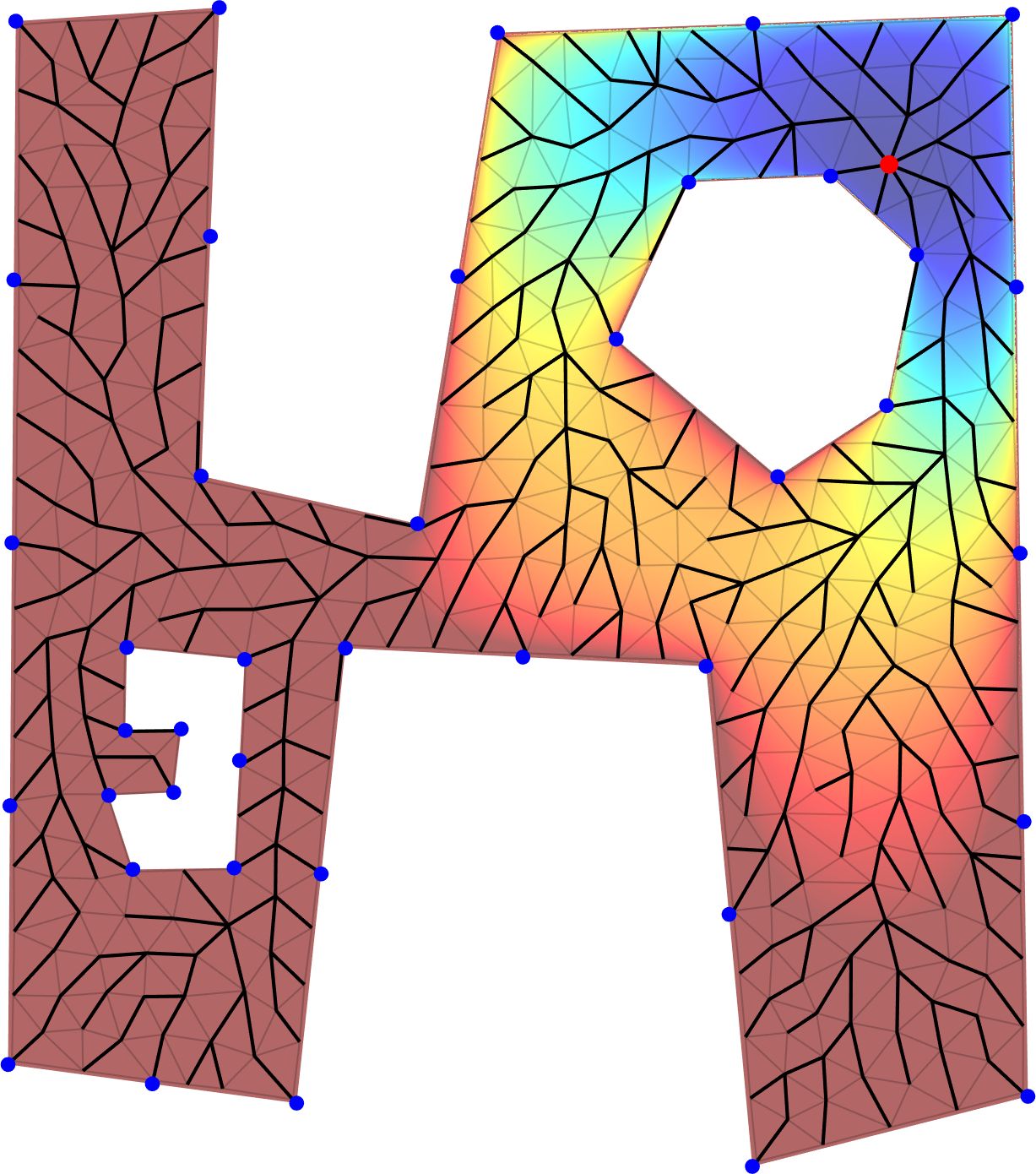}}\hfill
  \parbox{.33\textwidth}{\centering\includegraphics[height=2in]{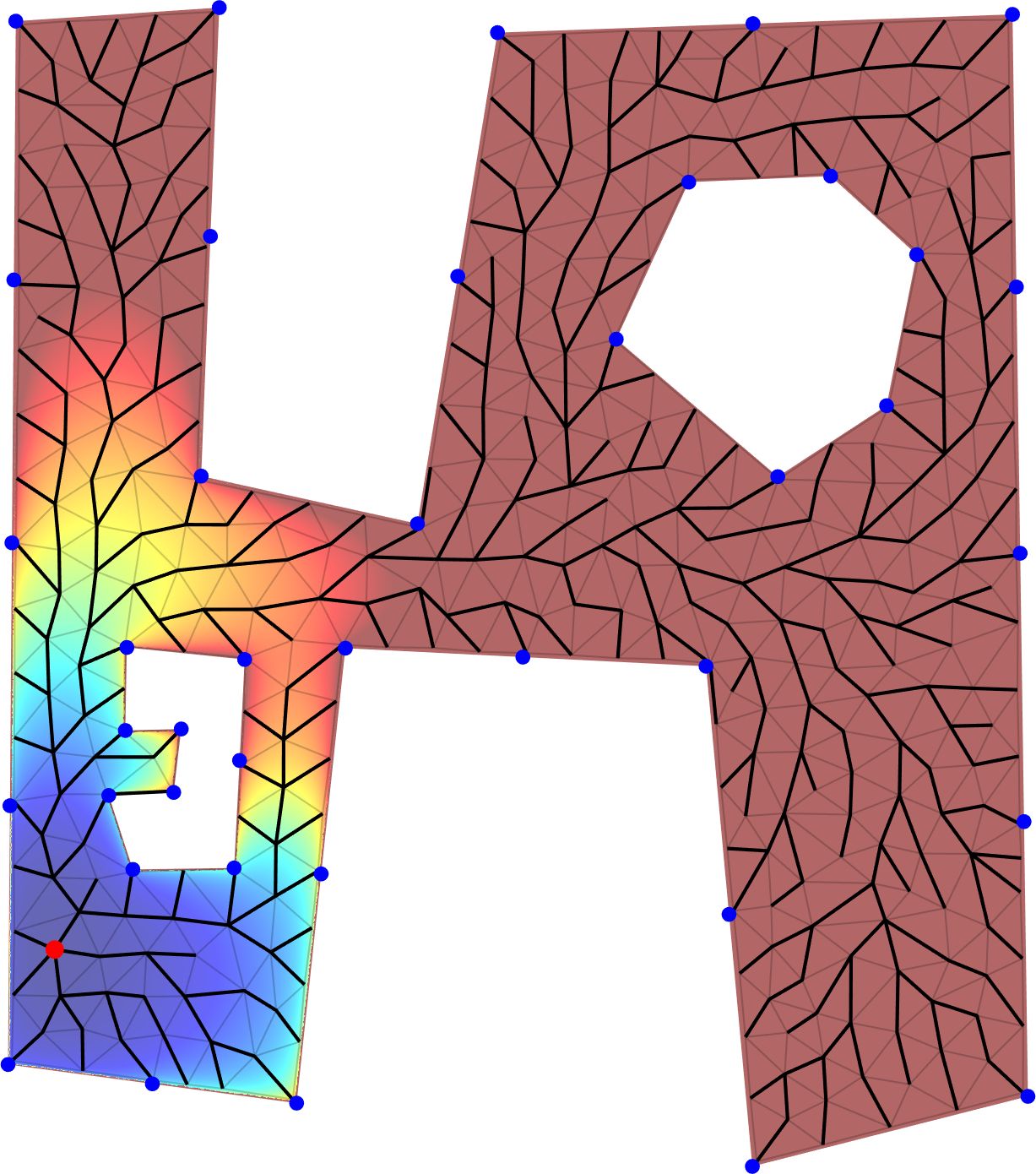}}\\[1ex]
  \parbox{.33\textwidth}{\centering\includegraphics[height=2in]{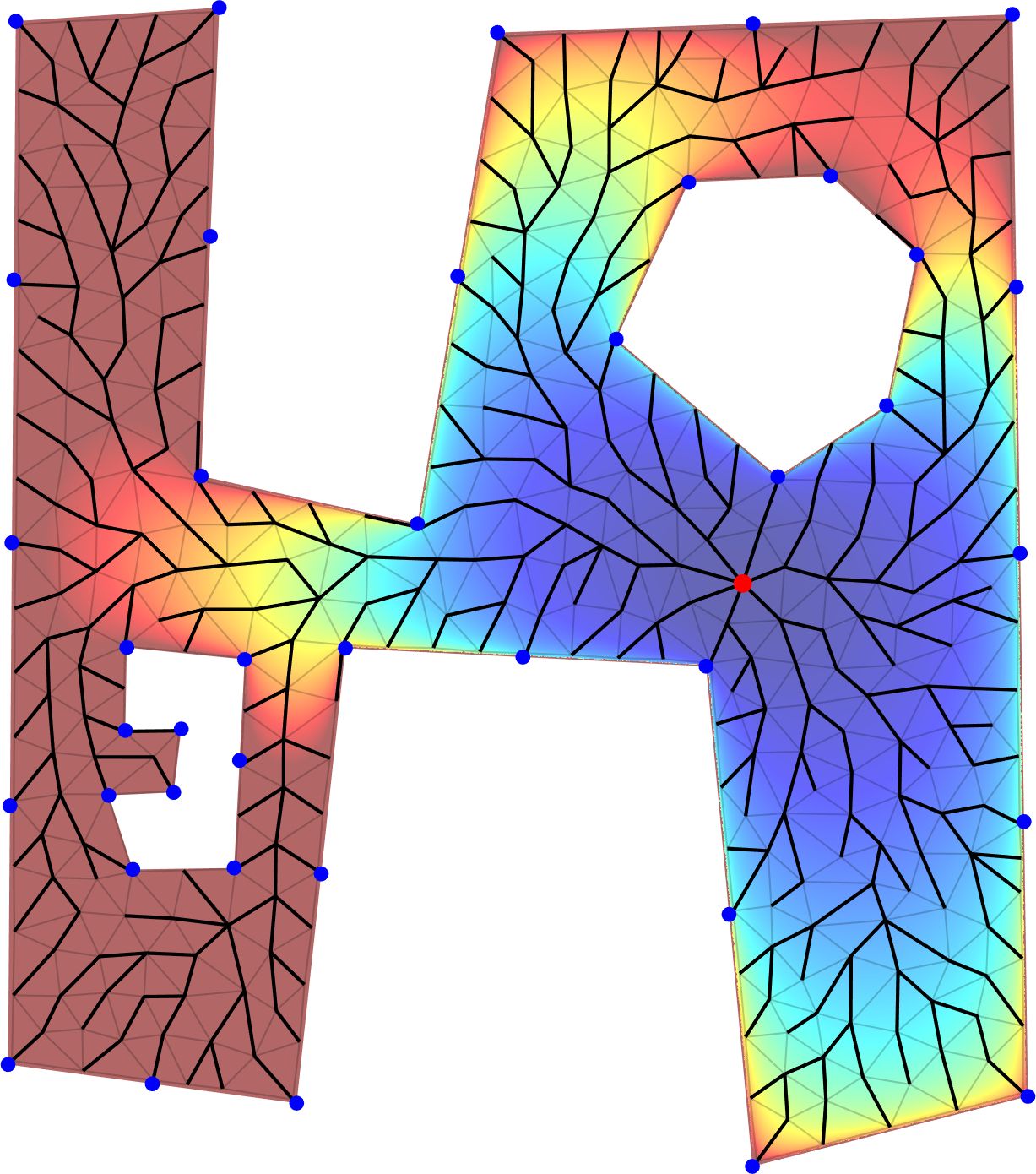}}\hfill
  \parbox{.33\textwidth}{\centering\includegraphics[height=2in]{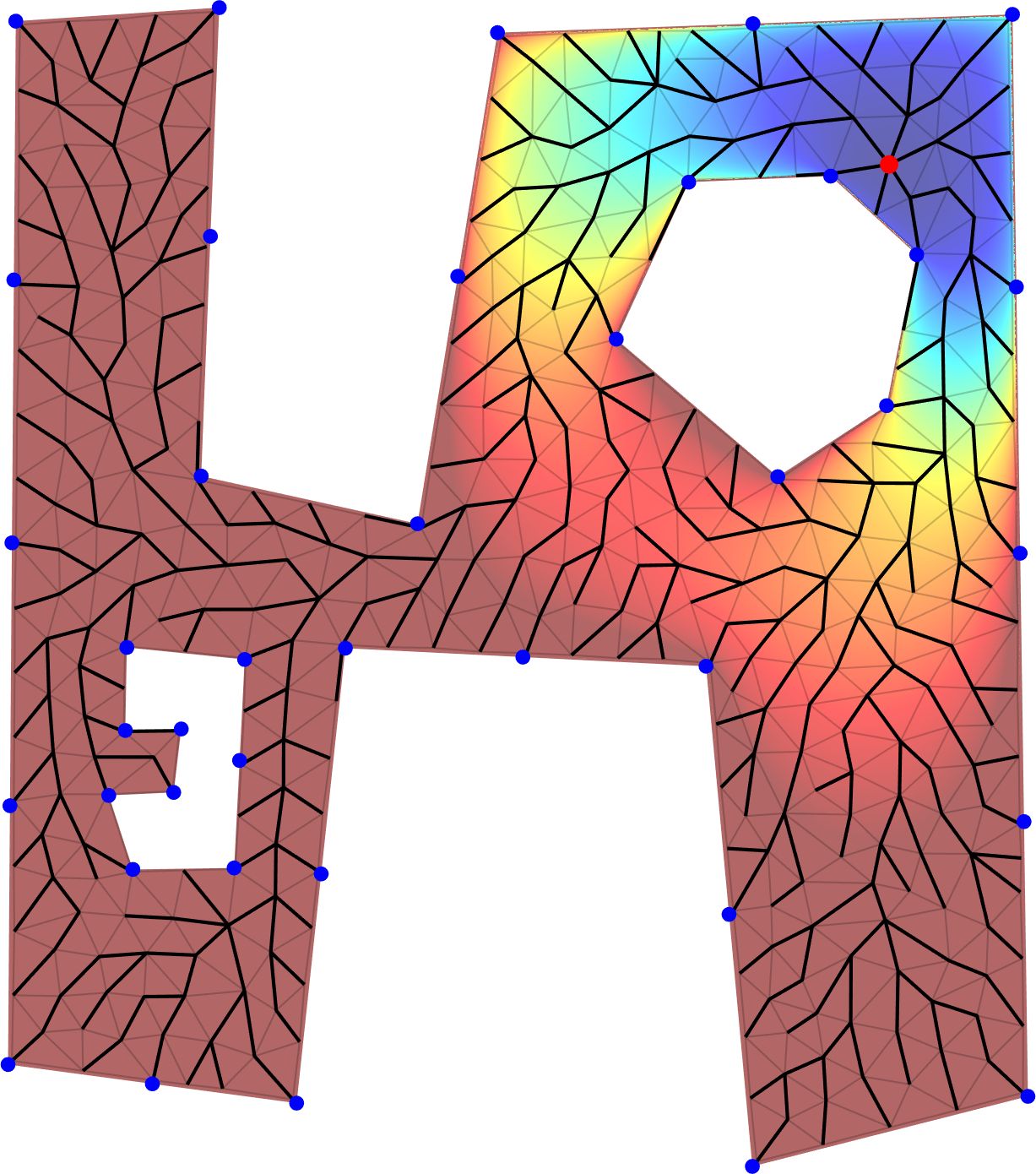}}\hfill
  \parbox{.33\textwidth}{\centering\includegraphics[height=2in]{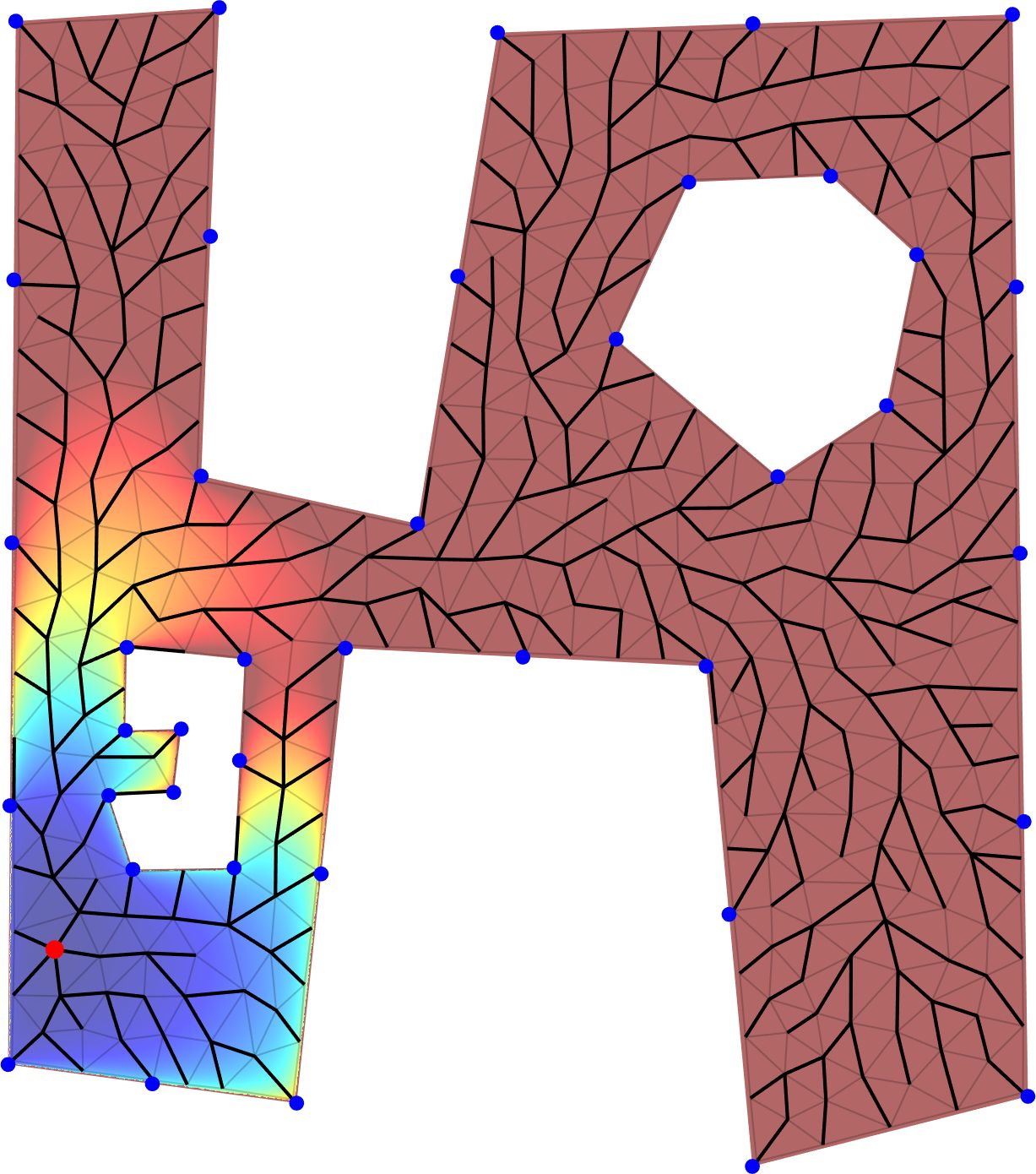}}\par
  \caption{Path tree (thick black edges) to select (red) target sites for the multiply-connected domain and graph of Figure~\ref{fig:greedyRoutingGraph-KL-hole} for $n=20$ reduced coordinates and $m=400$ sites using the KL distance function. The domain is color-coded according to the distance from the target site. \emph{Top:} Using a box basis function. \emph{Middle:} Using a tent basis function. \emph{Bottom:} Using a Gaussian basis function.}
  \label{fig:path-box-tent-Gaussian}
\end{figure}

Our Divergence Gradient Theorem is proven only for the case of a simply-connected domain. This is done by conformal reduction to the canonical case of a unit disk domain with the target at the origin. As with Chen et al.~\cite{Chen:2016:PPW}, we do not have a proof of the lack of local minima for a multiply-connected domain (where the holes in the domain would correspond to obstacles in a real-world scenario), but speculate that it indeed is the case. This is supported by all our experimental results. Note that in this case the distance function may contain critical points which are saddles, so the continuous gradient may vanish there, but these are not local minima, so not fatal.

Our experimental results also indicate that as the number of reduced coordinates increases, the augmentation of the Delaunay triangulation of the sites decreases. We wonder if there exists a condition, possibly on the number of coordinates ($n$) and the number of sites ($m$) which guarantees that the Delaunay triangulation is greedy in its own right. Alternatively, under which conditions does there exist a planar graph (possibly a non-Delaunay triangulation) on the sites which is greedy? This may well be the dual to the Voronoi diagram (for $d_f$).

\bibliographystyle{siamplain}
\bibliography{journals,references}

\begin{thebibliography}{10}

\bibitem{Axler:2001:HFT}
{\sc S.~Axler, P.~Bourdon, and W.~Ramey}, {\em Harmonic Function Theory},
  vol.~137 of Graduate Texts in Mathematics, Springer, New York, 2nd~ed., 2001.

\bibitem{BenChen:2011:DCO}
{\sc M.~Ben-Chen, S.~J. Gortler, C.~Gotsman, and C.~Wormser}, {\em Distributed
  computation of virtual coordinates for greedy routing in sensor networks},
  Discrete Applied Mathematics, 159 (2011), pp.~544--560.

\bibitem{Bose:2004:ORI}
{\sc P.~Bose and P.~Morin}, {\em Online routing in triangulations}, SIAM
  Journal on Computing, 33 (2004), pp.~937--951.

\bibitem{Chen:2016:PPW}
{\sc R.~Chen, C.~Gotsman, and K.~Hormann}, {\em {Path Planning with
  Divergence-Based Distance Functions}}, ArXiv e-prints,  (2017),
  \url{https://arxiv.org/abs/1708.02845}.

\bibitem{Conolly:1993:TAO}
{\sc C.~I. Connolly and R.~A. Grupen}, {\em The applications of harmonic
  functions to robotics}, Journal of Field Robotics, 10 (1993), pp.~931--946.

\bibitem{Csiszar:1967:ITM}
{\sc I.~Csisz{\'a}r}, {\em Information-type measures of difference of
  probability distributions and indirect observations}, Studia Scientiarum
  Mathematicarum Hungarica, 2 (1967), pp.~299--318.

\bibitem{deBerg:2008:CGA}
{\sc M.~de~Berg, O.~Cheong, M.~van Kreveld, and M.~Overmars}, {\em
  Computational Geometry: Algorithms and Applications}, Springer, Berlin,
  3rd~ed., 2008.

\bibitem{Garnett:2005:HM}
{\sc J.~B. Garnett and D.~E. Marshall}, {\em Harmonic Measure}, vol.~2 of New
  Mathematical Monographs, Cambridge University Press, New York, 2005.

\bibitem{Joshi:2007:HCF}
{\sc P.~Joshi, M.~Meyer, T.~DeRose, B.~Green, and T.~Sanocki}, {\em Harmonic
  coordinates for character articulation}, ACM Transactions on Graphics, 26
  (2007), pp.~Article 71, 9 pages.

\bibitem{Khatib:1986:RTO}
{\sc O.~Khatib}, {\em Real-time obstacle avoidance for manipulators and mobile
  robots}, International Journal of Robotics Research, 5 (1986), pp.~90--98.

\bibitem{Kim:1992:RTO}
{\sc J.-O. Kim and P.~K. Khosla}, {\em Real-time obstacle avoidance using
  harmonic potential functions}, IEEE Transactions on Robotics and Automation,
  8 (1992), pp.~338--349.

\bibitem{Koren:1991:PFM}
{\sc Y.~Koren and J.~Borenstein}, {\em Potential field methods and their
  inherent limitations for mobile robot navigation}, in Proceedings of the 1991
  IEEE International Conference on Robotics and Automation, Sacramento, Apr.
  1991, pp.~1398--1404.

\bibitem{Kullback:1951:OIA}
{\sc S.~Kullback and R.~A. Leibler}, {\em On information and sufficiency}, The
  Annals of Mathematical Statistics, 22 (1951), pp.~79--86.

\bibitem{Liese:2006:ODA}
{\sc F.~Liese and I.~Vajda}, {\em On divergences and informations in statistics
  and information theory}, IEEE Transactions on Information Theory, 52 (2006),
  pp.~4394--4412.

\bibitem{Pinkall:1993:CDM}
{\sc U.~Pinkall and K.~Polthier}, {\em Computing discrete minimal surfaces and
  their conjugates}, Experimental Mathematics, 2 (1993), pp.~15--36.

\bibitem{Press:2007:NRI}
{\sc W.~H. Press, S.~A. Teukolsky, W.~T. Vetterling, and B.~P. Flannery}, {\em
  Numerical Recipes in C: The Art of Scientific Computing}, Cambridge
  University Press, New York, 3rd~ed., 2007.

\bibitem{Rimon:1992:ERN}
{\sc E.~Rimon and D.~E. Koditschek}, {\em Exact robot navigation using
  artificial potential functions}, IEEE Transactions on Robotics and
  Automation, 8 (1992), pp.~501--518.

\bibitem{Shewchuk:1996:TEA}
{\sc J.~R. Shewchuk}, {\em Triangle: Engineering a {2D} quality mesh generator
  and {D}elaunay triangulator}, in Applied Computational Geometry. Towards
  Geometric Engineering, M.~C. Lin and D.~Manocha, eds., vol.~1148 of Lecture
  Notes in Computer Science, Springer, Berlin, 1996, pp.~203--222.

\end{thebibliography}

\end{document}